\documentclass{article} 
\usepackage{iclr2027_conference,times}

\usepackage{hyperref}
\usepackage{url}

\usepackage{amsmath}
\usepackage{amssymb}
\usepackage{amsthm}
\usepackage{thmtools}
\usepackage{booktabs} 

\usepackage{pgfplots}
\usepgfplotslibrary{groupplots}
\pgfplotsset{compat=1.18}
\usepgfplotslibrary{colormaps}

\newtheoremstyle{myplain}
  {6pt}{0pt}
  {\itshape}{}{\bfseries}{.}{.5em}{}
\theoremstyle{myplain}
\newtheorem{lemma}{Lemma}
\newtheorem{theorem}[lemma]{Theorem}
\newtheorem{corollary}[lemma]{Corollary}
\newtheorem{claim}[lemma]{Claim}
\newtheorem{conjecture}[lemma]{Conjecture}

\newtheoremstyle{mydefinition}
  {6pt}{0pt}
  {\normalfont}{}{\bfseries}{.}{.5em}{}
\theoremstyle{mydefinition}

\newtheoremstyle{myremark}
  {6pt}{0pt}
  {\normalfont}{}{\itshape}{.}{.5em}{}
\theoremstyle{myremark}

\renewcommand{\exp}{\mathrm{exp}}

\newcommand{\gnn}{\ensuremath{\mathcal{N}}}

\newcommand{\msl}{\ensuremath{\{\!\{}}
\newcommand{\msr}{\ensuremath{\}\!\}}}

\makeatletter
\renewenvironment{proof}[1][\proofname]{%
  \par
  \pushQED{\qed}%
  \normalfont
  \topsep0pt
  \partopsep0pt
  \trivlist
  \item[\hskip\labelsep\itshape #1\@addpunct{.}]\ignorespaces
}{%
  \popQED
  \endtrivlist
  \@endpefalse
}
\makeatother

\title{Constant-Depth
Rational-Weight GNNs
and their Connection to 
Schanuel's Conjecture}

\author{Daumantas Kojelis \\
Institute of Computer Science \\
Leipzig University \\
Leipzig, Germany \\
\texttt{daumantas.kojelis@uni-leipzig.de}
\And
Carsten Lutz \\
Institute of Computer Science \\
Leipzig University \\
and ScaDS.AI Dresden/Leipzig \\
Leipzig, Germany \\
\texttt{carsten.lutz@uni-leipzig.de}
}

\iclrfinalcopy 
\begin{document}

\maketitle
\fancyhead[L]{}

\begin{abstract}
    A landmark result states that the expressive power of graph neural networks (GNNs) is bounded from above by the 1-dimensional Weisfeiler-Leman (1-WL) test. 
    We ask whether a constant-depth GNN can match the
expressive power of 1-WL while using only  rational
weights. We prove that this is the case when Schanuel's conjecture, a famous open problem from transcendental number theory, is taken to be true. Our GNN
    construction works even with integer weights, and more generally with a broad range of rational weights.
    As the activation function, it may use logistic sigmoid, hyperbolic tangent, or related functions. It is uniform across all graph sizes. 
    We also prove a partial 
    converse: if a GNN with exponentiation, hyperbolic sine or hyperbolic cosine as activation 
    matches the expressive power of the 1-WL test in a slightly more expressive setting where
    graphs may contain certain auxiliary vertices, then this implies Schanuel's conjecture restricted to the numbers that arise in computations of the GNN.
    We also conduct experiments showing that the constructed GNNs  indeed match the 1-WL test.
\end{abstract}

\section{Introduction}

A fundamental result in the theory of graph neural networks (GNNs) is
that their expressive power is bounded from above by the 1-dimensional
Weisfeiler--Leman (1-WL) test, in a version
also known as color refinement; see \cite{xu2018howPowerful,WLGoesNeural}. In particular, if two vertices, or two
graphs, cannot be distinguished by 1-WL, then they cannot be
distinguished by a standard message-passing GNN either. 
Apart from motivating a lot of research on how to break the 1-WL barrier,
this raises
the converse question: to what extent can GNNs actually attain the
1-WL upper bound? This question has been answered in several different
ways.

It was shown in \cite{xu2018howPowerful} that GNNs of constant depth $L$ match
the expressive power of $L$-round 1-WL when the aggregation and
combination functions, as well as the graph-level readout, are
injective. They instantiate this in an architecture called graph
isomorphism networks (GINs) where combination function are represented
as feedforward neural networks (FNNs) with ReLU activation. 
A more economic construction was proposed in \cite{WLTestReLULogarithmicResult}.
An important
caveat to these results is that they are \emph{non-uniform}, that is, the
construction of the GNN depends on the maximum size of the input
graphs.
A uniform result was obtained by \cite{bravoWLAnalyticalResult}, refining a result by \cite{OtherWLAnalyticalResult}. Their construction works with
any non-polynomial analytic activation function, such as logistic
sigmoid, and yields GNNs of internal dimension~1, that is, embeddings associate
vertices with real numbers rather than with vectors of reals. While,
remarkably, Bravo et al.\ show that in an $L$-layer GNN almost every
choice of real-valued weights suffices to match $L$-round 1-WL, they
do not establish that rational, or more generally finitely
representable, weights suffice.
It is interesting to contrast this with the result of \cite{KhalifePolynomialActivationResult} which states that it is impossible for a constant-depth GNN with a piecewise polynomial activation function such as ReLU
to match the expressive power of even 2-round 1-WL.

A complementary result was recently obtained by
\cite{GroheRepetition}. They show that recurrent GNNs with rational,
finite-precision weights and ReLU activation can attain the 1-WL
limit. That is, there is a recurrent GNN that distinguishes two vertices, or
graphs, if and only if some number of 1-WL rounds is able to
distinguish them. Note that recurrence significantly changes
the game: allowing the
number of GNN iterations to depend on the
input makes it possible to simulate message-passing algorithms, potentially
using a non-constant number of iterations for each round of the 1-WL algorithm.
When working with fixed-depth GNNs, in contrast, a single round of 1-WL must be simulated by only a constant number of GNN layers.

These results leave open a natural question, addressed in this
article: \emph{Can a constant-depth GNN uniformly match the
expressive power of 
%
%
1-WL while using only  rational
weights?}  We answer this question conditionally on Schanuel's
conjecture, a famous unsolved problem in transcendental number
theory. Assuming the conjecture, we show that for a broad class of
activation functions, including logistic sigmoid, hyperbolic tangent and
the exponential function $e^x$, $L$-layer GNNs with
rational weights can uniformly match the $L$-round 1-WL test, both at the vertex and at the
graph level. Like all results discussed above, we use sum aggregation.
Our construction yields  GNNs of internal dimension~1 and works even
with very simple weights: we may use self-weight~$1$,
aggregation-weight~$2$, and bias $0$ in every message-passing layer. This  also resolves, under Schanuel's conjecture, an open question from \cite{bravoWLAnalyticalResult} concerning the use of common weights in every layer.
More generally, our construction applies to a broad range of rational weight choices, just as the result of Bravo et al.\ applies to almost every choice of real-valued weights.
The appearance of Schanuel's conjecture can be understood from the
arithmetic of sum aggregation. If the scalar embeddings at a given
layer are linearly independent over $\mathbb{Q}$, then their sum
uniquely encodes any finite multiset. To iterate this argument, the activation function must produce embeddings that are again linearly independent over \(\mathbb{Q}\).
Consider the basic case of exponential activation $e^x$. If
$a_1,\ldots,a_n$ are embeddings and $m_1,\ldots,m_n$ are their
multiplicities in a multiset, then
\(
    e^{m_1a_1+\cdots+m_na_n}
    =
    (e^{a_1})^{m_1}\cdots(e^{a_n})^{m_n}.
\)
Thus, different multisets naturally give rise to different monomials
in the exponentials of the previous embeddings. Schanuel's conjecture
provides the transcendence properties needed to ensure that these new
values have the required independence.

Our connection to transcendental number theory also has a partial converse. In the slightly more expressive setting of graphs with
a certain kind of auxiliary vertices akin to virtual nodes \cite{gilmer2017neural,hu2020open}, we prove that the existence of an $L$-layer GNN with
exponentiation, sinh or cosh activation (and scalar embeddings on the final level) that attains the expressive power of $L$-round 1-WL implies the
restriction of Schanuel's conjecture to the numbers that arise in its
computation. This indicates that the connection between Schanuel's
conjecture and GNN expressivity goes in both directions rather than
being a mere artifact of our proof technique.

We complement our theoretical results with experiments. We evaluate GNNs of the form used in our construction, including simple configurations with integer weights, and observe that they indeed match the distinctions made by 1-WL. 

\paragraph{Related Work.}
There has been a lot of work on extending the expressive power of GNNs beyond 1-WL, using e.g.\ homomorphism counts of pattern graphs
\cite{BarceloPrecomputedHom, originalDHNpaper}, random node initializations \cite{randomNodeInit},
positional distance encodings \cite{GNNsWithDistanceEnc}
and mechanisms inspired by
higher-dimensional Weisfeiler-Leman tests \cite{provablyPowerfulGNNs, WLGoesNeural}.
Regarding \cite{bravoWLAnalyticalResult} and \cite{GroheRepetition}, we only add some minor clarifications over what was already said above. The results in 
\cite{bravoWLAnalyticalResult} 
concern graph distinguishability, but analogous results for the vertex 
level  can easily be derived from their proofs. The same is true for \cite{xu2018howPowerful}.
In contrast, \cite{GroheRepetition} have explicit results for the vertex level and for the graph level, on connected graphs. These are subject to the assumption that the
 graph size is provided as part of the initial embedding at every
vertex. Additional results on the graph level neither need this assumption nor connectedness, but require GNNs that have additional global aggregation in every iteration.
Our results also contrast those of \cite{khalifeIsUniformExpressiveness} which imply that 
GNNs with ReLU activation can express first-order properties that cannot be expressed by any GNN
activated by logistic sigmoid or hyperbolic tangent. Together with our results, assuming Schanuel's conjecture, this suggests that while logistic sigmoid can be more powerful than ReLU for exact distinguishability, ReLU can be more powerful than sigmoid for classification with a fixed margin.
Indeed, the results in \cite{khalifeIsUniformExpressiveness} require a uniform margin that separates  the embeddings on positive and negative instances, whereas our notion of distinguishability only requires these outputs to be distinct.
Schanuel's conjecture has also been used in previous work on machine
learning. Assuming the conjecture, training
\cite{trainingSigmoidNNs} and verification
\cite{DNNsVerificationSigmoid} have been shown to be decidable for
FNNs with sigmoid activation. Note that these results
concern computability rather than expressive power. We are not aware
of previous results connecting Schanuel's conjecture to the expressive
power of neural network architectures.
%
\section{Preliminaries}


\paragraph{Abstract Algebra.}

We use $\mathbb{N}$, $\mathbb{Z}$, $\mathbb{Q}$, $\bar{\mathbb{Q}}$, $\mathbb{R}$, $\mathbb{C}$
to denote natural, integer, rational, algebraic, real and complex numbers respectively. By $\mathbb{Q}_{\geq0}$ we reference non-negative rationals.
We recall that an algebraic number is a real number that is a root of a non-zero polynomial in one variable with rational coefficients.
For a vector $\bar v$, we use $\bar v[i]$ to denote its $i$-th component.
We also recall that a field consists of an underlying set $F$ and two binary operations $+, \cdot : F \times F \to F$, subject to additional conditions. As  customary, 
we denote a field by its underlying set, e.g.\ $\mathbb{Q}$ is used for the field of rational numbers
with the usual addition and multiplication operations.

For two fields $E$ and $F$, we say that $E$ is a \emph{field extension} of $F$, denoted $E/F$, if $F \subseteq E$
and the additive and multiplicative operations of $E$, when restricted to $F \times F$, are exactly
the additive and multiplicative operations of $F$. For instance, $\mathbb{R}$ is a field extension of $\mathbb{Q}$.
For a field extension $E/F$ and $S \subseteq E$,
we denote by $F(S)$ the 
field \emph{generated} by $F$ and $S$, that is, the 
smallest subfield $G$ of $E$
such that $S \subseteq G$ and $G/F$.  For instance, 
\(
    \mathbb{Q}(\sqrt{2})
    =
    \{a+b\sqrt{2} \mid a,b\in\mathbb{Q}\}
\)
is the smallest subfield of \(\mathbb{R}\) containing both
\(\mathbb{Q}\) and \(\sqrt{2}\).
Since $F(S)$ is a field,
$\frac{P(s)}{Q(s)} \in F(S)$ for all polynomials 
 $P(x)$ and $Q(x)$ over $F$ and all $s \in S$ with $Q(s) \neq 0$. 
This relates to a family of
generated fields used later in this article, namely, for any positive integer $\kappa$,
\[
    \mathbb{Q}(e^{1/\kappa})
    =
    \left\{
        \frac{P(e^{1/\kappa})}{Q(e^{1/\kappa})}
        \;\middle|\;
        P(x),Q(x) \text{ polynomials over } \mathbb{Q},\ Q(e^{1/\kappa})\neq 0
    \right\}.
\]
A field extension $E/F$ is \emph{finitely generated} if there is a finite $S \subseteq E$ such that $E = F(S)$. Clearly,  $\mathbb{Q}(\sqrt{2})$ and  $\mathbb{Q}(e^{1/\kappa})$ are finitely
generated.

Let $E/F$ be a field extension and $S = \{ s_1, \dots, s_n \} \subseteq E$.
Then $S$ is \emph{linearly independent} over $F$ if 
$a_1 s_1 + \dots + a_n s_n = 0$ in $E$ only if  $a_1 = \cdots = a_n = 0$, for all $a_1, \dots, a_n \in F$.  For instance, $\{1,\sqrt{2}\}$ is linearly independent over $\mathbb{Q}$. Similarly,
$S$ is \emph{algebraically independent} over $F$ if there is no non-zero
 polynomial $P(x_1, \dots, x_n)$ with coefficients in $F$ that satisfies $P(s_1, \dots, s_n) = 0$. For instance, $\{\sqrt{2}\}$ is not algebraically
independent over $\mathbb{Q}$ since $\sqrt{2}$ is a root of
$x^2-2$. In contrast, $\{\pi\}$ is algebraically independent over
$\mathbb{Q}$ since $\pi$ is transcendental.
A \textit{transcendence basis} of a field extension $E/F$ is a  subset of $E$, maximal w.r.t.\ set inclusion, that is 
algebraically independent over~$F$. For example, $\{\pi\}$ is a transcendence basis of
$\mathbb{Q}(\pi)/\mathbb{Q}$.
It is known that all transcendence bases share the same cardinality, see \cite{CladerRoss2025}. 
It is thus meaningful to define the \textit{transcendence degree} of $E/F$,
denoted $\mathsf{trdeg}_F(E)$, as the cardinality of any transcendence basis of $E/F$.
Thus
\(    \mathsf{trdeg}_{\mathbb{Q}}(\mathbb{Q}(\pi))=1
\)
and
\(
\mathsf{trdeg}_{\mathbb{Q}}(\mathbb{Q}(\sqrt{2}))=0.
\)
A field extension $E/F$ is \textit{algebraic} if
$\mathsf{trdeg}_F(E)=0$. Hence
$\mathbb{Q}(\sqrt{2})/\mathbb{Q}$ is algebraic.

The following consequence of Corollary~6.31 of
\cite{CladerRoss2025} will be useful.
\begin{lemma}\label{lemma_almost_trivial_alg_indep}
Let $E/F$ be a finitely generated field extension and let $S\subseteq E$.
If $E/F(S)$ is algebraic and
\(
    \mathsf{trdeg}_F(E) \geq |S|,
\)
then $S$ is algebraically independent over $F$.
\end{lemma}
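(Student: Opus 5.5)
The plan is to show that $S$ contains a transcendence basis of $E/F$ and then count. First, $S$ is finite. Since $E/F$ is finitely generated, $\mathsf{trdeg}_F(E)=d$ is finite. If $S$ were infinite, the hypothesis $d\geq|S|$ would already fail, so $S=\{s_1,\dots,s_n\}$ with $n\leq d$. (If one reads the hypothesis with infinite cardinalities it simply cannot hold, so we may assume finiteness throughout.)

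Next, choose a maximal subset $T\subseteq S$ that is algebraically independent over $F$. Maximality means each $s\in S\setminus T$ is algebraic over $F(T)$. This uses the standard exchange-type fact: if $T\cup\{s\}$ is algebraically dependent while $T$ is independent, then $s$ is algebraic over $F(T)$. Hence $F(S)/F(T)$ is algebraic, because it is generated by elements algebraic over $F(T)$. By hypothesis $E/F(S)$ is algebraic, and algebraicity is transitive in towers, so $E/F(T)$ is algebraic.

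Now $T$ is algebraically independent over $F$ and $E$ is algebraic over $F(T)$. So $T$ is a maximal algebraically independent subset of $E$: adding any $e\in E$ would produce a dependence, since $e$ is algebraic over $F(T)$. Thus $T$ is a transcendence basis of $E/F$. This is the content of Corollary~6.31 of \cite{CladerRoss2025}, which gives that a generating set of an extension, here $S$ up to algebraic closure, contains a transcendence basis. Therefore $|T|=d$ by invariance of transcendence degree.

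Finally, count: $d=|T|\leq|S|\leq d$. So $|T|=|S|$, and since $S$ is finite, $T=S$. Hence $S$ is algebraically independent over $F$.

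The only real subtlety is the step that $T$ is a transcendence basis of $E/F$ rather than merely of $F(S)/F$. This is where the algebraicity of $E/F(S)$ and transitivity of algebraic extensions are needed. I would cite the cited corollary directly for this step rather than reprove the exchange lemma.
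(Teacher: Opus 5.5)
Your proposal is correct and follows essentially the same route as the paper: take a maximal algebraically independent $T\subseteq S$, use transitivity to get that $E/F(T)$ is algebraic, and conclude $\mathsf{trdeg}_F(E)\le |T|$. The paper argues by contradiction, $|T|<|S|\le\mathsf{trdeg}_F(E)\le|T|$, via Point~1 of Corollary~6.31. You instead argue directly that $T$ is a transcendence basis and squeeze $d=|T|\le|S|\le d$, which amounts to the same thing.
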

\begin{proof}
Suppose that $S$ is algebraically dependent over $F$, and let
$T\subseteq S$ be maximal algebraically independent over $F$.
Then $|T|<|S|$, and $F(S)/F(T)$ is algebraic.
Since $E/F(S)$ is algebraic, so is $E/F(T)$.
Point~1 of Corollary~6.31 of \cite{CladerRoss2025} states that
if $E/F$ is finitely generated and $E/F(T)$ is algebraic, then
\(
    \mathsf{trdeg}_F(E) \leq |T|.
\)
Consequently,
\(
    \mathsf{trdeg}_F(E) \leq |T| < |S|,
\)
contradicting $\mathsf{trdeg}_F(E)\geq |S|$.
\end{proof}
A classical result from transcendental number theory provides an important way of obtaining algebraically independent numbers, from \cite{Weierstrass1885Lindemann}.
\begin{theorem} \textup{(Lindemann–Weierstrass)}\label{theorem_LW}
    If $\{z_1, \dots, z_n \} \subseteq \bar{\mathbb{Q}}$ is linearly independent over $\mathbb{Q}$, then
    $\{e^{z_1}, \dots, e^{z_n} \}$ is algebraically independent over $\mathbb{Q}$.
\end{theorem}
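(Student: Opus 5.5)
The plan is not to build a new argument, since this is the classical theorem of \cite{Weierstrass1885Lindemann}. I would first reduce the statement to the equivalent ``linear'' form: \emph{if $\alpha_1,\dots,\alpha_r\in\bar{\mathbb{Q}}$ are pairwise distinct and $\beta_1,\dots,\beta_r\in\bar{\mathbb{Q}}$ are not all zero, then $\sum_{k}\beta_k e^{\alpha_k}\neq 0$.} I would then prove this linear form by Hermite's auxiliary-integral method, in the form streamlined by Baker.

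\emph{Reduction.} Suppose a non-zero polynomial $P(x_1,\dots,x_n)=\sum_{\bar m} c_{\bar m}\, x_1^{\bar m[1]}\cdots x_n^{\bar m[n]}$ with rational coefficients satisfies $P(e^{z_1},\dots,e^{z_n})=0$. Then
\[
\sum_{\bar m} c_{\bar m}\, e^{\bar m[1]z_1+\cdots+\bar m[n]z_n}=0 .
\]
Since $\{z_1,\dots,z_n\}$ is linearly independent over $\mathbb{Q}$, distinct exponent vectors $\bar m$ give distinct exponents $\sum_i \bar m[i] z_i$. These exponents are algebraic, being integer combinations of algebraic numbers. Since not all $c_{\bar m}$ vanish, this contradicts the linear form, so the reduction is immediate.

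\emph{Normalising the linear form.} Assume $\sum_k\beta_k e^{\alpha_k}=0$ with all $\beta_k\neq0$.
\begin{enumerate}
\item Multiply the relation by all its conjugates under the Galois group acting on the $\beta$'s. Expanding and collecting terms, this yields a relation with rational, and after clearing denominators integer, coefficients. It remains non-trivial: ordering exponents lexicographically by real and then imaginary part, the leading term of the product cannot cancel.
\item Multiply also over conjugates of the $\alpha$'s. This gives a relation in which the coefficient is constant on each full set of Galois conjugates of exponents. The result is
\[
\sum_{j} b_j \sum_{\alpha\in A_j} e^{\alpha}=0,
\]
with $b_j\in\mathbb{Z}$ non-zero and each $A_j$ the full set of roots of an integer polynomial.
\end{enumerate}

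\emph{Analytic contradiction.} Let $\ell$ be a common denominator of all the $\alpha$'s. For a large prime $p$, use Hermite's integral
\[
I(t)=\int_0^t e^{t-u}f(u)\,du=e^t\sum_{j\ge0} f^{(j)}(0)-\sum_{j\ge0}f^{(j)}(t),
\]
with the auxiliary polynomial
\[
f_i(x)=\ell^{Np}\,\frac{\prod_{\alpha}(x-\alpha)^p}{x-\alpha_i},
\]
where $N$ is the total number of exponents $\alpha$. For each $i$, form $J_i=\sum_j b_j\sum_{\alpha\in A_j} I_i(\alpha)$; the vanishing relation kills the $e^{t}$ part. Then set $J=\prod_i J_i$, which is symmetric in conjugates and hence a rational integer. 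The arithmetic goal is:
\begin{itemize}
\item every derivative $f^{(j)}(\alpha)$ is an algebraic integer divisible by $p!$, except the single term $f_i^{(p-1)}(\alpha_i)$;
\item that exceptional term is divisible by $(p-1)!$ but not by $p$, once $p$ exceeds $\ell$, the $|b_j|$ and the relevant norms.
\end{itemize}
This forces $J\neq 0$ and $|J|\ge (p-1)!$. On the other hand, the trivial estimate $|I(t)|\le |t|e^{|t|}\max_{|u|\le|t|}|\tilde f(u)|$, where $\tilde f$ has the coefficients of $f$ replaced by their absolute values, gives $|J|\le C^p$ for a constant $C$ independent of $p$. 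For large $p$ these bounds contradict each other.

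\emph{Main obstacle.} I expect the obstacle to be the symmetrisation and integrality bookkeeping: making $J$ simultaneously a rational integer, divisible by $(p-1)!$, and not divisible by $p$. The Galois-closure step in the normalisation is exactly what makes this work. The analytic upper bound is routine.
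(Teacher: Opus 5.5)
The paper gives no proof of this statement; it imports it as a black box from \cite{Weierstrass1885Lindemann}, so your opening sentence is the whole of what the paper does. Your outline is the standard Hermite--Lindemann argument in Baker's form, and it is sound.

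\begin{itemize}
\item The reduction is correct: $\mathbb{Q}$-linear independence makes distinct exponent vectors give distinct algebraic exponents.
\item The two Galois products yield integer coefficients that are constant on full conjugate classes. Non-triviality holds in both steps by your lexicographic leading-term argument, since every conjugated factor still has pairwise distinct exponents and non-zero coefficients. You state this only for the first step, but it carries over verbatim.
\item The class structure is exactly what gives $\tau(J_i)=J_{\tau(i)}$ for every Galois automorphism $\tau$. Hence $J=\prod_i J_i$ is a non-zero rational integer divisible by $(p-1)!^N$, which contradicts $|J|\le C^p$ for large $p$.
\end{itemize}

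Your route buys self-containment, and the citation buys brevity. It is also worth seeing how little of the theorem the paper actually uses. It only ever needs the case $n=1$ with a rational exponent: $e$ is transcendental in the main-text base case, and $e^{1/\kappa}$ and $e^{l_j}$ with $l_j\in\mathbb{Q}_{>0}$ are transcendental in the base case of Lemma~\ref{lemma_algeb_indep_full_case}. All of these follow from Hermite's theorem that $e$ is transcendental. Indeed, if $e^{a/b}$ were algebraic, then $e$ would be a root of $x^{a}-(e^{a/b})^{b}$, a polynomial with algebraic coefficients, and so would itself be algebraic. For that special case your argument collapses to a single auxiliary polynomial with integer roots $0,1,\dots,m$ and needs no Galois normalisation at all.
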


The following conjecture, due to Schanuel and first published by \cite{Lang1966TranscendentalNumbers}, generalizes the Lindemann--Weierstrass theorem by allowing the exponents to be arbitrary complex  numbers, rather than only algebraic ones.
\begin{conjecture} \textup{(Schanuel's conjecture)} 
    If $\{z_1, \dots, z_n \} \subseteq \mathbb{C}$ is linearly independent over $\mathbb{Q}$, then   $\mathsf{trdeg}_{\mathbb{Q}}(\mathbb{Q}(z_1, \dots, z_n, e^{z_1}, \dots, e^{z_n})) \geq n$.
\end{conjecture}
Note that Theorem~\ref{theorem_LW} is the special case of Schanuel's conjecture where $z_1,\dots,z_n$ are algebraic. By definition, algebraic numbers contribute no transcendence degree 
and thus $\mathsf{trdeg}_{\mathbb{Q}}(\mathbb{Q}(z_1, \dots, z_n, e^{z_1}, \dots, e^{z_n})) \geq n$
implies $\mathsf{trdeg}_{\mathbb{Q}}(\mathbb{Q}(e^{z_1}, \dots, e^{z_n})) \geq n$,
which is the same as saying that $\{e^{z_1}, \dots, e^{z_n} \}$ is algebraically independent over $\mathbb{Q}$.

Schanuel’s conjecture naturally relates to activation functions defined in terms of exponentials such as sigmoid. We identify a concrete class of such functions that we shall analyze in this article. 
We say that a function $\sigma:\mathbb{R}\to\mathbb{R}$  is \textit{Schanuel-ready}
if it is of the form $\sigma(x) = \frac{P(e^x)}{Q(e^x)}$, with $P(y)$ and $Q(y)$ polynomials with rational coefficients, such that
 (i)~$\frac{P(y)}{Q(y)}$ is non-constant, positive and strictly increasing on $y > 1$ and 
    (ii)~$Q(y) \neq 0$ for all $y > 0$.
Notable Schanuel-ready functions include the exponentiation function $\exp = e^x$,
logistic sigmoid $\frac{e^x}{e^x+1}$, hyperbolic tangent $\frac{e^{2x} - 1}{e^{2x} + 1}$,
hyperbolic sine $\frac{e^{2x} - 1}{2e^x}$, and hyperbolic cosine $\frac{e^{2x} + 1}{2e^{x}}$. 

\paragraph{Graphs.}
We work with directed graphs $G=(V,E)$ throughout this article. For each $v \in V$ we define the \emph{neighborhood} $N(v)=\{ u \in V \mid (v, u) \in E \}$
to be the set of vertices reachable from $v$ in one step. A \emph{pointed graph} is a pair $(G,v)$
with  $G=(V,E)$ a graph and $v \in V$ a distinguished vertex. An
\emph{embedded graph of dimension} $d >0$ is
a triple $(V,E,\lambda)$ with $\lambda: V \rightarrow \mathbb{R}^d$
an \emph{embedding}. Pointed embedded graphs are defined as expected, taking the form $((V,E,\lambda),v)$.


\paragraph{AC-GNNs and 1-WL.}
An \emph{$L$-layer aggregate-combine graph neural network (AC-GNN)} is
a tuple 
$\gnn=(\sigma,(C_i)_{1 \leq i \leq L},(A_i)_{1 \leq i \leq L},(b_i)_{1 \leq i \leq L},I)$ where  $\sigma:\mathbb{R}\rightarrow\mathbb{R}$ is an \emph{activation function} applied component-wise, there are dimensions $d_0,\dots,d_L \geq 1$ such that
$A_i, C_i \in \mathbb{Q}^{d_{i} \times d_{i-1}}$ and $b_i \in \mathbb{Q}^{d_i}$ for $1 \leq i \leq L$,
and $I$ specifies an \emph{initialization function}. More precisely, $I$ is a tuple
$(B_1,\dots,B_k,c_1,\dots,c_k)$
such that
there are $d'_0,\dots,d'_k \geq 1$ with $B_i \in \mathbb{Q}^{d'_{i} \times d'_{i-1}}$ and $c_i \in \mathbb{Q}^{d'_i}$ for $1 \leq i \leq k$.  We require that
$d'_k=d_0$ and call $d'_0$ the \emph{input dimension} of $\gnn$, and $d_L$ its \emph{output dimension}. The \emph{weights} of a GNN are the elements of the $A_i, B_i, C_i, b_i$ and $c_i$.

As the input to GNNs, we 
use graphs endowed with non-negative rational vertex features. Formally,
a \emph{featured graph of dimension} $n$ is an embedded graph $(V,E,\lambda)$ of dimension $n$ such that the image of $\lambda$ contains only vectors of non-negative rationals.
Let $\gnn$ be a GNN as above, and let $G=(V,E,\lambda)$ be a featured graph of dimension $d'_0$, the input dimension of $\gnn$. When applying $\gnn$ to $G$, in the first step $I$ transforms $G$ into
an embedded graph $(V,E,\lambda^{(0)})$ of dimension $d_0$.
Informally,  $I$ should be thought of as a feedforward neural network. Formally, we set
$\widetilde{\lambda}^{(0)}(\bar x) = \sigma(B_k\sigma( \dots \sigma(B_1 \bar x +c_1) \dots) + c_k)$ for all $\bar x \in \mathbb{R}^{d'_0}$ and
$\lambda^{(0)}(v)=\widetilde{\lambda}^{(0)}(\lambda(v))$ for all $v \in V$. Next, $\gnn$ constructs a sequence of $L$ embeddings of $(V,E)$, starting from $\lambda^{(0)}$.
Let $1 \leq i \leq L$.
Set 
$\widetilde{\lambda}^{(i)}(\bar x, M) = \sigma(C_i\bar x + A_i\sum M + b_i)$ 
for all $\bar x \in \mathbb{R}^{d_{i-1}}$
and all finite multisets $M$
over $\mathbb{R}^{d_{i-1}}$. Then define $
\lambda^{(i)}(v)=\widetilde{\lambda}^{(i)}(\lambda^{(i-1)}(v),\msl\lambda^{(i-1)}(u) \mid u \in N(v)\msr)$ for all $v \in V$. For vertex classification, one may now work with the
embedded graph $(V,E,\lambda^{(L)})$.
For graph classification, a final \emph{graph-level readout} is carried out to consolidate the graph embedding into a single vector  $\lambda_G = \sum_{v \in V} \lambda^{(L)}(v)$. We refer to the functions $\widetilde{\lambda}^{(0)}, \dots, \widetilde{\lambda}^{(L)}$ as \emph{update functions}.

We remark that the initialization function is a slightly unusual feature of our GNN model. Its purpose is to synchronize the layers of the GNN with the rounds of the Weisfeiler--Leman algorithm. In principle, the initialization function could instead be implemented by additional GNN layers, at the cost of increasing the depth of the network.

We next introduce the 1-dimensional Weisfeiler--Leman algorithm (1-WL), in its color refinement variant; see also \cite{GroheLogic}.
The algorithm takes a featured graph $G=(V,E,\lambda)$ as input and proceeds in rounds, producing increasingly refined  colorings of the vertices of $G$. The initial coloring is 
defined by $\mu^{(0)}(v) = \mathsf{hash}(\lambda(v))$ where $ \mathsf{hash}$ is an injective function whose range is a countable set of \emph{colors}. In later rounds $i>0$, the coloring is defined by 
\begin{equation*}
    \mu^{(i)}(v) = \mathsf{hash}\Big(\mu^{(i-1)}(v), \msl \mu^{(i-1)}(u) \mid u \in N(v) \msr \Big).
\end{equation*}
When used as an approximative test for graph isomorphism, 1-WL is typically run until stabilization, that is, 
until $\mu^{(i)}(v_1) = \mu^{(i)}(v_2)$
implies $\mu^{(i+1)}(v_1) = \mu^{(i+1)}(v_2)$
for all vertices $v_1,v_2$. Here, we will mostly be interested in the coloring $\mu^{(L)}(v)$ for some fixed $L$. This coloring can be lifted to the graph level by taking the color of $G=(V,E)$ to be $\mu_G=\mathsf{hash}(\msl 
\mu^L(v) \mid v \in V
\msr)$.

Both the $L$-round 1-WL algorithm and every $L$-layer AC-GNN induce an equivalence relation on all graphs: $G_1$ and $G_2$ are equivalent if
$\mu(G_1)=\mu(G_2)$
or
$\lambda(G_1)=\lambda(G_2)$, respectively. Equivalence relations on pointed graphs $(G,v)$ are induced in the same way, via 
 $\mu^{(L)}(v)$ and
$\lambda^{(L)}(v)$.
It follows from \cite{xu2018howPowerful, WLGoesNeural} that $L$-layer GNNs cannot classify graphs or vertices
as being non-equivalent unless the $L$-round 1-WL  does  the same.
The following, which can be inferred from Theorem~3 of \cite{xu2018howPowerful}, states a converse. 
If the update functions of a GNN $\gnn$ are injective, then the equivalence relation on pointed graphs induced by $\gnn$ is exactly that of 1-WL.
The same holds on the graph level if we additional demand that the image of the final update function is linearly independent over $\mathbb{Q}$.

\begin{lemma} \label{lemma_prelim_lin_indep_implies_wl}
    Let $\gnn$ be an $L$-layer AC-GNN such that 
$\widetilde{\lambda}^{(0)},
\dots, \widetilde{\lambda}^{(L)}$ are  injective.
    Then for all pointed graphs $(G_1,v_1)$ and $(G_2,v_2)$, 
    $\lambda^{(L)}(v_1)=
    \lambda^{(L)}(v_2)$
    if and only if 
    $\mu^{(L)}(v_1)=
    \mu^{(L)}(v_2)$.
    If, in addition, every finite subset of the image of
$\widetilde{\lambda}^{(L)}$ is linearly independent over $\mathbb{Q}$,
    then for all graphs
    $G_1,G_2$, $\lambda_{G_1}=
    \lambda_{G_2}$
    if and only if 
    $\mu_{G_1}=
    \mu_{G_2}$.
\end{lemma}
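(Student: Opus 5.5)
The plan is to prove the vertex-level claim by induction on the layer index $i\in\{0,\dots,L\}$. The inductive statement is: for all vertices $v_1,v_2$, possibly from different graphs, $\lambda^{(i)}(v_1)=\lambda^{(i)}(v_2)$ iff $\mu^{(i)}(v_1)=\mu^{(i)}(v_2)$. To handle two graphs at once, I would work in the disjoint union of $G_1$ and $G_2$. Both colourings are computed locally, so this changes nothing. Injectivity of $\widetilde{\lambda}^{(i)}$ is read as injectivity of the map $(\bar x,M)\mapsto \sigma(C_i\bar x + A_i\sum M + b_i)$ on pairs consisting of a vector and a finite multiset.

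\emph{Base case.} We have $\lambda^{(0)}(v)=\widetilde{\lambda}^{(0)}(\lambda(v))$ and $\mu^{(0)}(v)=\mathsf{hash}(\lambda(v))$. Both maps are injective, so each side holds iff $\lambda(v_1)=\lambda(v_2)$.

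\emph{Inductive step.} By the induction hypothesis there is a well-defined injective map $\phi$ sending each colour $\mu^{(i-1)}(u)$ to $\lambda^{(i-1)}(u)$, and it is a bijection between the colours and the embeddings occurring in the union graph. Applying $\phi$ elementwise turns the multiset $\msl \mu^{(i-1)}(u)\mid u\in N(v)\msr$ into $\msl \lambda^{(i-1)}(u)\mid u\in N(v)\msr$. Because $\phi$ is injective, two colour multisets coincide iff their images coincide. Hence the pair (own colour, neighbour colour multiset) of $v_1$ equals that of $v_2$ iff the corresponding pairs of embeddings coincide. Injectivity of $\mathsf{hash}$ on one side and of $\widetilde{\lambda}^{(i)}$ on the other then gives the equivalence at level $i$. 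Taking $i=L$ yields the first claim.

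\emph{Graph level.} Let $X$ be the finite set of values of $\lambda^{(L)}$ over $V_1\cup V_2$. For $x\in X$, let $m_j(x)$ be the number of vertices of $G_j$ with $\lambda^{(L)}(v)=x$. Then $\lambda_{G_j}=\sum_{x\in X} m_j(x)\,x$, and $\lambda_{G_1}=\lambda_{G_2}$ gives $\sum_x (m_1(x)-m_2(x))x=0$, a rational (indeed integer) linear relation. By the linear-independence assumption applied to the finite set $X$, all coefficients vanish, so $m_1=m_2$. The converse direction is trivial. By the vertex-level result and the bijection $\phi$ at level $L$, the multiplicities $m_j$ are exactly the multiplicities of the colours $\mu^{(L)}$ in $G_j$. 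Thus $m_1=m_2$ iff the multisets $\msl\mu^{(L)}(v)\mid v\in V_j\msr$ agree, iff $\mu_{G_1}=\mu_{G_2}$ by injectivity of $\mathsf{hash}$.

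The only delicate point I expect is the multiset transfer in the inductive step. There one must use that the colour-to-embedding correspondence is a genuine bijection over the combined vertex set, rather than just an equivalence relation within each graph separately; working in the disjoint union handles this. The rest is routine bookkeeping.
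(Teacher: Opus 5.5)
Your proof is correct and follows essentially the argument the paper relies on. The paper gives no proof of its own and instead defers to Theorem~3 of \cite{xu2018howPowerful}, whose proof is this same induction building an injective map from WL colours to embeddings; here your $\mathbb{Q}$-linear-independence step supplies the injectivity of the readout on multisets.

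Your induction also only applies injectivity of $\widetilde{\lambda}^{(i)}$ to pairs $(\lambda^{(i-1)}(v),\msl \lambda^{(i-1)}(u) \mid u\in N(v)\msr)$ that actually arise. So it covers the effectively-injective version used later in the paper, which is the only meaningful reading: $\widetilde{\lambda}^{(i)}$ sees $M$ only through $\sum M$, so it cannot be injective on all multisets.
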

For our later use of Lemma~\ref{lemma_prelim_lin_indep_implies_wl}, we remark
that it suffices for the lemma to hold that the functions $\widetilde{\lambda}^{(0)},
\dots, \widetilde{\lambda}^{(L)}$ are only \emph{effectively injective}. For  $1 \leq i \leq L$, this means that $\widetilde{\lambda}^{(i)}$ is injective on the restriction to inputs $(\bar x,M)$ such that there is a graph $G=(V,E)$ and a vertex $v \in V$ with $\bar x=\lambda^{(i-1)}(v)$
and $M=\msl\lambda^{(i-1)}(u) \mid u \in N(v)\msr$. Likewise, $\widetilde{\lambda}^{(0)}$ is effectively injective if it is injective on vertex labels, that is, on vectors of non-negative rationals.

\section{Matching the upper bound}\label{section_main_result}

The following theorem states the main result of this article. When speaking of a GNN of \emph{internal dimension~1}, we mean that all dimensions $d_0,\dots,d_L$ from the definition of GNNs are~1 (while the dimensions $d'_1,\dots,d'_k$ of the initialization function need not be~1).
\begin{theorem}
\label{thm:main}
  Assume that Schanuel's conjecture is true. Then 
  for every Schanuel-ready activation function $\sigma$ and every $L \geq 1$, there is an $L$-layer GNN $\gnn$ 
  of internal dimension 1 that uses $\sigma$ activation and integer weights such that
  %
 (i)~for all pointed graphs,  $(G_1,v_1)$ and $(G_2,v_2)$, 
    $\lambda^{(L)}(v_1)=
    \lambda^{(L)}(v_2)$
    if and only if 
    $\mu^{(L)}(v_1)=
    \mu^{(L)}(v_2)$;
    and (ii)~for all graphs   $G_1,G_2$, $\lambda_{G_1}=
    \lambda_{G_2}$
    if and only if 
    $\mu_{G_1}=
    \mu_{G_2}$.
\end{theorem}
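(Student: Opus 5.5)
By Lemma~\ref{lemma_prelim_lin_indep_implies_wl} and the remark after it, it suffices to build an $L$-layer GNN of internal dimension~1 with integer weights whose update functions are effectively injective, and such that every finite subset of the image of $\widetilde{\lambda}^{(L)}$ is linearly independent over $\mathbb{Q}$. I take self-weight $C_i=1$, aggregation weight $A_i=2$ and bias $b_i=0$ in every layer, so that $\lambda^{(i)}(v)=\sigma(x_v+2\sum_{u\in N(v)}x_u)$ with $x_w=\lambda^{(i-1)}(w)$. The factor $2$ separates the self-term from the neighbour terms. The argument is an induction on $i$ with the following invariant:
\begin{quote}
for every finite set $X$ of values $\lambda^{(i)}(v)$ arising in graphs, the set $\{1\}\cup X$ is linearly independent over $\mathbb{Q}$, and the field $\mathbb{Q}(X)$ sits inside a finitely generated field whose transcendence bookkeeping is controlled.
\end{quote}
For the initialization I use an integer-weight network that maps non-negative rational features injectively to values $\sigma(q)$ with $q$ rational. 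Concretely, a first layer produces $e^{q}$-type quantities, and I pick weights so that distinct feature vectors give distinct rational exponents $q$. I will check the base case with Lindemann--Weierstrass (Theorem~\ref{theorem_LW}): distinct rationals $q_j$ are generally not linearly independent over $\mathbb{Q}$, so the base case should instead show that the values $\sigma(q_j)$ are linearly independent over $\mathbb{Q}$. Writing all $q_j$ over a common denominator $\kappa$, each $\sigma(q_j)$ is a rational function of $t=e^{1/\kappa}$, and $t$ is transcendental. Linear independence then reduces to showing that the rational functions $P(y^{m_j})/Q(y^{m_j})$ are linearly independent over $\mathbb{Q}$. This follows from their pole and growth structure, using condition (i) together with distinct $m_j$. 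This explains why the fields $\mathbb{Q}(e^{1/\kappa})$ appear in the preliminaries.

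For the inductive step, suppose the distinct values $a_1,\dots,a_n$ at layer $i-1$, together with $1$, are linearly independent over $\mathbb{Q}$. A vertex with self-value $a_j$ and neighbour multiplicities $m_1,\dots,m_n$ receives the argument $z=a_j+2\sum_k m_k a_k$. Linear independence gives injectivity of $(j,M)\mapsto z$, because the coefficient vector determines $j$ and $M$: the coefficient of $a_k$ is odd exactly at $k=j$. Since $\sigma$ is strictly increasing on the relevant range by (i)–(ii), the map $z\mapsto\sigma(z)$ is injective, which gives effective injectivity. Injectivity of the initialization needs the same kind of check.

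The main obstacle is the \emph{new} linear independence: I must show that $\{1\}\cup\{\sigma(z_1),\dots,\sigma(z_r)\}$ is $\mathbb{Q}$-linearly independent for distinct such $z_\ell$. My plan is the following.
\begin{enumerate}
\item Since the $z_\ell$ lie in the $\mathbb{Q}$-span of the independent set $a_1,\dots,a_n$, I write $z_\ell=\sum_k c_{\ell k}a_k$ with integer $c_{\ell k}$.
\item Applying Schanuel's conjecture to $a_1,\dots,a_n$ gives $\mathsf{trdeg}_{\mathbb{Q}}\,\mathbb{Q}(a,e^{a})\ge n$.
\item By induction, each $a_k$ is a rational function of earlier exponentials and ultimately of $e^{1/\kappa}$, so $\mathbb{Q}(a)$ has transcendence degree bounded by the earlier layers.
\item I then carry through the induction the stronger statement that the full tower generated so far has transcendence degree exactly the number of its exponential generators.
\item Lemma~\ref{lemma_almost_trivial_alg_indep} then yields that $e^{a_1},\dots,e^{a_n}$ are algebraically independent over the field $K$ generated by the earlier layers.
\end{enumerate}
Given that algebraic independence, each $\sigma(z_\ell)=P(\prod_k y_k^{c_{\ell k}})/Q(\cdots)$ with $y_k=e^{a_k}$ is a rational function in independent indeterminates $y_k$ over $K$. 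The distinct exponent vectors $c_\ell$ then yield $\mathbb{Q}$-linearly independent rational functions, again by looking at the leading monomial or pole structure. This gives linear independence at layer $i$, and also at layer $L$ for part~(ii).

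Making the transcendence-degree count close up exactly, so that the hypotheses of Lemma~\ref{lemma_almost_trivial_alg_indep} hold at every layer, is the delicate point. I expect most of the work to go there, together with the one-variable independence argument for Schanuel-ready rational functions.
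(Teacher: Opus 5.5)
Your reduction to effective injectivity plus $\mathbb{Q}$-linear independence matches the paper, and so do the weights $C_i=1$, $A_i=2$, $b_i=0$, the parity argument for injectivity, and the one-variable rational-function argument for passing from algebraic to linear independence. The inductive step, however, does not close as planned.

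In step~2 you apply Schanuel's conjecture only to the current values $a_1,\dots,a_n$. These already lie in the field $K$ generated by all earlier exponentials, which has transcendence degree $m$. From $\mathsf{trdeg}_{\mathbb{Q}}\mathbb{Q}(a,e^{a})\ge n$ and $\mathbb{Q}(a,e^{a})\subseteq K(e^{a})$ you therefore only get $\mathsf{trdeg}_K K(e^{a_1},\dots,e^{a_n})\ge n-m$. The transcendence of $\mathbb{Q}(a)\subseteq K$ can absorb the entire Schanuel bound. So the invariant of step~4 cannot be propagated this way, and Lemma~\ref{lemma_almost_trivial_alg_indep} has nothing to act on.

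The missing idea is to apply Schanuel at each stage to \emph{all} exponents of the tower at once, on finite truncations since the conjecture concerns finite sets. That is, take $S=\{1/\kappa\}\cup\{z/\kappa \mid z \text{ a value of some level } j\le i\}$. Each $s\in S$ corresponds to exactly one generator $e^{s}$, and each $s$ lies in the field $E$ generated by these exponentials. Hence $\mathsf{trdeg}_{\mathbb{Q}}E\ge|S|$, which equals the number of generators, and Lemma~\ref{lemma_almost_trivial_alg_indep} yields algebraic independence of all generators simultaneously.

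This needs a cross-layer invariant rather than your per-layer one: $\{1\}\cup\bigcup_{j\le i}Z_j$ must be $\mathbb{Q}$-linearly independent. That in turn requires three things:
\begin{itemize}
\item the value sets of different levels must be pairwise disjoint and avoid $1$, so that the count $|S|$ is exact;
\item the algebraic-to-linear transfer must be applied to the union over all levels;
\item all arguments must be positive, so that $R$ is strictly increasing where it is evaluated.
\end{itemize}
The paper's Conditions~1--3 maintain exactly these.

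Your initialization is also not realizable as stated. For input dimension $N\ge 2$ no integer linear form is injective on $\mathbb{Q}_{\ge 0}^N$; for example, weights $a,b>0$ identify $(b,0)$ and $(0,a)$. Once an inner $\sigma$ has been applied, the next argument is in general no longer rational. So $\lambda^{(0)}$ cannot be $\sigma(q)$ with $q$ a rational injective code of the features.

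The paper instead uses $\lambda^{(0)}(v)=\sigma\bigl(\sum_{j=1}^N 2^j\sigma(\lambda(v)[j]+1)\bigr)$. The inner values $\sigma(\text{rational})$ are $\mathbb{Q}$-linearly independent, which is where your base case belongs. The weights $2^j$ binary-encode the set of coordinates carrying each value, which gives injectivity even when values repeat across coordinates. This adds a bottom level ($Z_{-1,k,r}$, with generator $e^{1/\kappa}$). Consequently $\lambda^{(0)}$ is $\sigma$ of a transcendental argument and must go through the same Schanuel step as the later layers.
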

In fact, our result is even stronger than what is stated in Theorem~\ref{thm:main}. Note that, for GNNs with internal dimension~1, the matrices $A_i,C_i$ and bias vectors $b_i$ are simply scalars. Our proof shows that it suffices to choose $C_i=1$, $A_i=2$, and $b_i=0$ for every message-passing layer. For featured input graphs of dimension $N$, the matrices $B_i$ in the initialization function use only the values $2^1,2^2,\dots,2^N$, and the vectors $c_i$ are either comprised entirely of zero or of one. 
Thus the weights in $\gnn$ are not only integers, but also reside in a finite set of natural numbers whose size only depends on $N$, but not on $L$. In the general setting, our construction allows all positive rational $A_i$ and $C_i$ satisfying$\frac{C_i}{A_i} \not\in \mathbb{N}$. In this sense, our result applies to a broad range of rational weights.

We now give the technical details of our construction. Say that an $L$-layer GNN $\gnn=(\sigma,(C_i)_{1 \leq i \leq L},(A_i)_{1 \leq i \leq L},(b_i)_{1 \leq i \leq L},I)$ of input dimension $N$ and internal dimension~1 is \textit{Schanuel-ready} if it uses a Schanuel-ready activation function and satisfies the following conditions:
\begin{enumerate}
    \item the initialization function is
   \(I=(B_1,B_2,c_1,c_2)\), where
   $B_1$ is the $N \times N$ identity matrix, $c_1=\mathbf{1}_N$, 
   $B_2=(2^1,2^2,\ldots,2^N)$, and $c_2=0$. Then 
  $\lambda^{(0)}(v) = 
\sigma\big(
\sum_{j=1}^{N} 2^j \sigma(\lambda(v)[j] + 1)
\big)$ 
for all graphs $G=(V,E)$ and vertices $v \in V$;
    \item  $A_i, C_i > 0$ and $b_i \geq 0$ are rational numbers such that $\frac{C_i}{A_i} \not\in \mathbb{N}$. 
\end{enumerate}
The purpose of the above initialization function $I$
is to encode a vector of non-negative rational numbers, as given by the input features, into a single real number. One can show that, under Schanuel's conjecture, the values $\sigma(q+1)$ are linearly independent over
$\mathbb{Q}$, preventing different feature values from
canceling out in the sum. The powers $2^j$ provide, for each rational value, a binary encoding of the positions in which it occurs. 
In the appendix, we prove the following.
\begin{restatable}{proposition}{mainSchanuelsProposition}\label{main_schanuels_proposition}
    Assuming Schanuel's conjecture is true,
    every $L$-layered Schanuel-ready GNN $\gnn$
    satisfies  $\lambda^{(L)}(v_1)=
    \lambda^{(L)}(v_2)$
    if and only if 
    $\mu^{(L)}(v_1)=
    \mu^{(L)}(v_2)$, for all pointed graphs  $(G_1,v_1)$ and $(G_2,v_2)$.
\end{restatable}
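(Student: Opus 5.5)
The plan is to combine Lemma~\ref{lemma_prelim_lin_indep_implies_wl}, in its effective-injectivity form, with an invariant proved by induction on the layer index $i$. The invariant $(\mathrm{Inv}_i)$ states that every finite set of \emph{distinct} values in the image of $\lambda^{(i)}$, ranging over all featured graphs, is linearly independent over $\mathbb{Q}$. I also carry along two side facts: all such values are positive, and all arguments fed to $\sigma$ are $>0$. Positivity is easy. Features are non-negative, so $\lambda(v)[j]+1\ge 1$. Since $A_i,C_i>0$, $b_i\ge 0$ and neighbour sums are non-negative, every argument $z$ of $\sigma$ is $>0$, so $e^z>1$. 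On $y>1$ the map $P/Q$ is positive, so $\sigma(z)>0$.

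Given $(\mathrm{Inv}_{i-1})$, effective injectivity of $\widetilde{\lambda}^{(i)}$ reduces to injectivity of $(x,M)\mapsto C_ix+A_i\sum M+b_i$, because $\sigma$ is injective there: $P/Q$ is strictly increasing on $y>1$. Suppose $C_ix+A_i\sum M=C_ix'+A_i\sum M'$ and expand in the distinct values $y$ occurring. By $(\mathrm{Inv}_{i-1})$ each coefficient vanishes. If $x\neq x'$, the coefficient of $x$ is $C_i+A_i(m_x-m'_x)$, which vanishes only if $C_i/A_i=m'_x-m_x\in\mathbb{Z}$. This is excluded because $C_i/A_i>0$ and $C_i/A_i\notin\mathbb{N}$. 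Hence $x=x'$, and then all multiplicities agree, so $M=M'$. For $\widetilde{\lambda}^{(0)}$ the same argument applies with weights $2^j$ (binary positions) once the values $\sigma(q+1)$, $q\in\mathbb{Q}_{\ge0}$, are known to be linearly independent over $\mathbb{Q}$. So everything hinges on one transfer statement.

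\emph{Transfer statement:} if $z_1,\dots,z_n>0$ are distinct arguments arising at some layer (including layer $0$), then $\sigma(z_1),\dots,\sigma(z_n)$ are linearly independent over $\mathbb{Q}$. To prove it, unfold all computations leading to the $z_k$. Collect all arguments of $\sigma$ used at all earlier layers, together with $1$, and choose a $\mathbb{Q}$-basis $w_1,\dots,w_r$ of their span. Fix a common denominator $\kappa$ so that every argument is an integer combination of $u_j=w_j/\kappa$.

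By induction on layers, each argument is a rational combination of $\sigma$-values of earlier arguments. Each such $\sigma$-value is a rational function of monomials in $e^{u_1},\dots,e^{u_r}$. Therefore $E=\mathbb{Q}(u_1,\dots,u_r,e^{u_1},\dots,e^{u_r})$ is algebraic over $\mathbb{Q}(e^{u_1},\dots,e^{u_r})$; in fact $u_j$ lies in that field. Schanuel's conjecture gives $\mathsf{trdeg}_{\mathbb{Q}}(E)\ge r$. Lemma~\ref{lemma_almost_trivial_alg_indep} then shows $e^{u_1},\dots,e^{u_r}$ are algebraically independent. Writing $z_k=\sum_j a_{kj}u_j$ with distinct integer vectors $a_k$, we get $\sigma(z_k)=P(Y^{a_k})/Q(Y^{a_k})$ with $Y$ a tuple of independent indeterminates. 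It thus suffices to show that distinct Laurent monomials $Y^{a_k}$ give $\mathbb{Q}$-linearly independent rational functions $P(Y^{a_k})/Q(Y^{a_k})$.

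\emph{Main obstacle.} The hardest step is proving $\sum_k c_k\,R(Y^{a_k})=0\Rightarrow c=0$ for $R=P/Q$. I would first specialise $Y=(t^{\omega_1},\dots,t^{\omega_r})$ with generic integer weights $\omega$, so that the exponents $n_k=\langle a_k,\omega\rangle$ are distinct nonzero integers. This reduces the problem to one variable: show that the $R(t^{n_k})$ are linearly independent.

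Two cases arise. If $Q$ has a root $\alpha$, which is necessarily not a positive real, then $R(t^{n})$ has poles exactly at the $n$-th roots of $\alpha$. Pick the term whose $|n_k|$ is largest, resolving ties of sign by $\alpha$ versus $\alpha^{-1}$. It has a pole not shared by the other terms, so its coefficient must be $0$; iterate. If $Q$ is constant, $R$ is a nonconstant polynomial, and comparing the highest (or lowest) power of $t$ isolates the term with extreme $n_k$; iterate. Writing $R=R_0+R(0)$ may be needed to handle the constant parts. I expect the genuinely delicate points to be ensuring genericity of $\omega$ and handling possible $t^{-n}$ terms. These fall under the same pole/degree comparison, using that $\alpha\neq 0$ when $P,Q$ are taken coprime with a nonzero root.
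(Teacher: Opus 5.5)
\paragraph{Gap: negative exponents.} The step that fails is the reduction in your \emph{main obstacle}. The claim that distinct Laurent monomials $Y^{a_k}$ give $\mathbb{Q}$-linearly independent functions $R(Y^{a_k})$ is false. So is its one-variable version for distinct \emph{nonzero} exponents $n_k$.

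\begin{itemize}
\item For $\tanh$ and $\sinh$ one has $R(y^{-1})=-R(y)$.
\item For $\cosh$ one has $R(y^{-1})=R(y)$.
\item For the logistic sigmoid one has $R(y)+R(y^{-1})=1$, so $R(t)+R(t^{-1})-R(t^2)-R(t^{-2})=0$.
\end{itemize}

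Negative exponents appear as soon as you pass to an arbitrary basis $u_1,\dots,u_r$. A merely generic $\omega$ can then produce $n_k$ of both signs even when the multivariate relation is nontrivial. Take $a_1=(1,0)$, $a_2=(0,1)$ and $\omega=(1,-1)$. The nonzero function $R(Y_1)+R(Y_2)$ specialises to $R(t)+R(t^{-1})\equiv 0$ for $\tanh$, so the specialisation destroys exactly the information you need. No pole or degree comparison can repair this: for $\tanh$, $R(t^{n})$ and $R(t^{-n})$ have identical poles, so \emph{resolving ties of sign} is impossible in general.

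\paragraph{The fix.} The missing idea is to keep the positivity you already proved. Since $z_k=\langle a_k,u\rangle>0$ for the real vector $u=(u_1,\dots,u_r)$, the open cone $\{\omega\in\mathbb{R}^r:\langle a_k,\omega\rangle>0\ \text{for all}\ k\}$ is nonempty. It therefore contains a rational point off the finitely many hyperplanes $\langle a_k-a_l,\omega\rangle=0$, and after scaling an integer one. For such $\omega$ all $n_k$ are distinct and \emph{positive}.

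It then suffices to show that $R(t^{n_1}),\dots,R(t^{n_m})$ are linearly independent for distinct positive $n_k$. The paper does this by writing $R(x)=c+\alpha x^{s}+o(x^{s})$ as $x\to 0^+$ with $s\neq 0\neq\alpha$, and isolating the term with extremal $s\,n_k$. This is cleaner than your pole/degree dichotomy. That dichotomy also omits the case $Q=c\,y^m$ (as for $\sinh$ and $\cosh$), where the only pole is at $0$ and is shared by all terms.

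\paragraph{Comparison with the paper.} The paper never meets the sign issue because it never changes basis. It expresses everything through the generators $e^{1/\kappa}$ and $e^{z/\kappa}$ with $z>0$, so all exponents are nonnegative. The price is a layer-by-layer Schanuel induction with disjointness bookkeeping. Your single application of Schanuel to a basis of the span of all arguments is sound and arguably simpler, provided the collected arguments include the $z_k$ themselves. With the cone choice of $\omega$ added, your proof goes through.
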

In what follows, we present
a proof of a restricted version of Proposition~\ref{main_schanuels_proposition} that is less technical, yet still reveals the underlying mechanisms.
More concretely, we assume that
  the activation function is $\sigma = \exp$, $C_i=1$, $A_i=2$, and $b_i=0$ for every layer $i$, and all feature vectors in input graphs are one-hot, that is, they have exactly one non-zero component.
The  assumption of one-hot feature vectors also allows us to simplify the initialisation to be $I=(B_1,c_1)$, where
$B_1=(1,2,\ldots,N)$ and $c_1=0$. Then
\(
    \lambda^{(0)}(v)
    =
    \sigma\big(
        \sum_{j=1}^n j\,\lambda(v)[j]
    \big)
\)
for all graphs $G=(V,E)$ and vertices $v\in V$.


We now define, for every layer~$i$ of our GNN $\gnn$, a set of vectors $Z_i$ that overapproximates the set of embedding vectors that can arise in layer~$i$ of $\gnn$, across all graphs. Since the sets $Z_i$ can be infinite and Schanuel's conjecture applies only to finite sets, we shall also define, for each $r \geq 1$, a finite set $Z_{i,r}$. The reader should think of the set $Z_{1,2r+1}$ in the same way as $Z_i$, but on graphs of out-degree at most $r$. We proceed inductively:
$$
\begin{array}{r@{\;}c@{\;}lcr@{\;}c@{\;}l}
 \widehat{Z}_{0, r} &=& \{ e \} && Z_{0, r} &=& \{ e^1, \dots, e^r \},\\[1mm]
 \widehat{Z}_{i+1, r} &=& \{ e^z \mid z \in Z_{i, r} \} && Z_{i+1, r} &=& \{z_1 \cdots z_r \mid z_1, \dots, z_r \in \widehat{Z}_{i+1, r} \cup \{ 1 \}
    \text{ not all } 1\}.
\end{array}
$$
Importantly, $z_1,\dots,z_r$ do not need to be pairwise distinct.
Set $Z_{i} = \bigcup_{r \geq 1} Z_{i, r}$ for $0 \leq i \leq L$.

\begin{lemma}
    For all featured graphs $G$, all vertices $v \in V$ and  $0 \leq i \leq L$ we have $\lambda^{(i)}(v) \in Z_{i}$.
\end{lemma}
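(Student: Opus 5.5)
The proof is by induction on $i$, exploiting two facts: the sets $Z_{i,r}$ are monotone in $r$, and under exponential activation a message-passing step turns a sum of previous embeddings into a product of their exponentials. Throughout we work in the restricted setting fixed above: $\sigma=\exp$, $C_i=1$, $A_i=2$, $b_i=0$, and one-hot features, which I read as having their single non-zero component equal to $1$. Since each $Z_{i,r}$ is finite, we cannot take one $r$ for all vertices. Instead, for a fixed featured graph $G$ we show that every $\lambda^{(i)}(v)$ lies in $Z_{i,r}$ for some $r$; this suffices, because $Z_i=\bigcup_{r\geq 1} Z_{i,r}$.

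\emph{Monotonicity.} First I would show $Z_{i,r}\subseteq Z_{i,r'}$ and $\widehat{Z}_{i,r}\subseteq\widehat{Z}_{i,r'}$ for all $r\leq r'$, by induction on $i$.
For $i=0$ this is immediate: $\{e^1,\dots,e^r\}\subseteq\{e^1,\dots,e^{r'}\}$ and $\widehat{Z}_{0,r}=\{e\}$ does not depend on $r$.
For the step, $Z_{i,r}\subseteq Z_{i,r'}$ gives $\widehat{Z}_{i+1,r}\subseteq \widehat{Z}_{i+1,r'}$.
Now take a product $z_1\cdots z_r\in Z_{i+1,r}$ and pad it with $r'-r$ factors equal to $1$. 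The result is a product of $r'$ elements of $\widehat{Z}_{i+1,r'}\cup\{1\}$, still not all $1$, and it has the same value. Hence it lies in $Z_{i+1,r'}$.
Since $G$ is finite, monotonicity lets us replace finitely many witnesses $r$ by their maximum.

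\emph{Base case.} Let $v$ be a vertex whose one-hot feature is non-zero at position $j$, where $1\leq j\leq N$. Then $\lambda^{(0)}(v)=e^{j}$, which lies in $Z_{0,r}$ for every $r\geq N$.

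\emph{Inductive step.} Suppose every $\lambda^{(i)}(u)$ lies in some $Z_{i,r_u}$. Fix $v$ with $d=|N(v)|$, and let $r$ be the maximum of $1+2d$ and the $r_u$ for $u\in\{v\}\cup N(v)$. By monotonicity, all of these values lie in $Z_{i,r}$. Then
\[
\lambda^{(i+1)}(v)=\exp\Big(\lambda^{(i)}(v)+2\sum_{u\in N(v)}\lambda^{(i)}(u)\Big)=e^{\lambda^{(i)}(v)}\prod_{u\in N(v)} e^{\lambda^{(i)}(u)}\, e^{\lambda^{(i)}(u)}.
\]
This is a product of exactly $1+2d\leq r$ elements of $\widehat{Z}_{i+1,r}$. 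We pad it with $r-1-2d$ factors equal to $1$. The factor $e^{\lambda^{(i)}(v)}$ is always present and lies in $\widehat{Z}_{i+1,r}$, so not all factors are $1$, even when $d=0$. Therefore $\lambda^{(i+1)}(v)\in Z_{i+1,r}\subseteq Z_{i+1}$.

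The only step needing care is this index bookkeeping: the same $r$ must bound the degree, via the $1+2d$ factors, and also the index of every previous-layer embedding involved. Monotonicity in $r$ resolves this, so I expect no real obstacle. I would also remark that the aggregation weight $2$ only affects the multiplicity of each factor, not membership.
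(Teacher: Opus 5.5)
Your proof is correct and follows the same induction as the paper. The base case reads off $\lambda^{(0)}(v)=e^{j}\in Z_0$, and the inductive step rewrites $\exp\bigl(z_0+2\sum_k z_k\bigr)$ as the product $e^{z_0}\prod_k (e^{z_k})^2$ of elements of $\widehat{Z}_{i+1}$. Your monotonicity-in-$r$ argument (padding products with factors $1$) makes explicit something the paper's proof leaves implicit: a single index $r$ must bound both the number $1+2d$ of factors and the levels of all previous-layer embeddings involved.
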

\begin{proof}
    We proceed by induction on $i$. The base case is immediate as the image of the initialisation function $\lambda^{(0)}$ is $\{e^1, \dots, e^N \} \subseteq Z_{0}$.
    For the inductive step assume that $\lambda^{(i)}(v) \in Z_{i}$ for all $v \in V$.
    Then, aggregation and combination in $\gnn$ amount to
    $\sigma\big(z_0 + 2 (z_1 + \dots + z_n) \big)$ with $z_0, \dots, z_n \in Z_{i}$.
    Applying $\sigma$ yields $(e^{z_0}) \cdot (e^{z_1})^2 \cdot {\dots} \cdot (e^{z_n})^2$,
    which is in $Z_{i+1}$ as $e^{z_0}, \dots, e^{z_n} \in \widehat{Z}_{i+1}$.
\end{proof}

All that is left to show at this point is that 
each finite subset of $Z_i$ is linearly independent over $\mathbb{Q}$ for all $0 \leq i \leq L$.
Indeed, then one need only apply Lemma~\ref{lemma_prelim_lin_indep_implies_wl} noting that
$\widetilde{\lambda}^{(0)}(s) = \sigma((1, \dots, N) \times z)$ and
$\widetilde{\lambda}^{(i)}(s, M) = \sigma(s + \sum2M)$ are injective when $z$ is one-hot, $s \in Z_{i-1}$,
and $M$ is a finite multiset over $Z_{i-1}$
(see appendix for details).
To prove the required linear  independence of finite subsets of $Z_i$, we first observe that it follows from algebraic independence
of finite subsets of $\widehat{Z}_{i} = \bigcup_{r \geq 1} \widehat{Z}_{i, r}$ over $\mathbb{Q}$. For technical reasons, we establish a stronger result:
\begin{lemma}\label{claim_alg_indep_to_lin}
    Let $0 \leq i \leq L$. If each finite subset of $\bigcup_{j=0}^i\widehat{Z}_{j}$ is algebraically independent over $\mathbb{Q}$,
    then each finite subset of $\bigcup_{j=0}^i Z_{j} \cup \{ 1\}$ is linearly independent over $\mathbb{Q}$.
\end{lemma}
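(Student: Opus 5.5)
The idea is that every element of $\bigcup_{j=0}^i Z_j \cup\{1\}$ is a monomial in finitely many elements of $\bigcup_{j=0}^i \widehat{Z}_j$. So a $\mathbb{Q}$-linear relation among distinct such elements is a polynomial relation among elements of $\widehat{Z}$, and algebraic independence forces that polynomial to be zero. The proof therefore reduces to checking that distinct elements of $Z$ come from \emph{distinct} monomials.

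\textbf{Step 1: monomial representations.} Every $z\in Z_{j,r}$ with $j\ge 1$ is by definition $w_1^{m_1}\cdots w_k^{m_k}$ with $w_\ell\in\widehat{Z}_{j,r}$ distinct and $m_\ell\in\mathbb{N}$ not all zero. Elements of $Z_{0,r}$ are $e^m=(e)^m$ with $e\in\widehat{Z}_{0}$ and $m\geq 1$. The element $1$ is the empty monomial. Thus each element of the finite set $S\subseteq \bigcup_{j\le i}Z_j\cup\{1\}$ is a monomial in the variables from a finite set $W\subseteq\bigcup_{j\le i}\widehat{Z}_j$, namely the union of the supports of these representations.

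\textbf{Step 2: distinct elements give distinct exponent vectors.} Fix an exponent vector in $\mathbb{N}^{W}$ for each $s\in S$. By hypothesis $W$ is algebraically independent over $\mathbb{Q}$. Evaluation at $W$ is then injective on polynomials, and in particular two monomials in $W$ with different exponent vectors give different real numbers. Contrapositively, equal values force equal exponent vectors, so distinct elements of $S$ correspond to distinct monomials. If an element admits several representations, the same argument shows they coincide, so the choice is well defined. A subtle point is that $W$ may mix variables from different levels $\widehat{Z}_j$, and some element of $\widehat Z_{j}$ might coincide with one of $\widehat Z_{j'}$. This is harmless, since we work with the set $W$ itself and the hypothesis is stated for the union over all levels.

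\textbf{Step 3: conclude.} Suppose $\sum_{s\in S} q_s s=0$ with $q_s\in\mathbb{Q}$. Then $P=\sum_s q_s x^{\alpha(s)}$ is a polynomial over $\mathbb{Q}$ in the variables indexed by $W$ with $P(W)=0$. Algebraic independence gives $P=0$ as a formal polynomial. Since the exponent vectors $\alpha(s)$ are pairwise distinct by Step 2, each monomial appears only once in $P$, so every $q_s=0$.

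\textbf{Main obstacle.} The only delicate point is Step 2 together with the inclusion of $1$. We must ensure that the constant monomial is not also the representation of some nontrivial element of $Z_j$. This holds because the definition requires ``not all $1$'' and uses exponents $m\ge1$ in $Z_{0,r}$. A nonzero exponent vector gives a non-constant monomial, which by algebraic independence cannot equal $1$. The remaining step is bookkeeping, namely taking a common finite variable set $W$ across levels.
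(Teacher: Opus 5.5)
Your proof is correct and follows essentially the same route as the paper: write each element as a monomial over a finite subset of $\bigcup_{j=0}^i \widehat{Z}_j$, turn the $\mathbb{Q}$-linear relation into a polynomial relation, and note that distinct elements carry distinct exponent vectors (non-zero except for the element $1$), so no cancellation occurs and algebraic independence forces all coefficients to vanish. One small remark: distinctness of the exponent vectors needs no algebraic independence, since equal exponent vectors would already give equal values. Your Step~2 and the ``main obstacle'' are therefore easier than you present them, and algebraic independence is used only once, in Step~3.
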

\begin{proof}
    Suppose for contradiction that some finite  $S \subseteq \bigcup_{j=0}^i Z_{j} \cup  \{ 1\} $
    is linearly dependent over $\mathbb{Q}$.
    Thus, for $S \setminus \{ 1 \} = \{z_1  \dots, z_n \}$ there are 
    $c_1, \dots, c_n, d \in \mathbb{Q}$ not all zero
    such that  $\sum_{j=1}^n c_j z_j + d = 0$.
    Each value $z_j$ is a finite product of elements in
$\bigcup_{\ell=0}^i \widehat Z_\ell$. This follows directly from the
definition for $z_j\in Z_\ell$ with $\ell\geq 1$, while every element
$e^k\in Z_0$ is the product of $k$ copies of the unique element
$e\in\widehat Z_0$. We can thus write
    $z_j = s_1^{b_{1, j}} \cdot {\dots} \cdot s_m^{b_{m, j}}$ where $b_{1, j}, \dots, b_{m, j} \in \mathbb{N}$ not all zero
    and $s_1, \dots, s_m \in \bigcup_{j=0}^i \widehat{Z}_{j}$.
    But then, $\sum_{j=1}^n c_j s_1^{b_{1, j}} \cdot {\dots} \cdot s_m^{b_{m, j}} + d = 0$.
    Hence, by algebraic independence of $s_1, \dots, s_m$ over $\mathbb{Q}$,
    the function $\sum_{j=1}^n c_j x_1^{b_{1, j}} \cdot {\dots} \cdot x_m^{b_{m, j}} + d$ must be the zero polynomial.
    But since $z_1, \dots, z_n$ are distinct, each vector of exponents
    $(b_{1, j}, \dots, b_{m, j})$ must also be distinct.
    In fact, $(b_{1, j}, \dots, b_{m, j}) \neq (0, \dots, 0)$.
    Thus, no cancellations take place in the expression above, resulting
    in a non-zero polynomial which
    contradicts our initial assumption.
\end{proof}
At this point, it remains to show the following.
\begin{lemma}
    Assuming that Schanuel's conjecture is true,
    all finite subsets of $\bigcup_{j=0}^L \widehat{Z}_{j}$ are algebraically independent over $\mathbb{Q}$.
\end{lemma}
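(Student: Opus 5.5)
We prove the statement by induction on $L$, establishing at each level the two properties together: finite subsets of $\bigcup_{j=0}^{i}\widehat{Z}_j$ are algebraically independent over $\mathbb{Q}$, and, by Lemma~\ref{claim_alg_indep_to_lin}, finite subsets of $\bigcup_{j=0}^{i} Z_j\cup\{1\}$ are linearly independent over $\mathbb{Q}$. For the base case, $\widehat{Z}_0=\{e\}$, and $e$ is transcendental by Theorem~\ref{theorem_LW} applied to $z_1=1$. So $\{e\}$ is algebraically independent. Lemma~\ref{claim_alg_indep_to_lin} then makes every finite subset of $Z_0\cup\{1\}$ linearly independent over $\mathbb{Q}$.

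For the inductive step, assume both properties hold up to level $i$, and fix a finite $S\subseteq \bigcup_{j=0}^{i+1}\widehat{Z}_j$. Write $S=S_{\le i}\cup S_{i+1}$, where $S_{i+1}=\{e^{z_1},\dots,e^{z_m}\}$ with $z_1,\dots,z_m\in Z_i$ distinct. The first step is to choose a finite generating set. Let $T\subseteq\bigcup_{j\le i}\widehat{Z}_j$ be finite, containing $S_{\le i}$, and containing every element of $\widehat{Z}_{\le i}$ that occurs as a factor of some $z_k$; such $T$ exists because each $z_k$ is a finite product of elements of $\bigcup_{j\le i}\widehat{Z}_j$. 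Each element of $T$ other than $e$ has the form $e^{w}$ with $w\in Z_{j-1}$ for some $j\le i$, so we adjoin those exponents $w$ as well. Iterating down the levels, we obtain a finite set $W=\{w_1,\dots,w_p\}\subseteq\bigcup_{j<i+1}Z_j\cup\{1\}$ with the following properties: it contains $1$ and $z_1,\dots,z_m$, and every $w\in W$ other than $1$ is a polynomial in the values $e^{w'}$ with $w'\in W$. Here we use that $e^1=e$ and that each element of $Z_j$ is a product of elements $e^{w'}$ with $w'\in Z_{j-1}$, the factors from $Z_0$ being powers of $e$.

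The second step applies Schanuel. By the inductive hypothesis, $W$ is linearly independent over $\mathbb{Q}$. Schanuel's conjecture then gives $\mathsf{trdeg}_{\mathbb{Q}}(E)\ge p$ for $E=\mathbb{Q}(W,e^{W})$. By construction, each $w\in W$ lies in $\mathbb{Q}(e^{W})$, so $E=\mathbb{Q}(e^{W})$. The set $e^{W}$ has exactly $p$ elements, since $\exp$ is injective on reals. Lemma~\ref{lemma_almost_trivial_alg_indep}, applied with $F=\mathbb{Q}$ and the set $e^{W}$, shows that $e^{W}$ is algebraically independent over $\mathbb{Q}$. Finally, $S\subseteq e^{W}\cup S_{\le i}$, and $S_{\le i}\subseteq T$. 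Here $T$ consists of $e=e^1$ and of values $e^{w}$ with $w\in W$, so $S\subseteq e^{W}$ and $S$ is algebraically independent. Lemma~\ref{claim_alg_indep_to_lin} then propagates linear independence to level $i+1$.

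The main obstacle is the bookkeeping in the first step. The set $W$ must be closed downward, meaning every exponent needed to express its elements is itself included, so that $\mathbb{Q}(W,e^W)=\mathbb{Q}(e^W)$ holds exactly. At the same time, $W$ must stay inside sets already known to be linearly independent, which includes $1\in W$ generating $e$. The same closure must make every element of $S_{\le i}$ an exponential of some element of $W$. To handle this, I would first define the downward closure formally as the smallest finite set containing $\{1,z_1,\dots,z_m\}$ together with the exponents of $S_{\le i}$, closed under taking exponents of the factors in the product representations. I would then check that this closure is finite and lies in $\bigcup_{j\le i}Z_j\cup\{1\}$.
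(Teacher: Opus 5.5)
Your proof is correct and follows the paper's argument. The paper likewise applies Schanuel's conjecture to a finite, downward-closed, $\mathbb{Q}$-linearly independent set containing $1$, namely $\bigcup_{j=0}^{i} Z_{j,r}\cup\{1\}$, whose exponentials are exactly $\bigcup_{j=0}^{i+1}\widehat{Z}_{j,r}$; it notes that adjoining the exponents does not enlarge the field generated by the exponentials, and concludes with Lemma~\ref{lemma_almost_trivial_alg_indep}, just as you do.

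The only difference is bookkeeping. The paper uses the canonical truncations indexed by $r$ and carries a pairwise-disjointness invariant in order to count $|S|$. You build an ad hoc downward closure $W$ instead and obtain $|e^{W}|=|W|$ directly from injectivity of $\exp$, which makes that disjointness invariant unnecessary.
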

\begin{proof}
    Assuming Schanuel's conjecture, we prove by induction on $i$ that, for $0 \leq i \leq L$ and $r \geq 1$,
    \begin{enumerate}
        \item $\bigcup_{j=0}^{i} \widehat{Z}_{j, r}$ is algebraically independent over $\mathbb{Q}$ and
        \item the sets $\widehat{Z}_{0, r}, \dots, \widehat{Z}_{i, r}$ are pairwise disjoint; and so are $\{1\}, Z_{0, r}, \dots, Z_{i, r}$.
    \end{enumerate}
    This clearly implies the intended result.
    For the base case note that $\bigcup_{j=0}^0 \widehat{Z}_{j} = \{e\}$.
    Thus Condition~1 holds trivially as $e$ is a transcendental number.
    Condition 2 follows from the above and the fact that $Z_{0, r}$ are positive powers of the transcendental number $e$,
    thus ensuring $1 \not\in Z_{0, r}$.
    
    Now suppose Conditions~1 and~2 hold for $i \geq 0$.
    We show that the same is true for $i+1$.
    By Condition~1 of our induction hypothesis, $\bigcup_{j=0}^i \widehat{Z}_{j, r}$ is algebraically independent over $\mathbb{Q}$.
    Thus by Lemma~\ref{claim_alg_indep_to_lin}, the set
    $S = \bigcup_{j=0}^{i} Z_{j, r} \cup \{ 1 \}$ is linearly independent over $\mathbb{Q}$.
    Applying Schanuel's conjecture to $S$ yields
    $F=\mathbb{Q}(S \cup \bigcup_{j=0}^{i+1} \widehat{Z}_{j, r})$ such that
    $\mathsf{trdeg}_{\mathbb{Q}}(F) \geq |S|$.
    We next analyse the size of $S$.
    Notice, first, that $|Z_{j, r}| = |\widehat{Z}_{j+1, r}|$ for all $j$.
    Moreover, by Condition~2 of our induction hypothesis, $\{1\}, Z_{0, r}, \dots, Z_{i, r}$
    are all pairwise disjoint. Thus,
    \begin{equation*}
        |S| = 
        |\bigcup_{j=0}^{i} Z_{j, r} \cup \{ 1 \}| = 
        \sum_{j=0}^i |Z_{j, r}| +1 =
        \sum_{j=1}^{i+1} |\widehat{Z}_{j, r}| + |\widehat{Z}_{0, r}| = \sum_{j=0}^{i+1} |\widehat{Z}_{j, r}|.
    \end{equation*}
    Write $E = \mathbb{Q}(\bigcup_{j=0}^{i+1} \widehat{Z}_{j, r})$.
    Notice that $S \subseteq E$ because elements of $S$ are monomials over $\bigcup_{j=0}^{i} \widehat{Z}_{j, r}$ or $1$.
    Hence, $E = F$ and
    $\mathsf{trdeg}_{\mathbb{Q}}(E) \geq |S| = \sum_{j=0}^{i+1} |\widehat{Z}_{j, r}|$.
    With Lemma~\ref{lemma_almost_trivial_alg_indep}, 
    we obtain that $\bigcup_{j=0}^{i+1} \widehat{Z}_{j, r}$ is algebraically independent over $\mathbb{Q}$,
    thus establishing Condition 1.

   The $\widehat{Z}_{0, r}, \dots, \widehat{Z}_{i+1,
      r}$
    part of  Condition~2 follows from the facts that the sets $\{1\}, Z_{0, r}, \dots, Z_{i, r}$
    are pairwise disjoint, due to the induction hypothesis, and exponentiation is injective.
    Now, suppose that $Z_{i+1, r} \cap Z_{j, r} \neq \emptyset$ for $0
    \leq j \leq i$, 
    with $z$ a common element. Clearly, $z$ is then a monomial over
    $\widehat{Z}_{i+1, r}$ as well as over $\widehat{Z}_{j, r}$.
    But by our argument above, $\widehat{Z}_{i+1, r} \cap \widehat{Z}_{j, r} = \emptyset$.
    Additionally, by Condition~1, already established above for $i{+}1$, $\widehat{Z}_{i+1, r} \cup \widehat{Z}_{j, r}$ is algebraically independent over~$\mathbb{Q}$.
    Hence, no monomial over $\widehat{Z}_{i+1, r}$ equates to a monomial over $\widehat{Z}_{j, r}$, a contradiction. 
    Finally, $Z_{i+1, r} \cap \{1\} = \emptyset$ follows from the fact that
    $1$ is algebraic whilst the elements of $Z_{i+1, r}$ are not.
\end{proof}

\section{The Transcendence of GNN Computations}\label{section_converse_direction}

The aim of this section is to provide a partial converse to Theorem~\ref{thm:main}.
In doing so we argue that the connection between GNN expressivity
and Schanuel's conjecture goes both ways, and is not merely an artefact of our technique.
To achieve this, we work in a slightly more expressive setting that admits virtual
nodes. These are auxiliary vertices commonly used in GNNs to globally aggregate information, see for instance \cite{gilmer2017neural,hu2020open}. We use a variant tailored to our purposes  in which virtual nodes need not be globally connected and, crucially, 
use a modified update function that does not depend on their own state.
We also make the additional assumption that the embedding computed by the final GNN layer is of dimension~1.

More precisely, a \emph{featured graph with virtual nodes} is a triple
$G=(V,E,\lambda)$, where
$
\lambda:V\to\mathbb{Q}_{\geq 0}^{N}\cup\{\mathsf{virtual}\}.
$
Vertices $v$ with $\lambda(v)=\mathsf{virtual}$ are called \emph{virtual},  all other vertices are  \emph{ordinary}.
We extend our definition of GNNs accordingly. An \emph{$L$-layer GNN with virtual nodes} is a tuple
$$
\gnn=
\bigl(
\sigma,(C_i)_{1\leq i\leq L},(A_i)_{1\leq i\leq L},
(b_i)_{1\leq i\leq L},
(A'_i)_{1\leq i\leq L},(b'_i)_{1\leq i\leq L},
I
\bigr),
$$
where $\sigma,C_i,A_i,b_i$ and $I$ are as in the definition of  GNNs with dimensions $d_0,\ldots,d_L$,  
$
A'_i\in\mathbb{Q}^{d_i\times d_{i-1}}$
and
$b'_i\in\mathbb{Q}^{d_i}
$ for $1\leq i\leq L$.
For every node $v$ with
$\lambda(v)=\mathsf{virtual}$, the initial embedding $\lambda^{(0)}(v)$
is some vector shared by all virtual nodes, whilst
$
\lambda^{(i)}(v)
=
\sigma\big(
A'_i\sum_{u\in N(v)}\lambda^{(i-1)}(u)+b'_i
\big).
$

We also consider a \emph{1-WL test over graphs with virtual nodes}: for vertices 
$u$ with $\lambda(u) = \mathsf{virtual}$, we define
$\mu^{(i+1)}(u) = \textsc{hash}(\mathsf{virtual}, \msl \mu^{(i)}(v) \mid v \in N(u) \msr)$ where $\mathsf{virtual}$
is a distinguished value that  is never assigned to an ordinary vertex.
The following is routine to prove, see the appendix.
\begin{restatable}{proposition}{propWLovervirtual}\label{prop_1_WL_over_virtual}
    Suppose $(G, v)$ and $(H, u)$ are two featured pointed graphs with virtual nodes.
    Then, for all $L$-layered GNNs $\gnn$,
     $\mu^{(L)}(v) = \mu^{(L)}(u)$ implies
    $\lambda^{(L)}(v) = \lambda^{(L)}(u)$ and
    $\mu_G = \mu_H$ implies $\lambda_G = \lambda_H$.
\end{restatable}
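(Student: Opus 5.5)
The plan is a routine induction on the layer index, in the style of the standard argument of \cite{xu2018howPowerful, WLGoesNeural}, extended to handle virtual nodes. The vertex-level claim is a statement about vertices of two different graphs. So we consider the disjoint union of $G$ and $H$ (with virtual nodes kept virtual), and we show for every $0 \leq i \leq L$ the statement $(\ast)_i$: for all vertices $x,y$ of $G$ or $H$, $\mu^{(i)}(x)=\mu^{(i)}(y)$ implies $\lambda^{(i)}(x)=\lambda^{(i)}(y)$. Since both 1-WL and the GNN compute vertex colours and embeddings locally, these values are the same in the union as in the separate graphs.

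For the base case $i=0$, take $x,y$ with $\mu^{(0)}(x)=\mu^{(0)}(y)$. Because $\mathsf{hash}$ is injective and $\mathsf{virtual}$ is never assigned to an ordinary vertex, there are two possibilities.
\begin{itemize}
\item Both vertices are ordinary with $\lambda(x)=\lambda(y)$. Then $\lambda^{(0)}(x)=\widetilde{\lambda}^{(0)}(\lambda(x))=\lambda^{(0)}(y)$.
\item Both vertices are virtual. Then they receive the shared initial vector.
\end{itemize}
For virtual nodes at round $0$ we read the initial colour as $\mathsf{hash}(\mathsf{virtual})$, which matches the paper's convention.

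For the inductive step, assume $(\ast)_i$ and suppose $\mu^{(i+1)}(x)=\mu^{(i+1)}(y)$. Injectivity of $\mathsf{hash}$ again yields two cases.
\begin{itemize}
\item \emph{Both vertices are ordinary.} Then $\mu^{(i)}(x)=\mu^{(i)}(y)$ and the multisets $\msl\mu^{(i)}(u)\mid u\in N(x)\msr$ and $\msl\mu^{(i)}(u)\mid u\in N(y)\msr$ coincide. The multiset equality gives a bijection $\pi:N(x)\to N(y)$ preserving $\mu^{(i)}$. By $(\ast)_i$, $\pi$ also preserves $\lambda^{(i)}$, and moreover $\lambda^{(i)}(x)=\lambda^{(i)}(y)$. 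Hence the arguments of $\widetilde{\lambda}^{(i+1)}$ agree, so $\lambda^{(i+1)}(x)=\lambda^{(i+1)}(y)$.
\item \emph{Both vertices are virtual.} Only the neighbour multisets coincide, and the same bijection argument gives equal sums $\sum_{u\in N(x)}\lambda^{(i)}(u)=\sum_{u\in N(y)}\lambda^{(i)}(u)$. The virtual update $\sigma(A'_{i+1}\sum\cdot+b'_{i+1})$ does not depend on the node's own state, so the embeddings agree.
\end{itemize}
A mixed case cannot occur, because the first hash argument for virtual nodes is the reserved value $\mathsf{virtual}$. This is exactly where the design choice (virtual updates ignoring their own state, matched by the WL variant ignoring $\mu^{(i)}$ of the virtual node) is needed.

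For the graph level, $\mu_G=\mu_H$ gives, by injectivity of $\mathsf{hash}$, a bijection $\rho:V_G\to V_H$ with $\mu^{(L)}(\rho(x))=\mu^{(L)}(x)$. By $(\ast)_L$ applied in the disjoint union, $\rho$ preserves $\lambda^{(L)}$. Summing over all vertices, virtual ones included as in the readout, gives $\lambda_G=\lambda_H$.

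There is no genuine obstacle here. The only point requiring care is the bookkeeping: the colours are compared across two graphs, which the disjoint-union argument handles, and the virtual and ordinary cases must be separated cleanly using the distinguished $\mathsf{virtual}$ value.
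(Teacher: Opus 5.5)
Your proof is correct, but it is organised differently from the paper's. The paper uses two inductions. It first proves that the virtual-node variant of 1-WL is still refining, i.e.\ $\mu^{(i+1)}(v)=\mu^{(i+1)}(u)$ implies $\mu^{(i)}(v)=\mu^{(i)}(u)$. This is non-trivial for virtual vertices, since their round-$(i{+}1)$ colour omits their own round-$i$ colour; it is obtained by pushing the neighbour bijection down one round via the induction hypothesis. Only then does the paper run the induction from colours to embeddings. In that step it uses $\mu^{(i)}(v)=\mu^{(i)}(u)$ together with equal neighbour multisets, uniformly for ordinary and virtual vertices.

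You instead split the inductive step by vertex type. You observe that for virtual vertices equality of their own states is never needed, because the virtual update $\sigma(A'_{i+1}\sum\cdot+b'_{i+1})$ depends only on the neighbour sum. This removes the auxiliary refinement induction and gives a shorter, more direct argument. The mixed ordinary/virtual case is handled immediately by injectivity of $\mathsf{hash}$ and the reserved value $\mathsf{virtual}$.

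What the paper's route buys is the refinement property as a standalone sanity fact about the modified test: round-$L$ colour equality certifies equality at all earlier rounds, even though virtual colours discard their own history. For the proposition as stated this property is not needed, and your argument shows exactly why. Your disjoint-union device is harmless bookkeeping that the paper leaves implicit. Since colours come from a fixed global $\mathsf{hash}$ and both colours and embeddings depend only on local neighbourhoods, comparing vertices across different graphs needs no further justification.
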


We are now in a position to establish the advertised converse.
\begin{restatable}{theorem}{propConverse}\label{prop:converse}
    Let $\gnn$  be an $L$-layer GNN that uses  $\sigma \in \{ \exp, \sinh, \cosh \}$
    as activation
    and produces 1-dimensional vertex embeddings in its final layer.
    Further assume that $\lambda_{G_1} = \lambda_{G_2}$ if and only if $\mu_{G_1} = \mu_{G_2}$
    for all featured graphs with virtual nodes $G_1,G_2$.
    Then for every featured graph $G=(V,E,\lambda)$ the values $\big\{ \exp\big(A_L'\lambda^{(L-1)}(v) \big) \mid v \in V  \big\}$
    are algebraically independent over $\mathbb{Q}$. 
\end{restatable}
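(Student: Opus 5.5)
Let $w_v = A'_L\lambda^{(L-1)}(v)$ and $x_v=\exp(w_v)$ for $v\in V$. Arguing by contraposition, I will assume the family $\{x_v\}$ is algebraically dependent over $\mathbb{Q}$. From such a dependence I will build two featured graphs with virtual nodes that 1-WL distinguishes but $\gnn$ does not, which contradicts the hypothesis.

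\textbf{Step 1: a monomial identity.} Take a nonzero polynomial $P\in\mathbb{Z}[y_v: v\in V]$ with $P(\bar x)=0$, after clearing denominators. Write $P=P^+-P^-$ with $P^+,P^-$ having nonnegative integer coefficients and no common monomials. Then
\[
\sum_{\alpha} p^+_\alpha \prod_v x_v^{\alpha_v}=\sum_{\beta} p^-_\beta\prod_v x_v^{\beta_v},
\]
and since $\prod_v x_v^{\alpha_v}=\exp\big(\sum_v\alpha_v w_v\big)$, each monomial is the exponential of a nonnegative integer combination of the $w_v$. This is exactly what a virtual node computes at layer $L$. Suppose a virtual node $u_\alpha$ has as out-neighbours $\alpha_v$ copies of $v$; copies are needed because neighbourhoods are sets. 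Its pre-activation is then $\sum_v\alpha_v w_v+b'_L$, and for $\sigma=\exp$ its output is $e^{b'_L}\prod_v x_v^{\alpha_v}$.

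\textbf{Step 2: the two graphs.} For each $v\in V$ and each needed multiplicity, I take disjoint copies of the rooted unfolding of $G$ around $v$, so that each copy still has $\lambda^{(L-1)}$-value equal to $\lambda^{(L-1)}(v)$. Because $\lambda^{(L-1)}$ depends only on the $(L-1)$-round 1-WL color, isomorphic copies of $G$ itself work too. $G_1$ consists of $p^+_\alpha$ virtual nodes of type $u_\alpha$ for every $\alpha$, and $G_2$ consists of $p^-_\beta$ virtual nodes of type $u_\beta$ for every $\beta$. In both graphs I add the same large pool of copies of $G$ serving as targets, and I adjust this pool so that $G_1$ and $G_2$ contain identical multisets of ordinary vertices, each with the same $\lambda^{(L)}$ values. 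The readout is then $\lambda_{G_1}-\lambda_{G_2}=e^{b'_L}(P^+(\bar x)-P^-(\bar x))=0$.

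For $\sinh$ and $\cosh$, I would write $\sigma(t)=\tfrac12(e^t\pm e^{-t})$. I would then symmetrize by including, together with each virtual node, a companion whose contribution produces the $e^{-t}$ term. Alternatively, I would multiply $P$ by suitable monomials so that a relation among the $x_v^{\pm1}$ is realized by sums of $\sigma$ values. I expect this to be the technically fiddly case, and I would handle it by choosing $P$ so that its antisymmetric part matches.

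\textbf{Step 3: 1-WL distinguishes them.} The ordinary parts coincide, so $\mu_{G_1}\neq\mu_{G_2}$ must come from the virtual nodes. The $L$-round color of $u_\alpha$ encodes the multiset of $(L-1)$-colors of its neighbours, that is, $\alpha$ in terms of colors. Since $P^+$ and $P^-$ share no monomial, the multisets of virtual colors differ, provided distinct vertices of $G$ with equal $(L-1)$-colors are handled consistently.

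I expect the \textbf{main obstacle} to be exactly this point: if $v\neq v'$ share a 1-WL color, then $x_v=x_{v'}$, and the relation $x_v-x_{v'}=0$ is a trivial dependence that no pair of graphs can witness. The statement as given must therefore be read with the $x_v$ indexed by distinct values, or by distinct colors. The hypothesis makes $\lambda^{(L-1)}$ injective on colors, as equal colors give equal values and vice versa. I would therefore restrict to one representative per $(L-1)$-color, and then argue that $P^+\neq P^-$ as polynomials in distinct colors translates into distinct multisets of virtual-node colors. The remaining care is making sure no ordinary vertex's layer-$L$ value changes when virtual nodes are added; this holds because edges point from virtual nodes to ordinary ones and $N(\cdot)$ counts only out-neighbours.
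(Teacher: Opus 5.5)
Your argument for $\sigma=\exp$ is correct and is essentially the paper's. You realize each monomial of $P^+$ and $P^-$ by a virtual node pointing into copies of $(G,v_k)$, balance the ordinary vertices, and use that disjoint monomial sets give different multisets of virtual-node colors. The indexing worry has a simpler fix than the one you propose. Enumerate the distinct values $z_1,\dots,z_n$ and pick one vertex $v_k$ per value. Then use only the fact that distinct values force distinct $(L-1)$-colors, which is the 1-WL upper bound of Proposition~\ref{prop_1_WL_over_virtual}. The converse injectivity you assert is not needed.

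The genuine gap is the case $\sigma\in\{\sinh,\cosh\}$, which the statement covers but your proposal only gestures at. Write $\bar w=(w_{v_1},\dots,w_{v_n})$ and $\bar z^{-1}=(z_1^{-1},\dots,z_n^{-1})$. Every virtual node has pre-activation $c\cdot\bar w+b'_L$ with $c\in\mathbb{N}^n$ and the same bias. Hence any integer combination of virtual-node outputs has the form $\tfrac12\bigl(e^{b'_L}A(\bar z)\mp e^{-b'_L}A(\bar z^{-1})\bigr)$ for some integer polynomial $A$, with $-$ for $\sinh$ and $+$ for $\cosh$. To make this vanish you need $A$ to vanish at both $\bar z$ and $\bar z^{-1}$, and $P(\bar z)=0$ says nothing about $P(\bar z^{-1})$. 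Multiplying $P$ by monomials cannot change whether it vanishes at $\bar z^{-1}$. ``Companion'' nodes cannot help either: each node contributes both exponentials, so none can isolate an $e^{-t}$ term or realize a negative exponent.

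The missing idea is to pass to $S(\bar x)=N(\bar x)\,P(\bar x)\,P(\bar x^{-1})$, where $N$ is a monomial clearing all negative exponents. Then $S(\bar z)=S(\bar z^{-1})=0$. Moreover $S\neq 0$, because $P(\bar x)$ and $P(\bar x^{-1})$ are nonzero elements of the integral domain $\mathbb{Z}[x_1^{\pm1},\dots,x_n^{\pm1}]$. Writing $S=\sum_i a'_i\,\bar x^{c_i}$ gives $0=\tfrac12\bigl(e^{b'_L}S(\bar z)\mp e^{-b'_L}S(\bar z^{-1})\bigr)=\sum_i a'_i\,\sigma(c_i\cdot\bar w+b'_L)$. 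This is a signed sum of virtual-node outputs, and the same two-graph construction, now driven by the distinct monomials of $S$, yields the contradiction. Your remark about choosing $P$ ``so that its antisymmetric part matches'' neither produces such a polynomial nor checks that it is nonzero.
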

\begin{proof}
     We only treat $\sigma = \exp$ here.
    The cases  $\sigma \in \{ \sinh, \cosh \}$
    are 
    relegated to the appendix.
    Observe that, for every  featured graph with virtual nodes $G=(V,E,\lambda)$ and every $u \in V$ with $\lambda(u) = \mathsf{virtual}$, 
$
\lambda^{(L)}(u)
=
\exp\big(
A'_L\sum_{v\in N(u)}\lambda^{(L-1)}(v)+b'_L
\big)
=
\exp(b'_L)
\prod_{v\in N(u)}
\exp\left(A'_L\lambda^{(L-1)}(v)\right).
$
Thus, the final embedding of a virtual node is $\exp(b'_L)$ times
a monomial in the values
$
\mathcal{V}=\big\{ \exp\big(A_L'\lambda^{(L-1)}(v) \big) \mid v \in V \big\}$ where the exponent of each value records how often it occurs among
the neighbors of the virtual node.
Towards a contradiction, assume that for some graph $G=(V,E,\lambda)$ this set is not
algebraically independent over $\mathbb{Q}$.
We construct two graphs $G_Q$ and $G_R$ that are distinguishable
by $L$-round 1-WL but not by $\gnn$, contradicting our assumption.  
        
    Let $z_1, \dots, z_n$ be an enumeration of $\mathcal{V}$.
    For $1 \leq i \leq n$ choose a vertex $v_i \in V$
    such that $z_i =
    \exp(A_L'\lambda^{(L-1)}(v_i))$. 
      Since the $z_i$ are pairwise distinct, so are the embeddings
    $\lambda^{(L-1)}(v_i)$.
   %
By~Proposition~\ref{prop_1_WL_over_virtual}, the pointed graphs $(G, v_1), \dots, (G, v_n)$
    are all pairwise distinguishable by the $(L{-}1)$-round 1-WL algorithm.
    Now, since $\{z_1, \dots, z_n\}$ is algebraically dependent over $\mathbb{Q}$,
    there is a non-zero polynomial $P(x_1, \dots, x_n)$
    such that $P(z_1, \dots, z_n) = 0$. Multiplying by a common denominator, we
    may assume that $P$ has integer coefficients. 
    %
    %
    Moving all terms with negative coefficients to the right hand side, we obtain a polynomial equality $Q(x_1, \dots, x_n) = R(x_1, \dots, x_n)$ where $Q$ and $R$ have disjoint sets of monomials and non-negative integer coefficients. 
    Exchanging $Q$ and $R$ if necessary, we may assume that $R$
    has no constant term.
%
    Multiplying by $\exp(b'_L)$ and writing
    out the monomials gives
    \begin{equation*}
        \sum_{i=1}^{q}
        \alpha_i
        z_1^{\beta_{i,1}}\cdots z_n^{\beta_{i,n}}
        \exp(b'_L)
        +
        \alpha_{q+1}\exp(b'_L)
        =
        \sum_{j=1}^{r}
        \gamma_j
        z_1^{\delta_{j,1}}\cdots z_n^{\delta_{j,n}}
        \exp(b'_L),
    \end{equation*}
    where $\alpha_i,\gamma_j\in\mathbb{N}\setminus\{0\}$, with the
    exception that $\alpha_{q+1}$ may be zero, and the exponent
    vectors
    \(
        (\beta_{i,1},\ldots,\beta_{i,n})\)
        and
        \( (\delta_{j,1},\ldots,\delta_{j,n})
    \)
    are not all zero entries.
    The desired graphs $G_Q$ and $G_R$  realize the two
    sides of this equality. A monomial
    $z_1^{\beta_{i,1}}\cdots z_n^{\beta_{i,n}}$ is realized by a
    virtual node having, for every $k$, $\beta_{i,k}$ neighbors in
    fresh copies of $(G,v_k)$. By the initial observation in this
    proof, its final embedding is
    \(
        \exp(b'_L)
        z_1^{\beta_{i,1}}\cdots z_n^{\beta_{i,n}}.
    \)
    An isolated virtual node, in turn, has final embedding
    $\exp(b'_L)$ and thus realizes the constant monomial.
    To ensure that all vertices belonging to the copies of $G$
    make the same contribution on both sides, we additionally put
    on each side copies corresponding to the monomials realized on
    the other side. Formally, $G_Q$ is defined as follows:
    \begin{enumerate}
        \item for $1\leq i\leq q$, add $\alpha_i$ disjoint copies
        of the graph constructed as follows:
        \begin{enumerate}
            \item for $1\leq k\leq n$, add $\beta_{i,k}$ fresh
            copies of the pointed graph $(G,v_k)$;
            \item create a fresh virtual node $u$ and add an edge
            from $u$ to the distinguished vertex in each copy
            added in the previous step;
        \end{enumerate}
        \item add $\alpha_{q+1}$ fresh isolated virtual vertices;
        \item for $1\leq j\leq r$ and $1\leq k\leq n$, add
        $\gamma_j\delta_{j,k}$ disjoint copies of $(G,v_k)$.
    \end{enumerate}
    The graph $G_R$ is defined symmetrically: interchange
    $q$ and $r$ as well as $(\alpha,\beta)$ and $(\gamma,\delta)$,
    and omit Step~2. Notice that the newly added edges point from
    virtual nodes into the copies of $G$ and thus do not change the
    neighborhoods, and hence the GNN embeddings or 1-WL colors, of
    vertices inside these copies. We argue in the appendix that $G_Q$ and $G_R$ have the desired properties.
%
\end{proof}
Theorem~\ref{prop:converse}
immediately yields the following restriction of
 Schanuel's conjecture
 to GNN outputs.
\\ 

\begin{corollary}
\label{cor:restricted-schanuel}
Let $\gnn$ be as in Theorem~\ref{prop:converse}. Then,
for every featured graph $G=(V,E,\lambda)$ and all distinct
\(
    x_1,\ldots,x_n
    \in
    \{A'_L\lambda^{(L-1)}(v)\mid v\in V\},
\)
we have
\(
    \mathsf{trdeg}_{\mathbb Q}
    \mathbb Q(x_1,\ldots,x_n,e^{x_1},\ldots,e^{x_n})
    \geq n.
\)
Consequently, Schanuel's conjecture holds when restricted to
$\mathbb Q$-linearly independent tuples arising in this way.
\end{corollary}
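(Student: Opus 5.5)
The plan is to derive the corollary directly from Theorem~\ref{prop:converse}; the only substantive step is passing from algebraic independence of exponentials to a transcendence degree bound. Fix a featured graph $G=(V,E,\lambda)$ and distinct values $x_1,\ldots,x_n$ from $\{A'_L\lambda^{(L-1)}(v)\mid v\in V\}$. Because the final layer has dimension~1, $A'_L$ is a row vector and each $x_k$ is a real number. Choose vertices $v_1,\ldots,v_n$ with $x_k=A'_L\lambda^{(L-1)}(v_k)$. Since $\exp$ is injective on $\mathbb{R}$, the numbers $e^{x_1},\ldots,e^{x_n}$ are pairwise distinct elements of the set $\{\exp(A'_L\lambda^{(L-1)}(v))\mid v\in V\}$. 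By Theorem~\ref{prop:converse}, that set is algebraically independent over $\mathbb{Q}$, and so is every subset of it. In particular $\{e^{x_1},\ldots,e^{x_n}\}$ is an algebraically independent set with exactly $n$ elements.

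Next, let $E=\mathbb{Q}(x_1,\ldots,x_n,e^{x_1},\ldots,e^{x_n})$. This is a finitely generated extension of $\mathbb{Q}$ containing the algebraically independent set $\{e^{x_1},\ldots,e^{x_n}\}$. Every algebraically independent subset extends to a maximal one, that is, to a transcendence basis. All transcendence bases have the same cardinality, as recalled in the preliminaries, so $\mathsf{trdeg}_{\mathbb Q}(E)\geq n$. This is the claimed inequality, and it holds even without assuming that the $x_k$ are linearly independent over $\mathbb{Q}$.

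For the final sentence, note that Schanuel's conjecture applied to a $\mathbb{Q}$-linearly independent tuple $(x_1,\ldots,x_n)$ asserts exactly this inequality. Any such tuple with entries in the given set consists of distinct values, since linear independence forces distinctness. Hence the inequality just proved is the conjecture's conclusion for every such tuple, which gives the restricted form of Schanuel's conjecture.

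I expect no real obstacle. The points to state carefully are that distinctness of the $x_k$ is what makes the $n$ exponentials distinct, so the independent set really has $n$ elements, and that extending an algebraically independent set to a transcendence basis is the standard fact that converts independence into the degree bound.
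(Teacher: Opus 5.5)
Your proposal is correct and follows the paper's proof: both use Theorem~\ref{prop:converse} to get algebraic independence of $e^{x_1},\ldots,e^{x_n}$ over $\mathbb{Q}$ and then pass to the larger field. The paper does this via $\mathsf{trdeg}_{\mathbb Q}\mathbb{Q}(e^{x_1},\ldots,e^{x_n})=n$ and monotonicity under field inclusion, while you extend the independent set to a transcendence basis, which is the same fact. Your remark that distinctness of the $x_k$ makes the $n$ exponentials distinct is a detail the paper leaves implicit.
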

\begin{proof}
Let $x_1,\ldots,x_n$ be as in the statement. By Theorem~\ref{prop:converse},
$e^{x_1},\ldots,e^{x_n}$ are algebraically independent over
$\mathbb{Q}$. Hence
$
\operatorname{trdeg}_{\mathbb{Q}}
\mathbb{Q}(e^{x_1},\ldots,e^{x_n})=n.
$
Since $
\mathbb{Q}(e^{x_1},\ldots,e^{x_n})
\subseteq
\mathbb{Q}(x_1,\ldots,x_n,e^{x_1},\ldots,e^{x_n}),
$
the claimed inequality follows, which is exactly the conclusion of  Schanuel's conjecture. 
\end{proof}

\section{Experiments}

%
\begin{figure*}[t]
\centering
\begin{tikzpicture}
\begin{groupplot}[
  group style={group size=3 by 1, horizontal sep=1.4em},
  width=0.252\textwidth,
  height=0.18\textwidth,
  scale only axis,
  xmin=0.5, xmax=10.5,
  ymin=0.5, ymax=6.5,
  y dir=reverse,
  xtick={1,...,10},
  ytick={1,2,3,4,5,6},
  yticklabels={sigmoid,tanh,sinh,cosh,exp,ReLU},
  xlabel={Aggregation-weight $A$},
  tick label style={font=\scriptsize},
  x tick label style={font=\tiny,rotate=90,anchor=east},
  label style={font=\small},
  title style={font=\small},
  point meta min=0,
  point meta max=1,
  colormap={inferno}{
    rgb255(0cm)=(252,255,164);
    rgb255(1cm)=(246,215,70);
    rgb255(2cm)=(252,165,10);
    rgb255(3cm)=(243,120,25);
    rgb255(4cm)=(221,81,58);
    rgb255(5cm)=(188,55,84);
    rgb255(6cm)=(147,38,103);
    rgb255(7cm)=(106,23,110);
    rgb255(8cm)=(66,10,104);
    rgb255(9cm)=(22,11,57);
    rgb255(10cm)=(0,0,4)
  },
]

\nextgroupplot[
  title={Self-weight $C=0.01$},
  xticklabels={0.01,0.02,0.03,0.04,0.05,0.06,0.07,0.08,0.09,0.10},
]
\addplot[
  matrix plot*,
  mesh/cols=10,
  point meta=explicit,
  visualization depends on={\thisrow{z} \as \cellvalue},
  nodes near coords={%
    \pgfmathparse{equal(\cellvalue,1)}%
    \ifnum\pgfmathresult=1\relax
      $\checkmark$%
    \fi
  },
  every node near coord/.append style={
    text=white,
    font=\scriptsize,
    anchor=center,
  },
  draw=white,
  line width=0.15pt,
] table[meta=z] {
x y z
1 1 0.342
2 1 1.000
3 1 1.000
4 1 1.000
5 1 1.000
6 1 1.000
7 1 1.000
8 1 1.000
9 1 1.000
10 1 1.000
1 2 0.352
2 2 1.000
3 2 1.000
4 2 1.000
5 2 1.000
6 2 1.000
7 2 0.998
8 2 1.000
9 2 1.000
10 2 1.000
1 3 0.358
2 3 0.998
3 3 0.998
4 3 0.998
5 3 1.000
6 3 0.998
7 3 1.000
8 3 1.000
9 3 1.000
10 3 1.000
1 4 0.336
2 4 0.984
3 4 0.984
4 4 0.984
5 4 0.984
6 4 0.984
7 4 0.984
8 4 0.984
9 4 0.984
10 4 0.984
1 5 0.342
2 5 1.000
3 5 1.000
4 5 1.000
5 5 1.000
6 5 1.000
7 5 1.000
8 5 1.000
9 5 0.998
10 5 1.000
1 6 0.022
2 6 0.232
3 6 0.424
4 6 0.472
5 6 0.686
6 6 0.574
7 6 0.646
8 6 0.636
9 6 0.784
10 6 0.706
};

\nextgroupplot[
  title={Self-weight $C=0.1$},
  xticklabels={0.1,0.2,0.3,0.4,0.5,0.6,0.7,0.8,0.9,1.0},
  yticklabels={},
]
\addplot[
  matrix plot*,
  mesh/cols=10,
  point meta=explicit,
  visualization depends on={\thisrow{z} \as \cellvalue},
  nodes near coords={%
    \pgfmathparse{equal(\cellvalue,1)}%
    \ifnum\pgfmathresult=1\relax
      $\checkmark$%
    \fi
  },
  every node near coord/.append style={
    text=white,
    font=\scriptsize,
    anchor=center,
  },
  draw=white,
  line width=0.15pt,
] table[meta=z] {
x y z
1 1 0.346
2 1 1.000
3 1 1.000
4 1 0.998
5 1 0.998
6 1 0.998
7 1 1.000
8 1 1.000
9 1 1.000
10 1 1.000
1 2 0.358
2 2 1.000
3 2 0.998
4 2 1.000
5 2 1.000
6 2 1.000
7 2 1.000
8 2 1.000
9 2 1.000
10 2 1.000
1 3 0.358
2 3 1.000
3 3 1.000
4 3 1.000
5 3 1.000
6 3 0.992
7 3 0.988
8 3 0.976
9 3 0.942
10 3 0.886
1 4 0.344
2 4 1.000
3 4 0.990
4 4 0.936
5 4 0.790
6 4 0.738
7 4 0.732
8 4 0.724
9 4 0.686
10 4 0.594
1 5 0.348
2 5 0.986
3 5 0.838
4 5 0.738
5 5 0.732
6 5 0.722
7 5 0.650
8 5 0.510
9 5 0.302
10 5 0.146
1 6 0.014
2 6 0.234
3 6 0.386
4 6 0.506
5 6 0.702
6 6 0.586
7 6 0.690
8 6 0.628
9 6 0.776
10 6 0.660
};

\nextgroupplot[
  title={Self-weight $C=1$},
  xticklabels={1,2,3,4,5,6,7,8,9,10},
  yticklabels={},
  colorbar,
  colorbar style={
    y dir=normal,
    ytick={0,0.5,1},
    yticklabel style={font=\scriptsize},
    width=0.16cm,
  },
]
\addplot[
  matrix plot*,
  mesh/cols=10,
  point meta=explicit,
  visualization depends on={\thisrow{z} \as \cellvalue},
  nodes near coords={%
    \pgfmathparse{equal(\cellvalue,1)}%
    \ifnum\pgfmathresult=1\relax
      $\checkmark$%
    \fi
  },
  every node near coord/.append style={
    text=white,
    font=\scriptsize,
    anchor=center,
  },
  draw=white,
  line width=0.15pt,
] table[meta=z] {
x y z
1 1 0.342
2 1 1.000
3 1 1.000
4 1 0.992
5 1 0.868
6 1 0.546
7 1 0.224
8 1 0.078
9 1 0.040
10 1 0.032
1 2 0.342
2 2 1.000
3 2 0.818
4 2 0.318
5 2 0.064
6 2 0.026
7 2 0.018
8 2 0.018
9 2 0.018
10 2 0.018
1 3 0.286
2 3 0.728
3 3 0.536
4 3 0.132
5 3 0.032
6 3 0.022
7 3 0.018
8 3 0.018
9 3 0.018
10 3 0.018
1 4 0.110
2 4 0.018
3 4 0.018
4 4 0.018
5 4 0.018
6 4 0.018
7 4 0.018
8 4 0.018
9 4 0.018
10 4 0.018
1 5 0.008
2 5 0.018
3 5 0.018
4 5 0.018
5 5 0.018
6 5 0.018
7 5 0.018
8 5 0.018
9 5 0.018
10 5 0.018
1 6 0.006
2 6 0.118
3 6 0.242
4 6 0.356
5 6 0.610
6 6 0.404
7 6 0.524
8 6 0.514
9 6 0.712
10 6 0.596
};
\end{groupplot}
\end{tikzpicture}
\caption{Success ratio in float64 arithmetic 
for GNNs with activation (row) and aggregation-weight (column)
at fixed self-weights in three weight categories.
Checkmarks denote success ratio
$1$.}
\vspace*{-1mm}
\label{fig:float64-fixed-c-heatmaps}
\end{figure*}

We empirically investigate the vertex distinguishing power of GNNs using  sigmoid, tanh, sinh, cosh and exp as prominent Schanuel-ready activation functions. In addition, we evaluate ReLU as a popular activation function that is not Schanuel-ready. 
Since Schanuel-ready activation functions may compute transcendental numbers from rational ones, we 
vary the 
precision and use
32-, 64- and 128-bit floating point arithmetics.
We  randomly generate $N=500$
Erd\H{o}s--R\'enyi graphs with $n \in [20, 100]$ vertices using 
edge probability\footnote{We avoid graphs generated by a fixed edge probability as in this setting 1-WL is known to, with high probability, attribute distinct colours to all vertices in a graph; see Table~1 of \cite{labeling_sparse_graphs}.} $\frac{2}{n}$, with  one-hot
feature vectors from $\mathbb{Q}^4$. 
On each graph, we run 1-WL. If $L$ 
rounds are needed for stabilization, we  construct $L$-layer GNNs
$(\sigma,(C_i)_{1 \leq i \leq L},(A_i)_{1 \leq i \leq L},(b_i)_{1 \leq i \leq L},I)$ where 
$b_1 = \dots = b_L = 0$ and
$C_1 = \dots = C_L$ as well as $A_1 = \dots = A_L$
range over $\{0.01, 0.02, \dots, 0.1\}$, $\{0.1, 0.2, \dots, 1\}$ and
$\{1, 2, \dots, 10\}$.
We choose $I$ so that the initialisation function is
$\widetilde{\lambda}^{(0)}(x) = \sigma((2^{-1}, 2^{-2}, 2^{-3}, 2^{-4}) \times x)$.
Say that $\gnn$ \emph{succeeds} on a graph $G=(V,E,\lambda)$ if for all $u, v \in V$,
$\mu^{(L)}(v) = \mu^{(L)}(u) $ if and only if $\lambda^{(L)}(v) = \lambda^{(L)}(u)$.
The \emph{success ratio} of $\gnn$ is  the number of graphs that $\gnn$ succeeds on divided by $N$. 
%

An excerpt of our results for float64 is displayed in Figure~\ref{fig:float64-fixed-c-heatmaps}, see the appendix for 32- and 128-bit results.
For many weight settings, GNNs with Schanuel-ready activation functions perfectly simulate
the 1-WL algorithm. The  
exception of $\frac{C}{A} \in \mathbb{N}$ predicted by our theoretical analysis is observable in the leftmost columns.
All  Schanuel-ready activation functions encounter a performance degradation when weights get 
larger, which we believe is because
computed values get exceedingly close to 1 and cannot be distinguished by a fixed-precision floating-point representation (for sigmoid and tanh), or overflow
(for sinh, cosh, and exp).
These different phenomena can also be used to explain why the performance of sinh, cosh, and exp degrades faster than that of sigmoid and tanh (see appendix). 
Regardless of which Schanuel-ready activation function is used,
higher precision floats result in better performance.
As expected from the results in \cite{KhalifePolynomialActivationResult}, ReLU GNNs exhibit weaker performance than GNNs with Schanuel-ready activation functions.

\section{Conclusions}

We have studied the capability of GNNs with constant depth and  rational weights to match the expressive power of 1-WL,
establishing an intimate connection to algebraic independence and, more specifically, Schanuel's conjecture. As future work, we would be interested in understanding whether the assumptions made in Section~\ref{section_converse_direction} can be weakened. 
Can the same result be proved for sigmoid or all Schanuel-ready activation functions?
Is it necessary to assume the presence of virtual nodes and that the embedding
computed by the final GNN layer is of dimension~1? Can the restricted form of Schanuel's conjecture
obtained in Corollary~\ref{cor:restricted-schanuel} be strengthened, possibly even to Schanuel's conjecture without any
restrictions?
We conjecture that our methods can be extended to more expressive
GNN architectures and their natural WL test counterparts;
e.g. $k$-GNNs and the $k$-WL test, GNNs with
global readout in each layer and 1-WL with global aggregation.
Such directions are left for future research.

\section{Acknowledgements}
Daumantas Kojelis and Carsten Lutz were
supported by DFG project LU 1417/4-1.
Additionally, Carsten Lutz was partly supported by BMFTR in DAAD project 57616814 (SECAI).

The authors also wish to thank Valentin Wolfer for assisting with the experiments.

\section*{AI Use Statement}
The generative AI model ChatGPT-5.6 Sol (Max)
was used as an aid when developing this paper.
The extent of the model's involvement is as follows.
All text (including the appendix) has been human written
with occasional paraphrasing requests relegated to AI. 
All mathematical claims and statements have been formulated
by humans without help of AI.
All proofs excluding Claims 14, 15 and 21 were first obtained
by humans without AI help. Claims 14, 15 and 21 have
been realised by strengthening our initial human-generated proofs
with AI guidance.
Completed proofs were given to the AI agent in order to identify
gaps and inaccuracies.
Some suggestions arising from the above have been implemented.
Figures have been generated by AI.
Some references have been recommended by AI agents.
Experimental code has been written
with the help of an AI coding agent.
The authors have verified the AI-generated figures,
proof suggestions, code and references, and 
assume responsibility for all contents of this work.

\section*{Reproducibility Statement}
Since our theoretical results rely on Schanuel's conjecture, we explicitly indicated this dependence in claim, lemma, proposition and theorem statements. The appendix houses full proofs and additional experimental results omitted from the main body of text. We provide code for running our experiments in
the supplementary material with documentation.

\section*{Ethics Statement}
Our work aims to increase understanding in
the expressive power of GNNs. Our results are 
theoretic in nature. We therefore believe that this work
is low risk in terms of possible misuse.

\bibliography{references}

@inproceedings{xu2018howPowerful,
title={How Powerful are Graph Neural Networks?},
author={Keyulu Xu and Weihua Hu and Jure Leskovec and Stefanie Jegelka},
booktitle={International Conference on Learning Representations},
year={2019},
url={https://openreview.net/forum?id=ryGs6iA5Km},
}

@inproceedings{WLGoesNeural,
  author       = {Christopher Morris and
                  Martin Ritzert and
                  Matthias Fey and
                  William L. Hamilton and
                  Jan Eric Lenssen and
                  Gaurav Rattan and
                  Martin Grohe},
  title        = {Weisfeiler and Leman Go Neural: Higher-Order Graph Neural Networks},
  booktitle    = {Proceedings of the Thirty-Third {AAAI} Conference on Artificial Intelligence, {AAAI}
                  2019},
  pages        = {4602--4609},
  publisher    = {{AAAI} Press},
  year         = {2019},
  url          = {https://doi.org/10.1609/aaai.v33i01.33014602},
  doi          = {10.1609/AAAI.V33I01.33014602},
  bibsource    = {dblp computer science bibliography, https://dblp.org}
}

@inproceedings{provablyPowerfulGNNs,
 author = {Maron, Haggai and Ben-Hamu, Heli and Serviansky, Hadar and Lipman, Yaron},
 booktitle = {Advances in Neural Information Processing Systems},
 editor = {H. Wallach and H. Larochelle and A. Beygelzimer and F. d\textquotesingle Alch\'{e}-Buc and E. Fox and R. Garnett},
 pages = {},
 publisher = {Curran Associates, Inc.},
 title = {Provably Powerful Graph Networks},
 url = {https://proceedings.neurips.cc/paper_files/paper/2019/file/bb04af0f7ecaee4aae62035497da1387-Paper.pdf},
 volume = {32},
 year = {2019}
}

@inproceedings{randomNodeInit,
  title     = {The Surprising Power of Graph Neural Networks with Random Node Initialization},
  author    = {Abboud, Ralph and Ceylan, İsmail İlkan and Grohe, Martin and Lukasiewicz, Thomas},
  booktitle = {Proceedings of the Thirtieth International Joint Conference on
               Artificial Intelligence, {IJCAI-21}},
  publisher = {International Joint Conferences on Artificial Intelligence Organization},
  editor    = {Zhi-Hua Zhou},
  pages     = {2112--2118},
  year      = {2021},
  month     = {8},
  note      = {Main Track},
  doi       = {10.24963/ijcai.2021/291},
  url       = {https://doi.org/10.24963/ijcai.2021/291},
}

@INPROCEEDINGS{GroheLogic,
  author={Grohe, Martin},
  booktitle={2021 36th Annual ACM/IEEE Symposium on Logic in Computer Science (LICS)}, 
  title={The Logic of Graph Neural Networks}, 
  year={2021},
  volume={},
  number={},
  pages={1-17},
  doi={10.1109/LICS52264.2021.9470677}}

@inproceedings{GroheRepetition,
  author       = {Eran Rosenbluth and
                  Martin Grohe},
  editor       = {Sven Koenig and
                  Chad Jenkins and
                  Matthew E. Taylor},
  title        = {Repetition Makes Perfect: Recurrent Graph Neural Networks Match Message
                  Passing Limit},
  booktitle    = {Fortieth {AAAI} Conference on Artificial Intelligence, Thirty-Eighth
                  Conference on Innovative Applications of Artificial Intelligence,
                  Sixteenth Symposium on Educational Advances in Artificial Intelligence,
                  {AAAI} 2026, Singapore, January 20-27, 2026},
  pages        = {25168--25175},
  publisher    = {{AAAI} Press},
  year         = {2026},
  url          = {https://doi.org/10.1609/aaai.v40i30.39707},
  doi          = {10.1609/AAAI.V40I30.39707},
  bibsource    = {dblp computer science bibliography, https://dblp.org}
}

@inproceedings{WLTestReLULogarithmicResult,
 author = {Aamand, Anders and Chen, Justin and Indyk, Piotr and Narayanan, Shyam and Rubinfeld, Ronitt and Schiefer, Nicholas and Silwal, Sandeep and Wagner, Tal},
 booktitle = {Advances in Neural Information Processing Systems},
 doi = {10.52202/068431-1982},
 editor = {S. Koyejo and S. Mohamed and A. Agarwal and D. Belgrave and K. Cho and A. Oh},
 pages = {27333--27346},
 publisher = {Curran Associates, Inc.},
 title = {Exponentially Improving the Complexity of Simulating the Weisfeiler-Lehman Test with Graph Neural Networks},
 url = {https://proceedings.neurips.cc/paper_files/paper/2022/file/af0ad514b9cda46bd49e14ee11e2672f-Paper-Conference.pdf},
 volume = {35},
 year = {2022}
}

@InProceedings{bravoWLAnalyticalResult,
  title = 	 {On dimensionality of feature vectors in {MPNN}s},
  author =       {Bravo, C\'{e}sar and Kozachinskiy, Alexander and Rojas, Cristobal},
  booktitle = 	 {Proceedings of the 41st International Conference on Machine Learning},
  pages = 	 {4472--4481},
  year = 	 {2024},
  editor = 	 {Salakhutdinov, Ruslan and Kolter, Zico and Heller, Katherine and Weller, Adrian and Oliver, Nuria and Scarlett, Jonathan and Berkenkamp, Felix},
  volume = 	 {235},
  series = 	 {Proceedings of Machine Learning Research},
  month = 	 {21--27 Jul},
  publisher =    {PMLR},
  url = 	 {https://proceedings.mlr.press/v235/bravo24a.html},
}

@inproceedings{OtherWLAnalyticalResult,
 author = {Amir, Tal and Gortler, Steven and Avni, Ilai and Ravina, Ravina and Dym, Nadav},
 booktitle = {Advances in Neural Information Processing Systems},
 doi = {10.52202/075280-1845},
 editor = {A. Oh and T. Naumann and A. Globerson and K. Saenko and M. Hardt and S. Levine},
 pages = {42516--42551},
 publisher = {Curran Associates, Inc.},
 title = {Neural Injective Functions for Multisets, Measures and Graphs via a Finite Witness Theorem},
 url = {https://proceedings.neurips.cc/paper_files/paper/2023/file/84b686f7cc7b7751e9aaac0da74f755a-Paper-Conference.pdf},
 volume = {36},
 year = {2023}
}

@article{KhalifePolynomialActivationResult,
author = {Khalife, Sammy and Basu, Amitabh},
title = {On the Power of Graph Neural Networks and the Role of the Activation Function},
journal = {SIAM Journal on Applied Algebra and Geometry},
volume = {10},
number = {2},
pages = {350-365},
year = {2026},
doi = {10.1137/24M1659704},
URL = {https://doi.org/10.1137/24M1659704},
eprint = {https://doi.org/10.1137/24M1659704}
}

@inproceedings{BarceloPrecomputedHom,
 author = {Barcel\'{o}, Pablo and Geerts, Floris and Reutter, Juan and Ryschkov, Maksimilian},
 booktitle = {Advances in Neural Information Processing Systems},
 editor = {M. Ranzato and A. Beygelzimer and Y. Dauphin and P.S. Liang and J. Wortman Vaughan},
 pages = {25280--25293},
 publisher = {Curran Associates, Inc.},
 title = {Graph Neural Networks with Local Graph  Parameters},
 url = {https://proceedings.neurips.cc/paper_files/paper/2021/file/d4d8d1ac7e00e9105775a6b660dd3cbb-Paper.pdf},
 volume = {34},
 year = {2021}
}

@inproceedings{originalDHNpaper,
 author = {Maehara, Takanori and NT, Hoang},
 booktitle = {Advances in Neural Information Processing Systems},
 doi = {10.52202/079017-1784},
 editor = {A. Globerson and L. Mackey and D. Belgrave and A. Fan and U. Paquet and J. Tomczak and C. Zhang},
 pages = {56076--56107},
 publisher = {Curran Associates, Inc.},
 title = {Deep Homomorphism Networks},
 url = {https://proceedings.neurips.cc/paper_files/paper/2024/file/65f54fdf62cd5614dc5715ae7ece4ef6-Paper-Conference.pdf},
 volume = {37},
 year = {2024}
}

@inproceedings{GNNsWithDistanceEnc,
 author = {Li, Pan and Wang, Yanbang and Wang, Hongwei and Leskovec, Jure},
 booktitle = {Advances in Neural Information Processing Systems},
 editor = {H. Larochelle and M. Ranzato and R. Hadsell and M.F. Balcan and H. Lin},
 pages = {4465--4478},
 publisher = {Curran Associates, Inc.},
 title = {Distance Encoding: Design Provably More Powerful Neural Networks for Graph Representation Learning},
 url = {https://proceedings.neurips.cc/paper_files/paper/2020/file/2f73168bf3656f697507752ec592c437-Paper.pdf},
 volume = {33},
 year = {2020}
}

@inproceedings{gilmer2017neural,
  title     = {Neural Message Passing for Quantum Chemistry},
  author    = {Gilmer, Justin and Schoenholz, Samuel S. and Riley, Patrick F.
               and Vinyals, Oriol and Dahl, George E.},
  booktitle = {Proceedings of the 34th International Conference on Machine Learning},
  pages     = {1263--1272},
  year      = {2017},
  volume    = {70},
  series    = {Proceedings of Machine Learning Research},
  publisher = {PMLR}
}

@inproceedings{hu2020open,
  title     = {Open Graph Benchmark: Datasets for Machine Learning on Graphs},
  author    = {Hu, Weihua and Fey, Matthias and Zitnik, Marinka and Dong, Yuxiao
               and Ren, Hongyu and Liu, Bowen and Catasta, Michele
               and Leskovec, Jure},
  booktitle = {Advances in Neural Information Processing Systems},
  volume    = {33},
  pages     = {22118--22133},
  year      = {2020}
}

@inproceedings{trainingSigmoidNNs,
  author       = {Teemu Hankala and
                  Miika Hannula and
                  Juha Kontinen and
                  Jonni Virtema},
  editor       = {Michael J. Wooldridge and
                  Jennifer G. Dy and
                  Sriraam Natarajan},
  title        = {Complexity of Neural Network Training and {ETR:} Extensions with Effectively
                  Continuous Functions},
  booktitle    = {Thirty-Eighth {AAAI} Conference on Artificial Intelligence, {AAAI}
                  2024, Thirty-Sixth Conference on Innovative Applications of Artificial
                  Intelligence, {IAAI} 2024, Fourteenth Symposium on Educational Advances
                  in Artificial Intelligence, {EAAI} 2024, February 20-27, 2024, Vancouver,
                  Canada},
  pages        = {12278--12285},
  publisher    = {{AAAI} Press},
  year         = {2024},
  url          = {https://doi.org/10.1609/aaai.v38i11.29118},
  doi          = {10.1609/AAAI.V38I11.29118},
  bibsource    = {dblp computer science bibliography, https://dblp.org}
}

@InProceedings{DNNsVerificationSigmoid,
  author =	{Isac, Omri and Zohar, Yoni and Barrett, Clark and Katz, Guy},
  title =	{{DNN Verification, Reachability, and the Exponential Function Problem}},
  booktitle =	{34th International Conference on Concurrency Theory (CONCUR 2023)},
  pages =	{26:1--26:18},
  series =	{Leibniz International Proceedings in Informatics (LIPIcs)},
  ISBN =	{978-3-95977-299-0},
  ISSN =	{1868-8969},
  year =	{2023},
  volume =	{279},
  editor =	{P\'{e}rez, Guillermo A. and Raskin, Jean-Fran\c{c}ois},
  publisher =	{Schloss Dagstuhl -- Leibniz-Zentrum f{\"u}r Informatik},
  address =	{Dagstuhl, Germany},
  URL =		{https://drops.dagstuhl.de/entities/document/10.4230/LIPIcs.CONCUR.2023.26},
  URN =		{urn:nbn:de:0030-drops-190205},
  doi =		{10.4230/LIPIcs.CONCUR.2023.26}
}

@book{Lang1966TranscendentalNumbers,
  author    = {Lang, Serge},
  title     = {Introduction to Transcendental Numbers},
  publisher = {Addison-Wesley},
  address   = {Reading, MA},
  year      = {1966}
}

@article{Weierstrass1885Lindemann,
  author  = {Weierstrass, Karl},
  title   = {Zu {Lindemann}'s {Abhandlung}: {``}{\"U}ber die {Ludolph}'sche {Zahl}{''}},
  journal = {Sitzungsberichte der K{\"o}niglich Preussischen Akademie der Wissenschaften zu Berlin},
  year    = {1885},
  pages   = {1067--1085}
}

@book{CladerRoss2025,
  author    = {Emily Clader and Dustin Ross},
  title     = {Beginning in Algebraic Geometry},
  series    = {Undergraduate Texts in Mathematics},
  publisher = {Springer},
  year      = {2025},
  doi       = {10.1007/978-3-031-88819-9},
  isbn      = {978-3-031-88819-9}
}

@inproceedings{khalifeIsUniformExpressiveness,
title={Is uniform expressivity too restrictive? Towards efficient expressivity of {GNN}s},
author={Sammy Khalife and Josu{\'e} Tonelli-Cueto},
booktitle={The Thirteenth International Conference on Learning Representations},
year={2025},
url={https://openreview.net/forum?id=lsvGqR6OTf}
}

@InProceedings{labeling_sparse_graphs,
  author =	{Verbitsky, Oleg and Zhukovskii, Maksim},
  title =	{Canonical Labeling of Sparse Random Graphs},
  booktitle =	{42nd International Symposium on Theoretical Aspects of Computer Science (STACS 2025)},
  pages =	{75:1--75:20},
  series =	{Leibniz International Proceedings in Informatics (LIPIcs)},
  ISBN =	{978-3-95977-365-2},
  ISSN =	{1868-8969},
  year =	{2025},
  volume =	{327},
  publisher =	{Schloss Dagstuhl -- Leibniz-Zentrum f{\"u}r Informatik},
  address =	{Dagstuhl, Germany},
  URL =		{https://drops.dagstuhl.de/entities/document/10.4230/LIPIcs.STACS.2025.75},
  URN =		{urn:nbn:de:0030-drops-229003},
  doi =		{10.4230/LIPIcs.STACS.2025.75}
}
\bibliographystyle{iclr2027_conference}

\newpage 
\appendix
\section*{Appendix}
\section{Full proof of Proposition~\ref{main_schanuels_proposition}}

Let us recall the GNNs we are working with.
Recall that an activation function $\sigma$ is Schanuel-ready
if there exist polynomials $P(y)$ and $Q(y)$ with rational coefficients such that
\begin{itemize}
    \item $\sigma(x) = \frac{P(e^x)}{Q(e^x)}$,
    \item $\frac{P(y)}{Q(y)}$ is non-constant, positive and strictly increasing on $y > 1$,
    \item $Q(y) \neq 0$ for all $y > 0$.
\end{itemize}
When $\sigma$ is clear from context we will often refer to $\frac{P(y)}{Q(y)}$ as $R(y)$.

Now, writing $N$ for the dimensionality of initial feature vectors recall that, in our full setting, an $L$-layered GNN $\gnn$ is Schanuel-ready if 
\begin{itemize}
    \item each layer uses a common Schanuel-ready activation function $\sigma$,
    \item initialisation takes the form $\lambda^{(0)}(v) = \sigma((2^1, 2^2,\dots, 2^{N}) \times \sigma(\lambda(v) + \pmb{1}_N))$,
    \item each layer computes $\lambda^{(i)}(v) = \sigma(C_i\lambda^{(i-1)}(v) + A_i\sum_{u \in N(v)} \lambda^{(i-1)}(u) + b_i)$,
    where $A_i, C_i > 0$, $b_i \geq 0$ are rational numbers such that $\frac{C_i}{A_i} \not\in \mathbb{N}$. 
\end{itemize}
We again note that a GNN having $A_1 = \dots = A_L = 2$, $C_1 = \dots = C_L = 1$ and $b_1 = \dots = b_L = 0$
is a valid choice for our purposes.
Notice that each $\lambda^{(0)}, \dots, \lambda^{(L)}$ produces 1-dimensional feature vectors, but $\lambda$ -- the feature map of any given graph -- 
need not be so. We now prove the following in its full form:
\mainSchanuelsProposition*


The overall proof structure follows the one in the main body. The setting, however,
is more technical as we aim to accommodate a wider range of activation functions
and the scope of graphs with non-negative rational features.
Thus, recall that $\sigma(x) = \frac{P(e^x)}{Q(e^x)}$ is non-constant,
where $P, Q$ are polynomials over $\mathbb{Q}$ with $Q(x) \neq 0$ for all $x > 0$.

In the sequel we will make heavy use of a series of constants $\kappa_1, \kappa_2, \dots \in \mathbb{N}$,
which are defined as follows.
Enumerate positive valued rationals in any way: $l_1, l_2 \dots$ and recall that all
weights of our GNN $A_{1}, C_{1}, b_1, \dots, A_{L}, C_{L}, b_{L}$
are positive (non-negative in the case of $b_1, \dots, b_L$) rationals.
For a given $k \geq 1$ the constant $\kappa_k$ is the least common denominator of $l_1, \dots, l_k$
as well as $A_{1}, C_{1}, b_1, \dots, A_{L}, C_{L}, b_{L}$.
When $k$ is clear from context, we simply write $\kappa = \kappa_k$.

With the above at hand we take any $k \geq 1$ and $r \geq 1$ and define the following sets inductively on $i < L$:
\begin{itemize}
    \item $\widehat{Z}_{-1, k, r} = \{ e^{\frac{1}{\kappa}} \}$,
    \item $Z_{-1, k, r} = \{ \sigma(l_1), \dots, \sigma(l_k) \} =
        \{R((e^{\frac{1}{\kappa}})^{l_j\kappa}) \mid 1 \leq j \leq k \}$,
    \item $\widehat{Z}_{i+1, k, r}  = \{ e^\frac{z}{\kappa} \mid z \in Z_{i, k, r} \}$, and
    \item $Z_{i+1, k, r} = \{R(z_1 \cdot {\dots} \cdot z_r) \mid
        z_1, \dots, z_r \in \widehat{Z}_{i+1, k, r} \cup \{ 1, e^{\frac{1}{\kappa}} \}
    \text{ not all in } \{1, e^{\frac{1}{\kappa}} \}\}$.
\end{itemize}
The definitions of $\widehat{Z}_{-1, k, r}$ and $Z_{-1, k, r}$ are technical and serve as an intermediary step
for computing the initialisation $\lambda^{(0)}$.
To aid intuition, we invite the reader to think of $Z_{0, k, r}$ as a $k$-sized sample of $\lambda^{(0)}$'s initialisation values;
it being understood that $r$ is ``large enough''.
The set $Z_{1, k, r}$ is then again a sample of values that $\gnn$ can compute by, in a sense, aggregating and combining values of $Z_{0, k, r}$.
Subsequent sets $Z_{i, k, r}$ then behave similarly. 
The intuition behind $\kappa$ is that raising $e^{\frac{1}{\kappa}}$ to appropriate powers
will ultimately recover (exponentiations of) $l_1, \dots, l_k$ as well as $A_{1}, C_{1}, b_1, \dots, A_{L}, C_{L}, b_{L}$.
We show how this is achieved by writing $Z_{i, k} = \bigcup_{r \geq 1} Z_{i, k, r}$ and:
\begin{claim}\label{claim_appendix_computation_in_Z_sets}
    For all featured graphs $G$, all vertices $v \in V$ and all $0 \leq i \leq L$ there is some $k' \geq 1$
    such that for all $k \geq k'$ we have $\lambda^{(i)}(v) \in Z_{i, k}$.
\end{claim}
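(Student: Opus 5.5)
We prove the claim by induction on $i$, strengthening it slightly so that the induction goes through. The strengthened statement is: for every featured graph $G$ and every $i$ there is $k'$ such that for all $k \geq k'$ and all $v \in V$ we have $\lambda^{(i)}(v) \in Z_{i,k}$. Because $\kappa_k$ is the least common denominator of $l_1,\dots,l_k$ together with all weights, it is monotone in the sense that $\kappa_{k}$ divides $\kappa_{k''}$ for $k \leq k''$. The sets $Z_{i,k}$ are not literally nested in $k$, since $e^{1/\kappa}$ changes with $\kappa$. We therefore always argue for an arbitrary fixed $k \geq k'$ and never transfer membership between different values of $k$. The graph $G$ is finite, so there are only finitely many vertices and finitely many feature values. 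It therefore suffices to choose $k'$ such that every rational needed (every coordinate $\lambda(v)[j]+1$ and every shifted weight) occurs among $l_1,\dots,l_{k'}$.

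\emph{Base case $i=0$.} The rationals $q_{v,j}=\lambda(v)[j]+1$ are positive and finitely many. We pick $k'$ so that all of them are among $l_1,\dots,l_{k'}$. Then $\sigma(q_{v,j}) \in Z_{-1,k,r}$ for every $k\geq k'$. Next,
\[
\lambda^{(0)}(v)=\sigma\Big(\sum_j 2^j \sigma(q_{v,j})\Big)=R\Big(\prod_j \big(e^{\sigma(q_{v,j})/\kappa}\big)^{2^j\kappa}\Big).
\]
Each factor $e^{\sigma(q_{v,j})/\kappa}$ lies in $\widehat Z_{0,k,r}$, so the product is a product of elements of $\widehat Z_{0,k,r}$, possibly with repetitions. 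We choose $r \geq \sum_j 2^j\kappa$ and pad the product with copies of $1$ up to length $r$. Not all factors lie in $\{1,e^{1/\kappa}\}$, because $N\geq1$ and at least one genuine factor is present. Hence $\lambda^{(0)}(v)\in Z_{0,k,r}\subseteq Z_{0,k}$.

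\emph{Inductive step.} Assume $\lambda^{(i)}(u)\in Z_{i,k}$ for all $u$ and all $k\geq k'$. We do not enlarge $k'$ here: the weights $A_{i+1},C_{i+1},b_{i+1}$ are already built into every $\kappa_k$. For $k\geq k'$, each $\lambda^{(i)}(u)$ lies in some $Z_{i,k,r_u}$. We first check that the sets $Z_{i,k,r}$ are increasing in $r$. This follows by induction on $i$: a larger $r$ allows products padded with more copies of $1$, and the inner sets $\widehat Z_{i,k,r}$ grow with $r$ as well. We may therefore take a common $r_0$ with $\lambda^{(i)}(u)\in Z_{i,k,r_0}$ for all $u \in V$. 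Writing $a=A_{i+1}\kappa$, $c=C_{i+1}\kappa$ and $b=b_{i+1}\kappa$, all nonnegative integers, we get
\[
\lambda^{(i+1)}(v)=R\Big(\big(e^{\lambda^{(i)}(v)/\kappa}\big)^{c}\prod_{u\in N(v)}\big(e^{\lambda^{(i)}(u)/\kappa}\big)^{a}\,\big(e^{1/\kappa}\big)^{b}\Big).
\]
Each $e^{\lambda^{(i)}(\cdot)/\kappa}$ lies in $\widehat Z_{i+1,k,r_0}$. The bias contributes copies of $e^{1/\kappa}$, which is an allowed factor. Since $c\geq1$ because $C_{i+1}>0$, at least one factor comes from $\widehat Z_{i+1,k,r_0}$. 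We take $r\geq \max\big(r_0,\;c+a|N(v)|+b\big)$, pad with $1$'s, and use monotonicity in $r$ once more. This gives $\lambda^{(i+1)}(v)\in Z_{i+1,k,r}\subseteq Z_{i+1,k}$.

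\emph{Main obstacle.} The delicate point is the bookkeeping of the parameters $k$ and $r$. The membership must hold simultaneously for all $k\geq k'$, even though $\kappa$ and hence the sets change with $k$. This is handled by fixing $k$ throughout each step and using only the divisibility property of $\kappa$. The remaining ingredient is the monotonicity of $Z_{i,k,r}$ in $r$, which has to be verified by a small side induction, because $Z_{i+1,k,r}$ depends on $r$ both through $\widehat Z_{i+1,k,r}$ and through the product length.
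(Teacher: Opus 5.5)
Your proof is correct and follows essentially the same route as the paper. You choose $k'$ so that all shifted feature values $\lambda(v)[j]+1$ occur among $l_1,\dots,l_{k'}$, fix $k\ge k'$, and write $\lambda^{(0)}(v)$ as $R$ of a product with exponents $2^j\kappa$; in the inductive step you rewrite $C_{i+1},A_{i+1},b_{i+1}$ as nonnegative integers over $\kappa$, so that $\lambda^{(i+1)}(v)$ becomes $R$ of a monomial in elements of $\widehat Z_{i+1,k,r}$ and $e^{1/\kappa}$. Your explicit check that $Z_{i,k,r}$ and $\widehat Z_{i,k,r}$ grow with $r$, which gives one $r_0$ for all neighbours, just makes precise what the paper compresses into ``for $r$ large enough''.
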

\begin{proof}
    Take any graph $G$ and
    any vertex $v \in V$. Recall that $\lambda(v)$ is a
    vector of non-negative rationals, say, $m_1, \dots, m_N$, it being
    understood that $N = |\lambda(v)|$.
    Since each entry is non-negative, we may find $l_{k_1}, \dots, l_{k_N}$ in our
    enumeration of the positive rational numbers, such that
    $l_{k_t} = m_t + 1$. Now, let $k'_v = \max(k_1, \dots, k_N)$.
    Clearly, $\sigma(l_{k_t}) \in Z_{-1, k'_v}$ thus resulting in $\sigma(m_t + 1) \in Z_{-1, k'_v}$ for all $1 \leq t \leq N$.
    Now, taking $k' = \max\{k'_v \mid v \in V\}$ we have, for each $k \geq k'$,
    that every entry $m$ in any feature vector of vertices in $G$ satisfies $\sigma(m+1) \in Z_{-1, k}$.
    
    Taking any $k$ as above and writing $\kappa = \kappa_k$ we proceed by induction on $0 \leq i < L$.
    For the base case take any $v$ and let $m_{1}, \dots, m_{N}$ be the entries of $\lambda(v)$ in order.
    As argued above, $\sigma(m_{1} + 1), \dots, \sigma(m_{N} + 1) \in Z_{-1, k}$ and thus
    $e^{\frac{\sigma(m_{1} + 1)}{\kappa}}, \dots, e^{\frac{\sigma(m_{N} + 1)}{\kappa}} \in \widehat{Z}_{0, k}$.
    But then, again by definition,
    $R(\prod_{j=1}^n e^{2^j\kappa \cdot \frac{\sigma(m_j+1)}{\kappa}})
    = \sigma((2^1, 2^2, \dots, 2^N) \times \sigma(\lambda(v) + \pmb{1}_N))
    = \lambda^{(0)}(v)$
    is in $Z_{0, k, r}$ for $r$ large enough, and thus also in $Z_{0, k}$.
    
    For the inductive step assume that $\lambda^{(i)}(v) \in Z_{i, k}$ for all $v \in V$.
    Then, aggregation and combination in $\gnn$ amount to
    $\sigma\big(C_{i+1} z_0 + A_{i+1} (z_1 + \dots + z_n) + b_{i+1} \big)$, where $z_0, \dots, z_n \in Z_{i, k}$.
    By the definition of $\sigma$ we have that the above is
    $R(e^{C_{i+1} z_0} \cdot \prod_{j=1}^n e^{A_{i+1}z_j} \cdot e^{b_{i+1}})$.
    Noting that $C_{i+1} = \frac{C_{i+1}'}{\kappa}$, $A_{i+1} = \frac{A_{i+1}'}{\kappa}$, and
    $b_{i+1} = \frac{b_{i+1}'}{\kappa}$,
    for some $C_{i+1}', A_{i+1}', b_{i+1}' \in \mathbb{N}$ we have that
    $e^{C_{i+1}z_0} = (e^{\frac{z_0}{\kappa}})^{C_{i+1}'}$ with $e^{\frac{z_0}{\kappa}}$
    being a member of $\widehat{Z}_{i+1, k}$.
    Similar observations hold for $e^{A_{i+1}z_j}$ with $1 \leq j \leq n$.
    Lastly, recalling that multiples of $e^{\frac{1}{\kappa}}$
    are allowed to appear when forming $Z_{i+1, k}$, we have $e^{b_{i+1}} = (e^{\frac{1}{\kappa}})^{b_{i+1}'}$.
    Then, clearly, 
    $R(e^{C_{i+1} z_0} \cdot \prod_{j=1}^n (e^{A_{i+1}z_j}) \cdot e^{b_{i+1}}) = 
    R((e^{\frac{z_0}{\kappa}})^{C_{i+1}'} \cdot \prod_{j=1}^n (e^{\frac{z_j}{\kappa}})^{A_{i+1}'} \cdot
    (e^{\frac{1}{\kappa}})^{b_{i+1}'})$
    which is then in $Z_{i+1, k, r}$ for large enough $r$, and thus also in $Z_{i+1, k}$.
\end{proof}

Our strategy again consists of establishing linear independence of $Z_{i, k}$ for all $0 \leq i \leq L$ and $k \geq 1$
in order to conclude that $\gnn$ matches the $L$-round 1-WL test. We fully prove this claim at the end of this section.
The next few claims are a setup for the desired conclusion.
\begin{claim}\label{claim_limit_for_lin_indep}
    There is some $t \in \mathbb{Q}$ and non-zero $r \in \mathbb{Z}$ such that
    $\alpha = \lim_{x \to 0^+} \frac{R(x) - t}{x^r}$
    is non-zero.
\end{claim}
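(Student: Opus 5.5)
We argue directly from the form $R = P/Q$ with $P,Q \in \mathbb{Q}[y]$. The plan is to expand $R$ as a Laurent series (a rational function) around $y=0$ and read off the leading non-constant term.

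First, write $Q(y) = y^{s} Q_0(y)$ with $Q_0(0) \neq 0$ and $s \geq 0$, and $P(y) = y^{p} P_0(y)$ with $P_0(0)\neq 0$ (note $P \neq 0$, since $R$ is positive on $y>1$). Then $R(y) = y^{p-s} \frac{P_0(y)}{Q_0(y)}$, and $\frac{P_0}{Q_0}$ is a rational function with rational coefficients that is analytic and non-zero at $0$. Its Taylor expansion at $0$ therefore has rational coefficients, since long division of power series over $\mathbb{Q}$ stays in $\mathbb{Q}$. Consequently $R(y) = \sum_{n \geq p-s} a_n y^n$ as a Laurent series with $a_n \in \mathbb{Q}$, convergent on a punctured neighbourhood of $0$.

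Next, we use non-constancy. Because $R$ is strictly increasing on $y>1$, it is not constant as a rational function, so the Laurent series has some non-zero coefficient $a_n$ with $n \neq 0$. We split into two cases.

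\emph{Case $p-s<0$.} Take $t=0$ and $r=p-s$, which is a non-zero integer. Then $\frac{R(x)}{x^{r}} \to a_{p-s} = \frac{P_0(0)}{Q_0(0)} \neq 0$ as $x \to 0^+$.

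\emph{Case $p-s \geq 0$.} Here $R$ is analytic at $0$ with $R(0)=a_0 \in \mathbb{Q}$. Take $t=a_0$ and let $r \geq 1$ be the least index with $a_r \neq 0$; it exists by non-constancy. Then $\frac{R(x)-t}{x^r} = a_r + \sum_{n>r} a_n x^{n-r} \to a_r \neq 0$.

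In both cases $t \in \mathbb{Q}$, $r \in \mathbb{Z}\setminus\{0\}$, and the limit $\alpha$ is non-zero (and rational).

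The only step requiring care is ensuring $t$ is rational and that a non-constant term exists. Rationality follows from the rationality of the Laurent coefficients, and existence follows from non-constancy of $R$ as a rational function, which is guaranteed by the Schanuel-ready hypothesis. Positivity on $y>1$ is used only to rule out $P=0$, and condition (ii) of Schanuel-readiness is not needed here.
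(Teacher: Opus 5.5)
Your proof is correct and is essentially the paper's argument. The paper also factors $P=x^pP_0$ and $Q=x^qQ_0$, takes $t=0$ when $p\neq q$, and when $p=q$ sets $t=P_0(0)/Q_0(0)$ and reads off the lowest-order term of $S(x)=P_0(x)Q_0(0)-P_0(0)Q_0(x)$, which gives exactly your least index $r$ with $a_r\neq 0$; the only differences are cosmetic (Laurent coefficients instead of the explicit polynomial $S$, and splitting on pole versus no pole rather than on $p\neq q$ versus $p=q$).
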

\begin{proof}
    Since $P(x)$ and $Q(x)$ are non-zero polynomials
    we may find $p, q \in \mathbb{N}$ such that 
    $P(x) = x^pP_0(x)$ and $Q(x) = x^qQ_0(x)$
    for non-zero polynomials $P_0(x), Q_0(x)$ both having a non-zero constant term (and thus $P_0(0) \neq 0 \neq Q_0(0)$).
    Suppose first that $p \neq q$. Then, by taking $t = 0$ and $r = p-q$
    we have
    \begin{equation*}
        \alpha = \lim_{x \to 0^+} \frac{\frac{P(x)}{Q(x)} - t}{x^r} = \lim_{x \to 0^+} \frac{x^{p-q} \cdot \frac{P_0(x)}{Q_0(x)}}{x^{p-q}}
        = \lim_{x\to 0^+} \frac{P_0(x)}{Q_0(x)} \neq 0.
    \end{equation*}

    Now suppose that $p = q$. Then, taking $P_0$ and $Q_0$ as above we again have
    that $\frac{P_0(0)}{Q_0(0)} \neq 0$.
    Now, let us take a look at
    $S(x) = P_0(x)Q_0(0) - P_0(0)Q_0(x)$. Notice that $S(x)$ is not the zero
    polynomial as we would then have $\frac{P_0(x)}{Q_0(x)} = \frac{P_0(0)}{Q_0(0)}$ for all $x > 0$.
    Inspecting $S(x)$ we see that the constant terms of polynomials $P_0(x)Q_0(0)$
    and $P_0(0)Q_0(x)$ cancel each other out. Thus we may assume that $S(x)$ takes the normal form
    $a_0 + a_1x^1 + \dots + a_nx^n$ with $a_0 = 0$.
    Now, let $r$ be the smallest index such that $a_r \neq 0$. Clearly, $x^r$ is then the smallest power of $x$ present in $S(x)$.
    Thus, $S(x) = x^rS_0(x)$, where $S_0(x)$ is some non-zero polynomial. Note that $S_0(0) \neq 0$ as that would contradict $a_r \neq 0$.
    We now observe the following by setting $t = \frac{P_0(0)}{Q_0(0)}$:
    \begin{multline*}
        \alpha = \lim_{x \to 0^+} \frac{\frac{P(x)}{Q(x)} - t}{x^r} =
        \lim_{x \to 0^+} \frac{\frac{x^p P_0(x)}{x^q Q_0(x)} - \frac{P_0(0)}{Q_0(0)}}{x^r} =
        \lim_{x \to 0^+} \frac{\frac{P_0(x)Q_0(0) - P_0(0)Q_0(x)}{Q_0(x)Q_0(0)}}{x^r} = \\
        \lim_{x \to 0^+} \frac{\frac{x^rS_0(x)}{Q_0(x)Q_0(0)}}{x^r} =
        \lim_{x \to 0^+} \frac{S_0(x)}{Q_0(x)Q_0(0)} =
        \frac{S_0(0)}{Q_0(0)Q_0(0)} \neq
        0
    \end{multline*}
    Thus concluding the current case and proof.
\end{proof}

A series of functions $f_1(x), \dots, f_n(x)$ is said to be \textit{linearly dependent} over $\mathbb{Q}$
if there are some $c_1, \dots, c_n \in \mathbb{Q}$ not all zero such that
$c_1 f_1(x) + \dots + c_nf_n(x) = 0$ for all $x$ over the common domain of $f_1(x), \dots, f_n(x)$.
If no such $c_1, \dots, c_n$ exist, then the functions are linearly independent.
\begin{claim}\label{claim_expression_does_not_reduce_further}
    Let $B_1, \dots, B_n \in \mathbb{N}$ be distinct and positive.
    Then, the functions $1, R(x^{B_1}), \dots, R(x^{B_n})$
    are linearly independent over $\mathbb{Q}$.
\end{claim}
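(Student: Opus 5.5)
We argue by contradiction using the asymptotics from Claim~\ref{claim_limit_for_lin_indep}. Suppose there are rationals $d, c_1, \dots, c_n$, not all zero, such that $d + \sum_{j=1}^n c_j R(x^{B_j}) = 0$ for all $x$ in the common domain. The common domain contains $x > 0$, since $Q(y) \neq 0$ for $y > 0$. Order the exponents so that $B_1 < B_2 < \dots < B_n$, and discard all indices with $c_j = 0$. If every $c_j$ vanishes, then $d = 0$ as well, a contradiction. We may therefore assume all remaining $c_j$ are nonzero and $n \geq 1$.

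Fix $t \in \mathbb{Q}$ and a nonzero integer $r$ as in Claim~\ref{claim_limit_for_lin_indep}, so that $R(x) = t + \alpha x^r + o(x^r)$ as $x \to 0^+$ with $\alpha \neq 0$. Substituting $x^{B_j}$ gives $R(x^{B_j}) = t + \alpha x^{rB_j} + o(x^{rB_j})$, because $x^{B_j} \to 0^+$. Rewrite the dependence as
\[
\Big(d + t\sum_j c_j\Big) + \alpha \sum_{j} c_j x^{rB_j} + \sum_j o(x^{rB_j}) = 0 .
\]
The exponents $rB_j$ are pairwise distinct because $r \neq 0$ and the $B_j$ are distinct.

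The key step is to isolate a dominant term; this is where the sign of $r$ matters. If $r > 0$, let $\ell$ be the index with smallest $B_\ell$. Letting $x \to 0^+$ first shows that the constant $d + t\sum_j c_j$ equals $0$. Dividing by $x^{rB_\ell}$ and letting $x \to 0^+$ then yields $\alpha c_\ell = 0$. Here every other term $x^{rB_j}/x^{rB_\ell}$ and every $o(x^{rB_j})/x^{rB_\ell}$ tends to $0$, since $rB_j > rB_\ell$. If $r < 0$, take instead the index $\ell$ with the largest $B_\ell$, so that $rB_\ell$ is the most negative exponent. Multiplying through by $x^{-rB_\ell}$ and letting $x \to 0^+$ sends the constant term and all the other terms to $0$, again leaving $\alpha c_\ell = 0$. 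In either case $\alpha \neq 0$ forces $c_\ell = 0$, which contradicts our choice to keep only nonzero coefficients.

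The main obstacle is bookkeeping of the error terms. We must check that $o(x^{rB_j})$, after division by $x^{rB_\ell}$, still vanishes. This holds because $o(x^{rB_j})/x^{rB_\ell} = \big(o(x^{rB_j})/x^{rB_j}\big)\cdot x^{r(B_j - B_\ell)}$, and the second factor is bounded near $0$ by the choice of $\ell$. A second subtlety is the constant term in the case $r > 0$. It must be shown to vanish before the division, and the limit $x \to 0^+$ applied to the displayed identity does exactly this. With these two checks the contradiction is complete, and the functions $1, R(x^{B_1}), \dots, R(x^{B_n})$ are linearly independent over $\mathbb{Q}$.
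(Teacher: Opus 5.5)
Your proof is correct and follows the paper's argument. Both rewrite the dependence via Claim~\ref{claim_limit_for_lin_indep} as $D + \sum_j c_j x^{rB_j}\bigl(\alpha + o(1)\bigr) = 0$ with $D = d + t\sum_j c_j$, and then extract $\alpha c_\ell = 0$ from the dominant exponent as $x \to 0^+$. The only difference is bookkeeping: you split on the sign of $r$ rather than on whether $D = 0$, which organises the same cases slightly more cleanly.
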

\begin{proof}
    Suppose for contradiction that $1, \frac{P(x^{B_1})}{Q(x^{B_1})}, \dots, \frac{P(x^{B_n})}{Q(x^{B_n})}$
    are linearly dependent over $\mathbb{Q}$. One may then find $c_1, \dots, c_n, d \in \mathbb{Q}$
    not all zero such that, for all positive $x \in \mathbb{Q}$,
    \begin{equation*}
        \sum_{j=1}^n c_j \cdot \frac{P(x^{B_j})}{Q(x^{B_j})} + d = 0.
    \end{equation*}
    Now, fix $t, r, \alpha$ as the values promised by Claim~\ref{claim_limit_for_lin_indep} and write
    $T(x) = \frac{R(x) - t}{x^r}$. For all $1 \leq j \leq n$ we then have that
    $R(x^{B_j}) = x^{rB_j} T(x^{B_j}) + t$. Thus, the linear dependence identity can be written as:
    \begin{equation*}
        \sum_{j=1}^n c_j \Big( x^{rB_j} T(x^{B_j}) + t \Big) + d = 
        \sum_{j=1}^n c_j x^{rB_j} T(x^{B_j}) + D =
        0,
    \end{equation*}
    where $D = d + t\sum_{j=1}^n c_j$.
    
    Discarding terms with $c_j = 0$ suppose, first, that $D = 0$. Now, let $j$ be such that $rB_j = \min\{ rB_\ell \mid 1 \leq \ell \leq n, c_\ell \neq 0 \}$.
    Without loss of generality, we assume that $j=1$. Dividing the above by $x^{rB_j}$ and taking the limit
    of the left-hand side we obtain the following:
    \begin{equation*}
        \lim_{x \to 0^+} c_1 T(x^{B_1}) + \sum_{j=2}^n c_j x^{rB_j - rB_1} T(x^{B_j}) =
        c_1 \alpha + \sum_{j=2}^n c_j \cdot 0 \cdot \alpha = c_1 \alpha,
    \end{equation*}
    as for all $j \geq 2$, we have that $rB_j - rB_1 > 0$.
    Notice that since the original expression is zero, we have that the limit above is also zero.
    But neither $c_1 = 0$, nor $\alpha = 0$ thus resulting in a contradiction.

    Now, suppose that $D \neq 0$. But then, if all $rB_\ell$ with $c_\ell \neq 0$ are positive
    we have
    \begin{equation*}
        \lim_{x \to 0^+} \sum_{j=1}^n c_j x^{rB_j} T(x^{B_j}) + D =
        \sum_{j=1}^n c_j \cdot 0 \cdot \alpha + D = D.
    \end{equation*}
    But $D \neq 0$ thus resulting in a contradiction.
    Otherwise, if some $rB_\ell$ with $c_\ell \neq 0$ is negative, again take
    $rB_j = \min\{ rB_\ell \mid 1 \leq \ell \leq n, c_\ell \neq 0 \}$.
    Assuming w.l.o.g. that $j = 1$ we again have that
    \begin{equation*}
        \lim_{x \to 0^+} c_1 T(x^{B_1}) + \sum_{j=2}^n c_j x^{rB_j - rB_1} T(x^{B_j}) + x^{-rB_1}D =
        c_1\alpha + \sum_{j=1}^n c_j \cdot 0 \cdot \alpha + 0D = c_1\alpha.
    \end{equation*}
    But again $c_1\neq 0$ and $\alpha \neq 0$ thus contradicting our initial assumption and concluding the proof.
\end{proof}

Finally, writing $\widehat{Z}_{i, k} = \bigcup_{r \geq 1}\widehat{Z}_{i, k, r}$ we show:
\begin{claim}\label{claim_alg_indep_to_lin_appendix}
    Suppose that each finite subset of $\bigcup_{j=-1}^i\widehat{Z}_{j, k}$ is algebraically independent over $\mathbb{Q}$.
    Then each finite subset of $\{ 1\} \cup \bigcup_{j=-1}^i Z_{j, k}$ is linearly independent over $\mathbb{Q}$.
\end{claim}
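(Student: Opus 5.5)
We argue by contradiction, following the proof of Lemma~\ref{claim_alg_indep_to_lin} but with an extra reduction step that brings in Claim~\ref{claim_expression_does_not_reduce_further}. The overall chain is: write every element of the finite set as $R$ applied to a monomial in algebraically independent numbers, turn a supposed linear relation into an identity of rational functions, then specialise that identity to one variable.

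\emph{Setup.} Suppose some finite $S\subseteq\{1\}\cup\bigcup_{j=-1}^i Z_{j,k}$ is linearly dependent over $\mathbb{Q}$. Write $S\setminus\{1\}=\{z_1,\dots,z_n\}$ with the $z_j$ pairwise distinct, and fix $c_1,\dots,c_n,d\in\mathbb{Q}$, not all zero, with $\sum_j c_j z_j+d=0$.

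\emph{Step 1: monomial representation.} Every element of $Z_{-1,k}$ has the form $R\big((e^{1/\kappa})^{l\kappa}\big)$ with $l\kappa$ a positive integer. Every element of $Z_{j,k}$ with $j\geq 0$ has the form $R(w_1\cdots w_r)$. Here the $w_t$ lie in $\widehat{Z}_{j,k}\cup\{1,e^{1/\kappa}\}$, and at least one $w_t$ lies in $\widehat{Z}_{j,k}$. Since $e^{1/\kappa}\in\widehat{Z}_{-1,k}$, each $z_j$ can be written as $R(M_j(s_1,\dots,s_m))$. The $s_1,\dots,s_m$ are finitely many distinct elements of $\bigcup_{\ell=-1}^i\widehat{Z}_{\ell,k}$, and each $M_j$ is a monomial with a nonzero exponent vector in $\mathbb{N}^m$. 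Fix one such representation for each $z_j$. Because the $z_j$ are distinct, the values $M_j(s)$ are distinct, so the monomials $M_j$ are pairwise distinct.

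\emph{Step 2: from numbers to a rational-function identity.} Multiply the relation by $\prod_j Q(M_j(s))$, which is nonzero because $Q$ has no positive roots and the $s_t$ are positive. This gives a polynomial $F(x_1,\dots,x_m)$ over $\mathbb{Q}$ with $F(s_1,\dots,s_m)=0$. By hypothesis $\{s_1,\dots,s_m\}$ is algebraically independent, so $F$ is the zero polynomial. Hence $\sum_j c_jR(M_j(x))+d=0$ holds identically for all positive real $x\in\mathbb{R}^m$.

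\emph{Step 3: specialise to one variable.} Substitute $x_t=y^{w_t}$ with positive integer weights $w_t$, for instance $w_t=D^t$ with $D$ larger than every exponent appearing. This choice makes the integers $B_j=\langle w,\beta_j\rangle$, where $\beta_j$ is the exponent vector of $M_j$, pairwise distinct. They are also positive, since each $\beta_j$ is nonzero with nonnegative entries. We obtain $\sum_j c_jR(y^{B_j})+d=0$ for all $y>0$. This is a linear dependence of $1,R(y^{B_1}),\dots,R(y^{B_n})$ with coefficients not all zero, which contradicts Claim~\ref{claim_expression_does_not_reduce_further}.

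The main obstacle is Step~1. One must ensure that distinct elements genuinely give distinct monomials, and that every monomial is nonconstant. The clause ``not all in $\{1,e^{1/\kappa}\}$'' in the definition of $Z_{j,k}$ guarantees a variable from $\widehat{Z}_{j,k}$ appears. For $Z_{-1,k}$, the positivity of $l$ gives a positive power of $e^{1/\kappa}$. It also matters that the $\widehat{Z}$ sets are treated as a single finite pool of variables, so that overlaps between levels cause no problem. Once Step~1 is in place, Steps~2 and~3 are routine.
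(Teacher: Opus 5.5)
Your proposal is correct and follows essentially the same route as the paper. You write each element as $R$ of a nonconstant monomial in finitely many elements of $\bigcup_{j=-1}^{i}\widehat{Z}_{j,k}$, clear the nonvanishing denominators so that algebraic independence forces the resulting polynomial to be trivial, and then use the substitution $x_t=y^{D^t}$ to reduce to the one-variable Claim~\ref{claim_expression_does_not_reduce_further}. Two small points in your favour: your argument that the exponent vectors are pairwise distinct (distinct $z_j$ force distinct arguments of $R$) is cleaner than the paper's, and obtaining the identity only for $y>0$ is harmless because the paper's proof of that claim uses only positive arguments.
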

\begin{proof}
    Suppose for contradiction that $\{ 1 \} \cup \bigcup_{j=-1}^i Z_{j, k}$ is not linearly independent over $\mathbb{Q}$.
    Then, there are some elements $z_1, \dots z_m \in \bigcup_{j=-1}^i Z_{j, k}$
    and $c_1, \dots, c_m, d \in \mathbb{Q}$ not all zero
    such that $\sum_{j=1}^m c_j z_j + d = 0$.
    Recall that $z_j = R(s_1^{a_{j, 1}} \cdot {\dots} \cdot s_n^{a_{j, n}})$
    with $a_{j, 1}, \dots, a_{j, n} \in \mathbb{N}$ not all zero and $s_1, \dots, s_n \in \bigcup_{j=-1}^i \widehat{Z}_{j, k}$.
    We can rewrite the relation above as 
    \begin{equation*}
        \sum_{j=1}^m c_j \cdot \frac{P(s_1^{a_{j, 1}} \cdot {\dots} \cdot s_n^{a_{j, n}})}{Q(s_1^{a_{j, 1}} \cdot {\dots} \cdot s_n^{a_{j, n}})}
            + d = 0.
    \end{equation*}
    Since $Q$ is non zero on values above $0$ we multiply by the denominators and obtain:
    \begin{equation*}
        \sum_{j=1}^m c_j \cdot P(s_1^{a_{j, 1}} \cdot {\dots} \cdot s_n^{a_{j, n}}) \cdot
        \prod_{\ell = 1}^{\ell \neq j} Q(s_1^{a_{\ell, 1}} \cdot {\dots} \cdot s_n^{a_{\ell, n}})
            + d \prod_{j = 1}^m Q(s_1^{a_{j, 1}} \cdot {\dots} \cdot s_n^{a_{j, n}}) = 0.
    \end{equation*}
    Clearly, when symbols $s_1, \dots, s_n$ are replaced by variables, the expression above is a polynomial.
    Assuming it is non-zero, we have that $\bigcup_{j=-1}^i\widehat{Z}_{j, k}$ is not algebraically 
    independent over $\mathbb{Q}$ thus contradicting our initial assumption.
    Thus the only thing left to show is that the above is non-zero.

    Clearly, the polynomial mentioned above is non-zero if the following expression is non-zero:
    \begin{equation*}
        \sum_{j=1}^m c_j \cdot \frac{P(y_1^{a_{j, 1}} \cdot {\dots} \cdot y_n^{a_{j, n}})}{Q(y_1^{a_{j, 1}} \cdot {\dots} \cdot y_n^{a_{j, n}})}
            + d.
    \end{equation*}
    Concentrating on the above
    let $\beta = \max\{ a_{j, \ell} \mid 1 \leq j \leq m, 1 \leq \ell, \leq n \} + 1$
    and substitute each $y_\ell$ with $x^{\beta^\ell}$.
    Now, write $B_\ell = \beta^1 a_{\ell, 1} + \dots + \beta^{n}a_{\ell, n}$.
    Here $B_\ell$ is the $\beta$-ary encoding of the tuple $(a_{\ell, 1}, \dots, a_{\ell, n})$.
    Notice that each tuple is distinct as $z \in Z_{j, k}$ is formed by applying $R$
    to a non empty product of (possibly repeated) elements in $\widehat{Z}_{j, k}$ multiplied
    by some (possibly empty) product of (possibly repeated) elements in $\{ e^{\frac{1}{\kappa}}\} = \widehat{Z}_{-1, k}$.
    Since each tuple is distinct, we have that all $B_\ell$ are as well.
    What is obtained is the following expression in a single variable:
    \begin{equation*}
        \sum_{j=1}^m c_j \cdot \frac{P(x^{B_j})}{Q(x^{B_j})}
            + d.
    \end{equation*}
    But then, by Claim~\ref{claim_expression_does_not_reduce_further}, the functions 
    $1, \frac{P(x^{B_1})}{Q(x^{B_1})}, \dots, \frac{P(x^{B_m})}{Q(x^{B_m})}$
    are linearly independent. Thus, there is some value in $\mathbb{Q}$
    under which the expression above is defined and non-zero as required.
\end{proof}

Once again, the only missing part of our proof is that of algebraic independence of (finite subsets of) $\widehat{Z}_{i, k}$ over $\mathbb{Q}$.
This is again achieved by Schanuel's conjecture.
\begin{lemma}\label{lemma_algeb_indep_full_case}
    Under Schanuel's conjecture,
    finite subsets of $\bigcup_{j=-1}^L \widehat{Z}_{j, k}$ are algebraically independent over $\mathbb{Q}$.
\end{lemma}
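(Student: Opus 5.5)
We argue exactly as in the restricted proof of Section~\ref{section_main_result}, with the level $-1$ and the scaling by $\kappa$ built in. Fix $k\geq 1$ and write $\kappa=\kappa_k$. Since every finite subset of $\bigcup_{j=-1}^L\widehat{Z}_{j,k}$ lies in $\bigcup_{j=-1}^L\widehat{Z}_{j,k,r}$ for some $r$, it suffices to fix $r\geq 1$. We then prove by induction on $i$, for $-1\leq i\leq L$, three conditions:
\begin{itemize}
    \item[(1)] $\bigcup_{j=-1}^{i}\widehat{Z}_{j,k,r}$ is algebraically independent over $\mathbb{Q}$;
    \item[(2)] the sets $\widehat{Z}_{-1,k,r},\dots,\widehat{Z}_{i,k,r}$ are pairwise disjoint, and so are $\{1\},Z_{-1,k,r},\dots,Z_{i,k,r}$;
    \item[(3)] every element of $Z_{j,k,r}$ with $j\leq i$ is a positive real, so every element of $\widehat{Z}_{j+1,k,r}$ is a real number $>1$.
\end{itemize}
Condition~(3) is needed because $R$ need not be injective on all of $\mathbb{R}$. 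It is, however, injective and positive on $(1,\infty)$, since Schanuel-readiness makes $R$ strictly increasing and positive there.

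\textbf{Base case $i=-1$.} The set $\widehat{Z}_{-1,k,r}=\{e^{1/\kappa}\}$ is algebraically independent because $e^{1/\kappa}$ is transcendental (Theorem~\ref{theorem_LW}). The elements $\sigma(l_j)=R(e^{l_j})$ with $l_j>0$ are positive, because $e^{l_j}>1$. They are also transcendental, by the following observation. If $y$ is transcendental and $R(y)=a$ were algebraic, then $P(y)-aQ(y)=0$. This is a nonzero polynomial identity, since $R$ is non-constant, and it would make $y$ algebraic over $\bar{\mathbb{Q}}$, hence algebraic. Thus $1\notin Z_{-1,k,r}$.

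\textbf{Inductive step $i\to i+1$.} By~(1) and Claim~\ref{claim_alg_indep_to_lin_appendix} (applied to the finite set at hand), the set
\[
S=\{1\}\cup\bigcup_{j=-1}^{i}Z_{j,k,r}
\]
is linearly independent over $\mathbb{Q}$, and hence so is $S/\kappa=\{s/\kappa\mid s\in S\}$. Schanuel's conjecture applied to $S/\kappa$ gives
\[
\mathsf{trdeg}_{\mathbb{Q}}\,\mathbb{Q}\bigl(S/\kappa\cup\{e^{s/\kappa}\mid s\in S\}\bigr)\geq|S|.
\]
The exponentials that appear are exactly $e^{1/\kappa}$ together with $\bigcup_{j=0}^{i+1}\widehat{Z}_{j,k,r}$.

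Next we compare cardinalities. The map $z\mapsto e^{z/\kappa}$ is injective, so $|Z_{j,k,r}|=|\widehat{Z}_{j+1,k,r}|$. Together with~(2) this gives
\[
|S|=1+\sum_{j=-1}^{i}|Z_{j,k,r}|=\sum_{j=-1}^{i+1}|\widehat{Z}_{j,k,r}|.
\]
Disjointness of $\{1\},Z_{-1,k,r},\dots,Z_{i,k,r}$ and injectivity of the exponential also show that the $\widehat{Z}$-sets up to level $i+1$ are pairwise disjoint. This settles the first half of~(2) for $i+1$, and the sum above equals $\bigl|\bigcup_{j=-1}^{i+1}\widehat{Z}_{j,k,r}\bigr|$.

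Now let $E=\mathbb{Q}(\bigcup_{j=-1}^{i+1}\widehat{Z}_{j,k,r})$. Each element of $S$ is either $1$ or $R$ applied to a monomial over these generators, with nonvanishing denominator since $Q$ has no positive roots. Hence $S\subseteq E$, and the field given by Schanuel's conjecture equals $E$. Lemma~\ref{lemma_almost_trivial_alg_indep} then yields~(1) for $i+1$. For~(3), the elements of $\widehat{Z}_{i+1,k,r}$ are $e^{z/\kappa}$ with $z>0$, hence $>1$. Every monomial forming $Z_{i+1,k,r}$ is therefore $>1$, so $R$ of it is positive.

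\textbf{Remaining disjointness, the main obstacle.} We must show that $Z_{i+1,k,r}$ is disjoint from $\{1\}$ and from each $Z_{j,k,r}$ with $j\leq i$.
\begin{itemize}
    \item Suppose $R(m)=R(m')$ for $m$ a monomial of $Z_{i+1,k,r}$ and $m'$ a monomial of $Z_{j,k,r}$. Both monomials are $>1$, and $R$ is injective on $(1,\infty)$, so $m=m'$. But $m$ contains a positive power of some element of $\widehat{Z}_{i+1,k,r}$. No such factor occurs in $m'$, which uses only $\widehat{Z}_{j,k,r}\cup\{e^{1/\kappa}\}$ (or only powers of $e^{1/\kappa}$ when $j=-1$). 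These sets are disjoint, so $m=m'$ is a nontrivial algebraic relation among the algebraically independent set from~(1), a contradiction.
    \item Suppose $1\in Z_{i+1,k,r}$, so $R(m)=1$ for some monomial $m$. By the observation in the base case, $m$ would then be algebraic. But $m$ is a nontrivial monomial in algebraically independent elements, hence transcendental, a contradiction.
\end{itemize}
This completes the induction, and with it the lemma.
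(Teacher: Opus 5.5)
Your proposal is correct and follows the paper's proof essentially step for step. It uses the same three-part induction (algebraic independence, pairwise disjointness, positivity), applies Schanuel's conjecture to the $\kappa$-scaled set $\{1\}\cup\bigcup_{j=-1}^{i} Z_{j,k,r}$, makes the same cardinality count, and concludes with Lemma~\ref{lemma_almost_trivial_alg_indep}. Your observation that $R(y)$ algebraic forces $y$ algebraic also makes explicit the paper's otherwise unjustified remark that no element of $Z_{i+1,k,r}$ is algebraic.
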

\begin{proof}
    The proof is in similar spirit to the one in the main body of the work.
    Let us return to the setting of sets $\widehat{Z}_{i, k, r}$ 
    and proceed by induction on $-1 \leq i \leq L$ maintaining
    \begin{enumerate}
        \item $\displaystyle \bigcup_{j=-1}^{i} \widehat{Z}_{j, k, r}$ is algebraically independent over $\mathbb{Q}$,
        \item the sets $\widehat{Z}_{-1, k, r}, \dots, \widehat{Z}_{i, k, r}$ are pairwise disjoint; and so are $\{1\}, Z_{-1, k, r}, \dots, Z_{i, k, r}$,
        \item each entry of $Z_{-1, k, r}, \dots, Z_{i, k, r}$
        is positive.
    \end{enumerate}
    Again the required result will be achieved by proving the above for each $r \geq 1$, as
    $\widehat{Z}_{i, k}$ is the union of $\widehat{Z}_{i, k, r}$ for all $r \geq 1$ and $\widehat{Z}_{i, k, r} \subseteq \widehat{Z}_{i, k, r+1}$.

    For the base case note that $\bigcup_{j=-1}^{-1 } \widehat{Z}_{j, k, r} = \{e^{\frac{1}{\kappa}}\}$, where
    $\kappa$ is a non-zero integer value that is the least common denominator of the positive rationals $l_1, \dots, l_k$ as well as the weights of $\gnn$.
    Hence, condition 1 holds trivially as, by the Lindemann–Weierstrass Theorem, $e^{\frac{1}{\kappa}}$
    is algebraically independent over $\mathbb{Q}$.
    Condition 2 is immediate in regards to $\widehat{Z}_{-1, k, r}$.
    Suppose that it does not hold in regards to $\{1\}, Z_{-1, k, r}$.
    Then, $1 \in Z_{-1, k, r}$ meaning that there is some positive rational $l_j$
    for which $\sigma(l_j) = 1$. But then $P(e^{l_j}) = Q(e^{l_j})$
    and thus $P(e^{l_j}) - Q(e^{l_j}) = 0.$ But since $e^{l_j}$ is transcendental
    we have that $P(x) - Q(x) = 0$ for all $x$. Thus, $\frac{P(x)}{Q(x)} = 1$ for all
    $x$ contradicting our requirement that the function is strictly increasing for $x > 1$. Condition 3 follows from the fact
    that $l_1, \dots, l_k$ are positive rationals
    thus making $e^{\frac{z}{\kappa}} > 1$; it being understood that
    $z$ is any sum (with possible repeats) of $l_1, \dots, l_k$. By definition,
    $R(x)$ is positive on $x>1$, thus $R(e^z)$ is positive
    as required.

    Now, suppose the above conditions hold for $-1 \leq i < L$.
    We show that the same is true for $i+1$ in place of $i$.
    By condition 1 of our inductive hypothesis, $\bigcup_{j=-1}^i \widehat{Z}_{j, k, r}$ is algebraically independent over $\mathbb{Q}$.
    Hence, by invoking Claim~\ref{claim_alg_indep_to_lin_appendix} on appropriate subsets, the set
    $\{ 1 \} \cup \bigcup_{j=-1}^{i}  Z_{j, k, r}$
    is linearly independent over $\mathbb{Q}$, and thus so is 
    $S = \{ \frac{1}{\kappa} \} \cup \bigcup_{j=-1}^{i} \{ \frac{z}{\kappa} \mid z \in Z_{j, k, r} \}$
    as $\kappa$ is a positive integer.
    With that, we may apply Schanuel's conjecture on $S$ and, in doing so, obtain
    $F=\mathbb{Q}(S \cup \bigcup_{j=-1}^{i+1} \widehat{Z}_{j, k, r})$ such that
    $\mathsf{trdeg}_{\mathbb{Q}}(F) \geq |S|$.
    When turning to analyse the size of $S$.
    Notice, first, that $|Z_{i, k, r}| = |\widehat{Z}_{i+1, k, r}|$.
    Now, by condition 2 of our induction hypothesis, $\{1\}, Z_{-1, k, r}, \dots, Z_{i, k, r}$
    are all pairwise disjoint. Thus,
    \begin{equation*}
        |S| = 
        |\{ 1 \} \cup \bigcup_{j=-1}^{i} Z_{j, k, r}| = 
        1 + \sum_{j=-1}^i |Z_{j, k, r}| =
        |\widehat{Z}_{-1, k, r}| + \sum_{j=0}^{i+1} |\widehat{Z}_{j, k, r}| = \sum_{j=-1}^{i+1} |\widehat{Z}_{j, k, r}|.
    \end{equation*}
    Now, write $E = \mathbb{Q}(\bigcup_{j=-1}^{i+1} \widehat{Z}_{j, k, r})$.
    Notice that $S \subseteq E$ as elements of $S$ are either rational,
    already in the set $\bigcup_{j=-1}^{i+1} \widehat{Z}_{j, k, r}$,
    or take the form $\frac{1}{\kappa} \cdot R(z) = \frac{1}{\kappa} \cdot  \frac{P(z)}{Q(z)}$, where $z$
    is a  monomial over $\bigcup_{j=-1}^{i} \widehat{Z}_{j, k, r}$ and $\kappa$
    is a positive rational.
    By condition 3 of our induction hypothesis, $z > 1$ as it is the result
    of exponentiation by a positive power.
    Thus,
    by definition of $R(x)$, we have $Q(z) \neq 0$.
    Since fields are closed under multiplicative inverses,
    we obtain $E = F$
    and thus $\mathsf{trdeg}_{\mathbb{Q}}(E) \geq |S| = \sum_{j=-1}^{i+1} |\widehat{Z}_{j, k, r}|$.
    The above combined with Lemma~\ref{lemma_almost_trivial_alg_indep} gives us
    that $\bigcup_{j=-1}^{i+1} \widehat{Z}_{j, k, r}$ is algebraically independent over $\mathbb{Q}$.
    This is then exactly condition 1.

    Condition 2 for $\widehat{Z}_{-1, k, r}, \dots, \widehat{Z}_{i+1, k, r}$
    is achieved noting that, by I.H., $\{1\}, Z_{-1, k, r}, \dots, Z_{i, k, r}$
    are pairwise disjoint sets and exponentiation (of $e^{\frac{1}{\kappa}}$) is injective.
    Now, suppose that $Z_{i+1, k, r} \cap Z_{j, k, r} \neq \emptyset$ for some $0 \leq j \leq i$.
    Let the common element be $z$. Clearly, $z$ equals $R(s_1) = R(s_2)$, where $s_1$
    is a
    monomial over $\widehat{Z}_{i+1, k, r}$ multiplied with some non-negative power of $e^{\frac{1}{{\kappa}}}$ and $s_2$
    a monomial over $\widehat{Z}_{j, k, r}$ multiplied by some non-negative power of $e^{\frac{1}{{\kappa}}}$.
    But, by our argument above, $\widehat{Z}_{i+1, k, r} \cap \widehat{Z}_{j, k, r} = \emptyset$.
    Additionally, by condition 1 and 2 (already established above for $\widehat{Z}_{i+1, k, r}$),
    the sets $\widehat{Z}_{i+1, k, r}, \widehat{Z}_{j, k, r}$, $\{e^{\frac{1}{{\kappa}}}\}$
    are distinct and their union is algebraically independent over $\mathbb{Q}$.
    Hence, $s_1 \neq s_2$. Notice that
    $s_1, s_2 > 1$ as a consequence of condition 3.
    Indeed, by I.H., all entries of $Z_{-1, k, r}, \dots, Z_{i, k, r}$
    are positive. By definition, $s_1$ and $s_2$ are then $e$ raised to a positive power.
    Then, since $R(x)$ is strictly increasing on $x > 1$, we have that
    $R(s_1) \neq R(s_2)$ thus contradicting our initial assumption.
    The case where $Z_{i+1, k, r} \cap Z_{-1, k, r} \neq \emptyset$ is similar
    as $\widehat{Z}_{i+1, k, r}, \{e^{\frac{1}{{\kappa}}}\}$ are distinct and, when taken together, 
    algebraically independent over $\mathbb{Q}$. Finally,
    $Z_{i+1, k, r} \cap \{1\} = \emptyset$ follows from the fact that
    $1$ is algebraic over $\mathbb{Q}$ whilst elements of $Z_{i+1, k, r}$ are not.

    Condition 3 holds following the same observations as for the base case. Indeed, by I.H. entries of $Z_{i, k, r}$ are all positive.
    Thus,
    $e^{\frac{z_1}{\kappa}} \cdot {\dots} \cdot e^{\frac{z_n}{\kappa}} > 1$, for
    any choice of $z_1, \dots, z_n \in Z_{i, k, r} \cup \{ 1 \}$.
    Hence, again by definition, $R(e^{\frac{z_1}{\kappa}} \cdot {\dots} \cdot e^{\frac{z_n}{\kappa}}) > 0$.
\end{proof}

\begin{claim}
    Under Schanuel's conjecture, the vertex embeddings produced by $\gnn$ 
    partition the class of vertices into the same equivalence classes as the $L$-round
    1-WL test. Additionally, the graph embeddings produced by $\gnn$
    partition the class of graphs into the same equivalence classes as
    the $L$-round 1-WL test.
\end{claim}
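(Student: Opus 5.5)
The plan is to apply Lemma~\ref{lemma_prelim_lin_indep_implies_wl}, in its effectively-injective form, to the Schanuel-ready GNN $\gnn$. This requires two facts: each update function $\widetilde{\lambda}^{(0)},\dots,\widetilde{\lambda}^{(L)}$ is effectively injective, and every finite subset of the image of $\widetilde{\lambda}^{(L)}$ is linearly independent over $\mathbb{Q}$. Both will follow from Lemma~\ref{lemma_algeb_indep_full_case} combined with Claim~\ref{claim_alg_indep_to_lin_appendix}. Together these give that, for every $k$, every finite subset of $\{1\}\cup\bigcup_{j=-1}^{L} Z_{j,k}$ is $\mathbb{Q}$-linearly independent. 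Any finite collection of embeddings arising in finitely many graphs lies in such a union for a single large enough $k$, by Claim~\ref{claim_appendix_computation_in_Z_sets} and the monotonicity of the sets in $k$ and $r$. Linear independence of the final layer is then immediate.

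\emph{Layer $0$.} Suppose two non-negative rational feature vectors $\bar m \neq \bar m'$ give the same value. Since $\sigma$ restricted to the relevant arguments is $R$ composed with $\exp$, and $R$ is strictly increasing on $y>1$ with positive arguments, equality of $\sigma$-values forces equality of the inner sums $\sum_j 2^j\sigma(m_j+1)=\sum_j 2^j\sigma(m'_j+1)$. The values $\sigma(q+1)$ are distinct elements of $Z_{-1,k}$ and hence linearly independent. Comparing coefficients therefore says that, for each rational value $q$, the sums $\sum_{j: m_j=q}2^j$ and $\sum_{j:m'_j=q}2^j$ coincide. By uniqueness of binary expansions this gives $\bar m=\bar m'$.

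\emph{Layers $i\ge 1$.} Suppose two inputs $(x,M)$ and $(x',M')$ drawn from $Z_{i-1,k}$ give the same output. Injectivity of $R\circ\exp$ on the positive range (Condition~3) gives $C_ix+A_i\sum M = C_ix'+A_i\sum M'$. Collect coefficients over the distinct values appearing and use linear independence of $Z_{i-1,k}$. Each value $z$ then receives the coefficient $C_i[x=z]+A_i\,\mathrm{mult}_M(z)$ on one side and $C_i[x'=z]+A_i\,\mathrm{mult}_{M'}(z)$ on the other, and these must agree.
\begin{itemize}
\item If $x=x'$, the $C_i$ terms cancel, so the multiplicities agree and $M=M'$.
\item If $x\ne x'$, then at $z=x$ we get $C_i = A_i(\mathrm{mult}_{M'}(x)-\mathrm{mult}_M(x))$, so $C_i/A_i$ is an integer. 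It is positive because $C_i,A_i>0$, so $C_i/A_i\in\mathbb{N}$, contradicting the Schanuel-ready assumption $\frac{C_i}{A_i}\notin\mathbb{N}$.
\end{itemize}
With both inputs established, Lemma~\ref{lemma_prelim_lin_indep_implies_wl} yields the vertex-level and the graph-level statements simultaneously.

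The main obstacle is the bookkeeping that puts all relevant values into one finite set at a common $k$ and $r$, so that the independence results apply. Two details need care here. First, $\sigma(x)=R(e^x)$ must be evaluated only at arguments with $e^x>1$, so that strict monotonicity of $R$ is valid; positivity of the $Z$-entries and $b_i\ge0$ ensure this. Second, the injectivity argument for layer~0 must check that the relevant $\sigma(q+1)$ are distinct. This follows because they are $R$ of distinct powers $e^{q+1}>1$.
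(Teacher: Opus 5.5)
Your proposal is correct and takes essentially the same route as the paper. Effective injectivity of $\widetilde{\lambda}^{(0)}$ comes from strict monotonicity of $R$ on $y>1$ together with the binary-encoding argument over the linearly independent values $\sigma(q+1)\in Z_{-1,k}$; effective injectivity of $\widetilde{\lambda}^{(i)}$ comes from the coefficient comparison that would force $C_i/A_i\in\mathbb{N}$; and linear independence of $Z_{L,k}$ at a common $k$ handles the readout, all fed into Lemma~\ref{lemma_prelim_lin_indep_implies_wl}. Your single coefficient comparison at $z=x$ simply merges the paper's two subcases for $x\neq x'$ into one.
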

\begin{proof}
    For the first statement we show
    that each of the functions $\widetilde{\lambda}^{(0)}, \dots, \widetilde{\lambda}^{(L)}$ are effectively injective on their respective domains and invoke Lemma~\ref{lemma_prelim_lin_indep_implies_wl}.
    For the second statement, we show that each finite subset
    of vertex embeddings produced by $\gnn$
    is linearly independent over $\mathbb{Q}$.

    Let us first take a look at $\widetilde{\lambda}^{(0)}$.
    Take any featured graph $G = (V, E, \lambda)$ and any $v, u \in V$.
    Suppose for contradiction that $\lambda^{(0)}(v) = \lambda^{(0)}(u)$,
    but the initial features are such that $\lambda(v) \neq \lambda(u)$.
    Inspecting the initialisation function we see that its first layer
    gives vectors $\alpha = \sigma(\lambda(v) + \pmb{1}_N)$ and
    $\beta = \sigma(\lambda(u) + \pmb{1}_N)$.
    Since $\lambda(v) + \pmb{1}_N$ and $\lambda(u) + \pmb{1}_N$
    contain only positive entries we have that
    $\alpha \neq \beta$ as $R(y)$ is strictly increasing on $y > 1$, thus 
    ensuring that $\sigma(x)$ strictly increases on $x > 0$.
    It is in the second layer of the initialisation that 
    we must have $\sigma\big((2^1, \dots, 2^{N}) \times \alpha\big) = 
    \sigma\big((2^1, \dots, 2^{N}) \times \beta\big)$.
    Suppose first that $(2^1, \dots, 2^{N}) \times \alpha = (2^1, \dots, 2^{N}) \times \beta$.
    By Claim~\ref{claim_appendix_computation_in_Z_sets},
    the entries of $\alpha$ and $\beta$ belong to $Z_{-1, k}$ for $k$
    large enough. By Lemma~\ref{lemma_algeb_indep_full_case} and Claim~\ref{claim_alg_indep_to_lin_appendix},
    the values of $Z_{-1, k} = \{s_1, \dots, s_k \}$
    are linearly independent over $\mathbb{Q}$.
    Thus, it must be the case that
    $\sum_{1 \leq j \leq N}^{\alpha[j] = s_\ell} 2^j = 
     \sum_{1 \leq j \leq N}^{\beta[j] = s_\ell} 2^j$ for all
     $1 \leq \ell \leq k$; it being understood that $\alpha[j]$ is the
     $j$-th entry of $\alpha$ (and similarly for $\beta$).
     But then, for a given $j$, the respective sum is a binary
     encoding of positions where $s_j$ resides in $\alpha$ and $\beta$.
     Thus, $\alpha[j] = \beta[j]$ for all $1 \leq j \leq N$ 
     contradicting $\alpha \neq \beta$.
     Lastly, writing $\alpha' = (2^1, \dots, 2^{N}) \times \alpha$
     and $\beta' = (2^1, \dots, 2^{N}) \times \beta$
     we must have that $\sigma(\alpha') = \sigma(\beta')$.
     But $\alpha', \beta' > 0$ thus resulting in $e^{\alpha}, e^{\beta'} > 1$
     and $\sigma(\alpha') \neq \sigma(\beta')$, a contradiction to our initial assumption
     arising from the fact $R(x)$ is strictly increasing for $x > 1$.

     For $\lambda^{(i)}$ with $1 \leq i \leq L$ we proceed as follows.
     Suppose for contradiction
    that $\lambda^{(i)}(v) = \lambda^{(i)}(u)$, but
    $$
    \big( \lambda^{(i-1)}(v), \msl \lambda^{(i-1)}(v') \mid v' \in N(v) \msr \big) \neq
    \big( \lambda^{(i-1)}(u), \msl \lambda^{(i-1)}(u') \mid u' \in N(u) \msr \big).$$ But then,
    \begin{equation*}
        \sigma(C_{i} \lambda^{(i-1)}(v) + A_{i} \sum_{v' \in N(v)} \lambda^{(i-1)}(v') + b_{i}) = 
    \sigma(C_{i} \lambda^{(i-1)}(u) + A_{i} \sum_{u' \in N(u)} \lambda^{(i-1)}(u') + b_{i}).
    \end{equation*}
    By Claim~\ref{claim_appendix_computation_in_Z_sets}
    the embeddings given by $\lambda^{i-1}$ are all in $Z_{i-1, k}$
    for $k$ large enough.
    It is easy to verify that the sums inside the activation function
    are positive. Since $\sigma(x)$ is strictly increasing when
    $x > 0$ we have that, in fact, 
    \begin{equation*}
        C_{i} \lambda^{(i-1)}(v) + A_{i} \sum_{v' \in N(v)} \lambda^{(i-1)}(v') + b_{i} = 
    C_{i} \lambda^{(i-1)}(u) + A_{i} \sum_{u' \in N(u)} \lambda^{(i-1)}(u') + b_{i}.
    \end{equation*}
    Thus, supposing the relevant values of $Z_{i-1, k}$
    are $s_1, \dots, s_n$,
    we obtain the following relation:
    \begin{equation*}
        C_{i} s_r + A_{i} \sum_{\ell = 1}^{n} \alpha_\ell s_\ell = 
        C_{i} s_t + A_{i} \sum_{\ell = 1}^{n} \beta_\ell s_\ell,
    \end{equation*}
    where all $\alpha_\ell, \beta_\ell$ are in $\mathbb{N}$
    and represent the number of neighbours (of $v$ and $u$ respectively) with the embedding
    $s_\ell$. It is to be understood that
    $s_r = \lambda^{(i-1)}(v)$ and $s_t = \lambda^{(i-1)}(u)$.
    Supposing for the moment $s_r = s_t$ we have that
    $\sum_{\ell = 1}^{n} \alpha_\ell s_\ell = \sum_{\ell = 1}^{n} \beta_\ell s_\ell$.
    Since values of $Z_{i-1, k}$ are linearly independent over $\mathbb{Q}$
    (Lemma~\ref{lemma_algeb_indep_full_case} with Claim~\ref{claim_alg_indep_to_lin_appendix})
    we have that $\alpha_\ell = \beta_\ell$ for all $1 \leq \ell \leq n$.
    Thus, there is a bijection $f : N(v) \to N(u)$
    such that $\lambda^{(i-1)}(v') = \lambda^{(i-1)}(f(v'))$.
    Thus,
    $\big( \lambda^{(i-1)}(v), \msl \lambda^{(i-1)}(v') \mid v' \in N(v) \msr \big)
    =
    \big( \lambda^{(i-1)}(u), \msl \lambda^{(i-1)}(f(v')) \mid v' \in N(v) \msr \big)
    =
    \big( \lambda^{(i-1)}(u), \msl \lambda^{(i-1)}(u') \mid u' \in N(u) \msr \big)$
    thus contradicting our initial assumption.
    
    Now suppose that $s_r \neq s_t$. If $\alpha_\ell = \beta_\ell$ for
    all $1 \leq \ell \leq n$, then $C_{i}s_r = C_{i}s_t$.
    Since $C_{i} > 0$ this results in a contradiction.
    Now, if $\alpha_\ell \neq \beta_\ell$ for some choice of $\ell$, we obtain the following by grouping dependent terms together:
    \begin{equation*}
        C_{i} s_r + A_{i} \alpha_r s_r = A_{i} \beta_r s_r \quad \text{ and } \quad
        A_{i} \alpha_t s_t = C_{i} s_t + A_{i} \beta_t s_t.
    \end{equation*}
    Focusing on the first equation, recall that $\frac{C_{i}}{A_{i}} \not\in \mathbb{N}$.
    In fact, since $A_{i}, C_{i} > 0$, we have that 
    $\frac{C_{i}}{A_{i}} \not\in \mathbb{Z}$.
    But then $\beta_r - \alpha_r = \frac{C_{i}}{A_{i}}$.
    Since $\alpha_r, \beta_r \in \mathbb{N}$ we have that
    $\frac{C_{i}}{A_{i}} \in \mathbb{Z}$, a contradiction.

    For the second statement of the claim we proceed as follows.
    Take any finite subset $\{s_1, \dots, s_n\}$ such that
    $s_j = \lambda^{(L)}(v_j)$ for some pointed graph $(G_j, v_j)$.
    Now, let $G$ be the disjoint union of $G_1, \dots, G_n$.
    Claim~\ref{claim_appendix_computation_in_Z_sets} then establishes
    $s_1, \dots, s_n \in Z_{L, k}$ for $k$ large enough.
    Lemma~\ref{lemma_algeb_indep_full_case} and Claim~\ref{claim_alg_indep_to_lin_appendix}
    then establish that these values are linearly independent over $\mathbb{Q}$.
\end{proof}

\section{Proof of Proposition~\ref{prop_1_WL_over_virtual}}
\propWLovervirtual*
\begin{proof}
The proof is comprised of two inductive arguments.
First, we show that $\mu^{(i+1)}(v) = \mu^{(i+1)}(u)$
implies $\mu^{(i)}(v) = \mu^{(i)}(u)$.
For the base case suppose that 
$\mu^{(1)}(v) = \mu^{(1)}(u)$, but
$\mu^{(0)}(v) \neq \mu^{(0)}(u)$.
If $v$ and $u$ are ordinary vertices, then 
$\mu^{(1)}(v) = \mathsf{hash}(\mu^{(0)}(v), \dots) \neq 
\mathsf{hash}(\mu^{(0)}(u), \dots) =
\mu^{(1)}(u)$, a contradiction.
If $v$ and $u$ are virtual then, by definition, 
$\mu^{(0)}(v) = \mathsf{hash}(\mathsf{virtual}) = \mu^{(0)}(u)$.
The case where one node is virtual and the other is ordinary is immediate as then $\mu^{(j)}(v) \neq \mu^{(j)}(u)$ for any $j \geq 0$.

Let us assume the required condition
holds for $i > 0$. We extend this to the $i+1$ setting as follows.
Suppose $\mu^{(i+1)}(v) = \mu^{(i+1)}(u)$.
If $v$ and $u$ are ordinary vertices, then 
$\mu^{(i)}(v) \neq \mu^{(i)}(u)$ would result in 
a similar contradiction as in the base case.
On the other hand suppose $v$ and $u$ are virtual and
that $\mu^{(i+1)}(v) = \mu^{(i+1)}(u)$,
but $\mu^{(i)}(v) \neq \mu^{(i)}(u)$.
Writing $S = \msl \mu^{(i-1)}(v') \mid v' \in N(v) \msr$
and $Z  = \msl \mu^{(i-1)}(u') \mid u' \in N(u) \msr$
we observe that $S \neq Z$.
But then, $S^+ = Z^+$, where
$S^+ = \msl \mu^{(i)}(v') \mid v' \in N(v) \msr$
and $Z^+  = \msl \mu^{(i)}(u') \mid u' \in N(u) \msr$.
Now, let $f$ be a bijection
between $N(v)$ and $N(u)$ such that $\mu^{(i)}(v') = \mu^{(i)}(f(v'))$
for all $v' \in N(v)$. But then, by I.H., the same bijection
gives $\mu^{(i-1)}(v') = \mu^{(i-1)}(f(v'))$
and establishes $S = Z$, a contradiction.
The case where one node is virtual and the other is ordinary is immediate.

We now prove that
$\mu^{(L)}(v) = \mu^{(L)}(u)$ implies
$\lambda^{(L)}(v) = \lambda^{(L)}(u)$
by induction on $0 \leq i \leq L$.
The base case $i=0$ is immediate as $\lambda(v) = \lambda(u)$
if and only if $\mu^{(0)}(v) = \mu^{(0)}(u)$.
Since $\lambda^{(0)}$ is an FNN for ordinary vertices and produced the same vector for virtual ones, we have that
$\mu^{(0)}(v) = \mu^{(0)}(u)$ implies
$\lambda^{(0)}(v) = \lambda^{(0)}(u)$.

For the inductive step assume that our hypothesis
holds for $0 \leq i < L$.
We show that it holds for $i+1$ as well.
For this purpose suppose $\mu^{(i+1)}(v) = \mu^{(i+1)}(u)$.
Then, as argued above, $\mu^{(i)}(v) = \mu^{(i)}(u)$ and
$\msl \mu^{(i)}(v') \mid v' \in N(v) \msr = 
\msl \mu^{(i)}(u') \mid u' \in N(u) \msr$.
But then, by our inductive hypothesis,
$\lambda^{(i)}(v) = \lambda^{(i)}(u)$
as well as
$\msl \lambda^{(i)}(v') \mid v' \in N(v) \msr = 
\msl \lambda^{(i)}(u') \mid u' \in N(u) \msr$.
Recall that, before activation, $\lambda^{(i+1)}(v)$
is a linear combination of $\sum \msl \lambda^{(i)}(v') \mid v' \in N(v) \msr$ and $\lambda^{(i)}(v)$; and, in fact, so is $\lambda^{(i+1)}(u)$.
It is then easy to see that $\lambda^{(i+1)}(v) = \lambda^{(i+1)}(u)$ as required.

To see that $\mu_G = \mu_H$ implies $\lambda_G = \lambda_H$ one needs only
note that $\msl \mu^{(L)}(v) \mid v \in V \msr = 
\msl \mu^{(L)}(u) \mid u \in U \msr$; it being understood that
$V$ is the set of vertices of $G$ and $U$ is the set of vertices of $H$.
Then, $\sum \msl \lambda^{(L)}(v) \mid v \in V \msr = 
\sum \msl \lambda^{(L)}(u) \mid u \in U \msr$
as required.
\end{proof}

\section{Full proof of Theorem~\ref{prop:converse}}
\propConverse*
In this section we only fill in the parts of the above theorem.
Namely we show that $G_Q$ and $G_R$ as constructed in the theorem
are distinguishable by $L$-round 1-WL, that  $\gnn$ is such that
$\lambda_{G_Q} = \lambda_{G_R}$, and show that
the above theorem holds with $\sigma \in \{ \sinh, \cosh\}$.

We proceed by showing the first claim.
For this purpose recall the pointed graphs
$(G, v_1), \dots, (G, v_n)$,
values $z_i = \exp (A_L'\lambda^{(L-1)}(v_i))$, the identity
\begin{equation*}
        \sum_{i=1}^{q}
        \alpha_i
        z_1^{\beta_{i,1}}\cdots z_n^{\beta_{i,n}}
        \exp(b'_L)
        +
        \alpha_{q+1}\exp(b'_L)
        =
        \sum_{j=1}^{r}
        \gamma_j
        z_1^{\delta_{j,1}}\cdots z_n^{\delta_{j,n}}
        \exp(b'_L),
\end{equation*}
and graph construction:
\begin{enumerate}
        \item for $1\leq i\leq q$, add $\alpha_i$ disjoint copies
        of the graph constructed as follows:
        \begin{enumerate}
            \item for $1\leq k\leq n$, add $\beta_{i,k}$ fresh
            copies of the pointed graph $(G,v_k)$;
            \item create a fresh virtual node $u$ and add an edge
            from $u$ to the distinguished vertex in each copy
            added in the previous step;
        \end{enumerate}
        \item add $\alpha_{q+1}$ fresh isolated virtual vertices;
        \item for $1\leq j\leq r$ and $1\leq k\leq n$, add
        $\gamma_j\delta_{j,k}$ disjoint copies of $(G,v_k)$.
    \end{enumerate}
The graph $G_R$ is defined symmetrically: interchange
    $q$ and $r$ as well as $(\alpha,\beta)$ and $(\gamma,\delta)$,
    and omit Step~2.
\begin{claim}\label{appendix_claim_about_QR_and_WL}
    For $G_Q$ and $G_R$ as in Theorem~\ref{prop:converse}
    we have $\mu_{G_Q} \neq \mu_{G_R}$.
\end{claim}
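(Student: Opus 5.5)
We show that the multiset of $L$-round 1-WL colors differs between $G_Q$ and $G_R$ by comparing only their virtual nodes. All non-virtual vertices lie in copies of $G$, and edges run only from virtual nodes into these copies. Hence the color of every vertex inside a copy of $(G,v_k)$ equals its color in $G$.

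The multiset of ordinary colors is the same on both sides. For each $k$, the number of copies of $(G,v_k)$ in $G_Q$ is $\sum_i \alpha_i\beta_{i,k} + \sum_j \gamma_j\delta_{j,k}$, and $G_R$ contains the same number by symmetry. So $\mu_{G_Q}\neq\mu_{G_R}$ reduces to showing that the multisets of $L$-round colors of the virtual nodes differ.

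Since $\mu^{(L)}$ of a virtual node $u$ is $\mathsf{hash}(\mathsf{virtual},\msl \mu^{(L-1)}(w)\mid w\in N(u)\msr)$, it is determined by, and by injectivity of $\mathsf{hash}$ determines, the multiset of $(L-1)$-round colors of its neighbours. The neighbours of a virtual node realizing the monomial with exponent vector $\beta_i$ are $\beta_{i,k}$ copies of $v_k$ for each $k$. By the first part of the argument, the colors $\mu^{(L-1)}(v_1),\dots,\mu^{(L-1)}(v_n)$ are pairwise distinct, since $(G,v_1),\dots,(G,v_n)$ are pairwise distinguishable by $(L-1)$-round 1-WL. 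Consequently, distinct exponent vectors yield distinct colors of virtual nodes, and the multiset of virtual-node colors in $G_Q$ encodes the multiset of exponent vectors with multiplicities. In $G_Q$ this is $\{\beta_i$ with multiplicity $\alpha_i\}$ plus the zero vector with multiplicity $\alpha_{q+1}$; in $G_R$ it is $\{\delta_j$ with multiplicity $\gamma_j\}$. (For $L=1$ the same holds with $\mu^{(0)}$, and the distinctness of the $\mu^{(0)}(v_k)$ still follows from Proposition~\ref{prop_1_WL_over_virtual}.)

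It remains to see that these multisets differ. The polynomials $Q$ and $R$ have disjoint sets of monomials. Moreover, $P=Q-R$ is non-zero, so at least one of $Q$, $R$ has a monomial with positive coefficient. If $r\geq1$, the vector $\delta_1$ occurs in $G_R$ but, by disjointness, in no way in $G_Q$. If $r=0$, then $G_R$ has no virtual nodes, while $G_Q$ has at least one, because $Q\neq 0$ (some $\alpha_i>0$ or $\alpha_{q+1}>0$). Either way the color multisets differ, so $\mu_{G_Q}\neq\mu_{G_R}$.

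The main subtlety is the encoding step: it needs the $(L-1)$-round colors of the $v_k$ to be pairwise distinct, so that different exponent vectors give different virtual colors. This is exactly what is extracted from the GNN via Proposition~\ref{prop_1_WL_over_virtual} in the proof of Theorem~\ref{prop:converse}.
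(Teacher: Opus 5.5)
Your proof is correct and follows essentially the same route as the paper. The copies of the $(G,v_k)$ contribute identical colour multisets to both graphs. A virtual node's $L$-round colour encodes its exponent vector, because the $(L{-}1)$-round colours of $v_1,\dots,v_n$ are pairwise distinct. Disjointness of the monomials of $Q$ and $R$ then separates the two graphs.

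The only difference is cosmetic. The paper splits on whether $\alpha_{q+1}\neq 0$ and exhibits a virtual node of $G_Q$ with no counterpart in $G_R$. You split on $r$ and use $\delta_1$ as the witness. This handles the degenerate case where one side has no non-constant monomials without needing the positivity of the $z_i$, which the paper uses only implicitly.
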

\begin{proof}
    Recall that $G_Q$ and $G_R$ are built from pointed graphs 
    $(G, v_1), \dots, (G, v_n)$ that are pairwise distinguishable
    by the $(L{-}1)$-round 1-WL algorithm.
    Looking at steps 1 (a) and 3 of the 
    construction in Theorem~\ref{prop:converse}
    we see that there were
    $\sum_{i=1}^q \alpha_i \beta_{i, k} +
    \sum_{j=1}^r \gamma_j \delta_{j, k}$
    disjoint copies of $(G, v_k)$ added to both $G_Q$ and $G_R$.
    Notice that the only edges added were ones that originate
    from a fresh vertex introduced in step 1 (b). Thus,
    in both $G_Q$ and $G_R$, the copies of $(G, v_k)$ are coloured identically
    by the $L$-round 1-WL and, as it were, cancel each-other out
    as far as 1-WL graph distinguishability is concerned.

    We now inspect the virtual vertices added in steps 1 (b) and 2.
    Firstly, notice that vertices added in 2 have a unique
    colour when compared to those added in 1 (b).
    Indeed these vertices have no edges associated with them
    and thus, if $\alpha_{q+1} \neq 0$, it is immediate that
    $\mu_{G_Q} \neq \mu_{G_R}$.

    On the other hand suppose that $\alpha_{q+1} = 0$.
    We claim that the vertices added in step 1 (b)
    make the two graphs distinguishable.
    For this purpose let $c_1, \dots, c_n$ be
    the $(L{-}1)$-round 1-WL colours produced on the pointed vertices
    $(G, v_1), \dots, (G, v_n)$.
    Recall that they are all distinct.
    Taking any $v \in V_Q$ added in step 1 (b) we see that it takes the colour
    $\mu^{(L)}(v) = \{ (c_1 \mapsto \beta_{i, 1}), \dots, (c_n \mapsto \beta_{i, n}) \}$
    for some $1 \leq i \leq q$.
    By inspecting $Q(x_1, \dots, x_n) = R(x_1, \dots, x_n)$
    in Theorem~\ref{prop:converse} we see that
    there is no $1 \leq j \leq r$
    for which
    $(\beta_{i, 1}, \dots \beta_{i, n}) = (\delta_{j, 1}, \dots \delta_{j, n})$. Thus, by the construction in step 1, there is no $u \in V_R$
    such that $\mu^{(L)}(u) = \mu^{(L)}(v)$.
    With that we conclude $\mu_{G_Q} \neq \mu_{G_R}$.
\end{proof}

\begin{claim}\label{appendix_claim_about_QR_and_GNN}
    For $G_Q$ and $G_R$ as in Theorem~\ref{prop:converse}
    we have $\lambda_{G_Q} = \lambda_{G_R}$.
\end{claim}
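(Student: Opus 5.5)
We compute $\lambda_{G_Q}$ and $\lambda_{G_R}$ by splitting each graph's vertices into three groups: vertices inside the copies of $G$, the virtual nodes from step~1(b), and the isolated virtual nodes from step~2 (present in $G_Q$ only). We show that the contributions of the first group coincide and that the sums over the virtual nodes are the two sides of the displayed identity.

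\textbf{Vertices inside copies of $G$.} Every added edge starts at a fresh virtual node and points into a copy of $G$. Hence the out-neighbourhood of every vertex inside a copy is exactly its neighbourhood in $G$. By induction on the layer index, each vertex of a copy of $(G,v_k)$ then has the same embedding $\lambda^{(i)}$ as the corresponding vertex of $G$, for every $i\le L$. In particular, the distinguished vertex of each copy of $(G,v_k)$ has $\lambda^{(L-1)}=\lambda^{(L-1)}(v_k)$. Counting copies as in Claim~\ref{appendix_claim_about_QR_and_WL}, both $G_Q$ and $G_R$ contain exactly $\sum_{i=1}^q \alpha_i\beta_{i,k}+\sum_{j=1}^r\gamma_j\delta_{j,k}$ copies of $(G,v_k)$ for each $k$. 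So the total contribution of these vertices to $\lambda_{G_Q}$ equals their total contribution to $\lambda_{G_R}$.

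\textbf{Virtual nodes.} By the observation at the start of the proof of Theorem~\ref{prop:converse}, a step~1(b) virtual node of $G_Q$ built for index $i$ has final embedding
\[\exp(b'_L)\prod_{k}\exp\bigl(A'_L\lambda^{(L-1)}(v_k)\bigr)^{\beta_{i,k}}=\exp(b'_L)\,z_1^{\beta_{i,1}}\cdots z_n^{\beta_{i,n}}.\]
There are $\alpha_i$ such nodes for each $i$. An isolated virtual node has empty aggregation, so its final embedding is $\sigma(b'_L)=\exp(b'_L)$; there are $\alpha_{q+1}$ of these. Summing, the virtual nodes of $G_Q$ contribute the left-hand side of the identity. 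Symmetrically, the virtual nodes of $G_R$ contribute the right-hand side. Since the identity holds, $\lambda_{G_Q}=\lambda_{G_R}$.

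\textbf{Expected obstacle.} The main care point is the inductive invariance claim for copies of $G$. One must check that the GNN update of an ordinary vertex depends only on its out-neighbours, which are unchanged, and that the layer-$(L-1)$ embeddings feeding the virtual nodes are exactly those of the $v_k$ in $G$. A second point is that the theorem only assumes 1-dimensional embeddings in the final layer. The sums therefore live in $\mathbb{R}$, and the identity $\prod_k \exp(A'_L x_k)=\exp(A'_L\sum_k x_k)$ is used with $A'_L$ a row vector acting on $\lambda^{(L-1)}(v_k)\in\mathbb{R}^{d_{L-1}}$; this is unproblematic because exponentiation is applied to a scalar.
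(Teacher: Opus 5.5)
Your proof is correct and follows the paper's argument: the copies of $G$ occur equally often in $G_Q$ and $G_R$ and so contribute the same amount to both readouts, while the step~1(b) and step~2 virtual nodes contribute exactly the two sides of the identity $\exp(b'_L)Q(z_1,\dots,z_n)=\exp(b'_L)R(z_1,\dots,z_n)$. The only cosmetic difference is how the equal contribution of the copies is justified: you use a direct induction showing that embeddings inside copies are unchanged, whereas the paper uses a bijection that preserves 1-WL colours.
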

\begin{proof}
    For the final global readout notice that, by construction of $G_Q$ and
    $G_R$,  
    there is a bijection
     between the vertices of $G_Q$ and $G_R$ added in  Steps~1(a) and~3 that preserves the colours attributed by the $L$-round 1-WL algorithm.
    Indeed, Step~1 adds $\alpha_i \cdot \beta_{i, k}$ copies of $(G_k,
    v_k)$ to $G_Q$ while Step~3
    does the same for $G_R$, and vice versa for $\gamma_j, \delta_{j, k}$.
    Thus, when concerned with the final readout, we need only look at the contribution of virtual nodes added
    at Steps~1(b) and~2.
    But notice that the final readout on the graphs is as follows
    \begin{equation*}
        \sum_{v \in V_Q} \lambda^{(L)}(v) = K + \exp(b'_L) \cdot Q(z_1, \dots, z_n) = K + \exp(b'_L) \cdot R(z_1, \dots, z_n) = \sum_{v \in V_R} \lambda^{(L)}(v),
    \end{equation*}
    where $K$ is the contribution of vertices that were added in Steps~1(a) and~3.
    Since the readout over the two graphs is the same, we have that $\lambda_{G_{Q}} = \lambda_{G_{R}}$, as desired.
\end{proof}
A high level explanation of how the two claims above complete
Theorem~\ref{prop:converse} is in order.
Recall that we assumed (for contradiction) that the values
$z_1, \dots, z_n$ are algebraically dependent over $\mathbb{Q}$.
This allowed us to construct two distinct polynomials $Q$ and $R$ with positive integer
coefficients such that $Q(z_1, \dots, z_n) = R(z_1, \dots, z_n)$.
They then gave rise to two graphs $G_Q$ and $G_R$
satisfying $\mu_{G_Q} \neq \mu_{G_R}$ (Claim~\ref{appendix_claim_about_QR_and_WL})
and $\lambda_Q = \lambda_R$ (Claim~\ref{appendix_claim_about_QR_and_GNN}).
But Theorem~\ref{prop:converse} requires
$\lambda_{G_1} = \lambda_{G_2} \Leftrightarrow \mu_{G_1} = \mu_{G_2}$
for all graphs $G_1, G_2$
thus resulting in a contradictions.
It must then be the case that if $\gnn$ matches the $L$-round 1-WL test
on graphs with virtual nodes, the values $z_1, \dots, z_n$
(where $z_i = \exp (A_L'\lambda^{(L-1)}(v_i))$) 
must be algebraically independent over $\mathbb{Q}$.

Lastly, we extend our reasoning to other advertised forms of activation:
\begin{claim}
    Theorem~\ref{prop:converse} extends to GNNs with
    $\sigma \in \{ \sinh, \cosh \}$ activation.
\end{claim}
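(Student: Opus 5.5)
The plan is to reuse the construction from the $\exp$ case verbatim and change only the algebraic identity that the two graphs $G_Q$ and $G_R$ realize. Suppose $\sigma\in\{\sinh,\cosh\}$ and write $c=\exp(b'_L)$ and $z_k=\exp(A'_L\lambda^{(L-1)}(v_k))$ as before. A virtual node whose neighbours are fresh copies of $(G,v_k)$, $m_k$ of them for each $k$, has final embedding
\[
\tfrac12\bigl(c\,z^{m}\pm c^{-1}z^{-m}\bigr),
\]
where $z^m=z_1^{m_1}\cdots z_n^{m_n}$ and the sign is $-$ for $\sinh$ and $+$ for $\cosh$. Hence for an integer polynomial $P'(x)=\sum_m a_m x^m$, the signed sum of virtual-node embeddings realizing its monomials equals $\tfrac12\bigl(c\,P'(z)\pm c^{-1}P'(z^{-1})\bigr)$. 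So I need a nonzero integer polynomial $P'$ with $P'(z)=0$ and $P'(z_1^{-1},\dots,z_n^{-1})=0$ simultaneously.

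To get $P'$, I start from the nonzero $P$ with $P(z_1,\dots,z_n)=0$ given by algebraic dependence. Let $P^{*}(x)=x^{E}P(x_1^{-1},\dots,x_n^{-1})$, where $x^E=x_1^{E}\cdots x_n^{E}$ and $E$ is at least the maximal degree of $P$ in each variable. Then $P^*$ is a genuine nonzero polynomial, and I set
\[
P'(x)=x_1\cdots x_n\,P(x)\,P^{*}(x).
\]
This is nonzero, being a product of nonzero polynomials in an integral domain. It vanishes at $z$ because $P(z)=0$. It vanishes at $z^{-1}$ because $P^*(z^{-1})=z^{-E}P(z)=0$. The extra factor $x_1\cdots x_n$ makes every exponent vector of $P'$ nonzero, so no isolated virtual nodes are needed (though they would also be harmless). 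Then I move the negative coefficients to the other side, obtaining $Q=R$ with non-negative integer coefficients and disjoint monomial sets, exactly as in the $\exp$ case.

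Next I build $G_Q$ and $G_R$ by Steps 1 and 3 of the construction, with Step 2 omitted. Claim~\ref{appendix_claim_about_QR_and_WL} applies unchanged: its proof only uses that the $(G,v_k)$ are pairwise $(L{-}1)$-round distinguishable and that the exponent vectors on the two sides are distinct. For that first property, distinct $z_k$ give distinct $\lambda^{(L-1)}(v_k)$, and Proposition~\ref{prop_1_WL_over_virtual} then separates them. For Claim~\ref{appendix_claim_about_QR_and_GNN}, the copies of $G$ again contribute a common constant $K$ to both readouts. The virtual nodes contribute $\tfrac12\bigl(c\,Q(z)\pm c^{-1}Q(z^{-1})\bigr)$ to $G_Q$ and $\tfrac12\bigl(c\,R(z)\pm c^{-1}R(z^{-1})\bigr)$ to $G_R$. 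These agree because $Q=R$ holds both at $z$ and at $z^{-1}$. Thus $\lambda_{G_Q}=\lambda_{G_R}$ while $\mu_{G_Q}\neq\mu_{G_R}$, contradicting the hypothesis.

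The main obstacle is the step producing a single integer relation that holds both at $z$ and at $z^{-1}$, since $\sinh$ and $\cosh$ mix $e^{x}$ and $e^{-x}$. The product of $P$ with its reciprocal $P^*$ is what resolves it; the rest is bookkeeping identical to the $\exp$ case.
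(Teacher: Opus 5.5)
Your proposal is correct and takes the paper's route. The paper likewise forms $P(x)\,P(x_1^{-1},\dots,x_n^{-1})$ and multiplies by a monomial to get an integer polynomial vanishing at both $z$ and $z^{-1}$; it then realizes the positive and negative parts by virtual nodes computing $\sinh$ or $\cosh$ of $\sum_k c_{i,k}s_k+b'_L$. Two of your sub-steps are cleaner than the paper's: you show the polynomial is nonzero as a product of nonzero elements of the integral domain $\mathbb{Z}[x_1,\dots,x_n]$ (the paper uses a limit computation), and comparing $Q$ and $R$ directly at $z$ and $z^{-1}$ makes the paper's separate nonvanishing check for $T_\pm$ unnecessary.
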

\begin{proof}
    To prove the Theorem in full we must show that the values in
    $\big\{ \exp \big(A_L'\lambda^{(L-1)}(v) \big) \mid v \in V \big\}$ are algebraically independent over $\mathbb{Q}$ for any
    given graph $G = (V, E, \lambda)$.
    For the purposes of contradiction suppose this is not the case.
    Enumerating the values as $z_1, \dots, z_n$ we obtain
    a non-zero polynomial $P(x_1, \dots, x_n)$ with rational coefficients for which
    $P(z_1, \dots, z_n) = 0$.
    We again assume, by multiplying by the lowest common denominator if needed, that the coefficients in $P$ are integers.

    Let us now inspect the expression
    $R(x_1, \dots, x_n) = P(x_1, \dots, x_n) \cdot P(x_1^{-1}, \dots, x_n^{-1})$.
    Clearly, $R(z_1, \dots, z_n) = 0$.
    Now, peeking into $P(x_1, \dots, x_n)$ we see that it takes the form
    $\sum_{i=1}^m a_i M_i + b$, where each $M_i$
    is a monomial in $x_1, \dots, x_n$.
    Thus, $R(x_1, \dots, x_n)$ takes the form:
    \begin{equation*}
        \Big(\sum_{i=1}^m a_i M_i + b \Big) \cdot 
        \Big(\sum_{i=1}^m a_i M_i^{-1} + b \Big).
    \end{equation*}
    Expanding the expression above we obtain:
    \begin{equation*}
        b^2 + 
        b\sum_{i=1}^m a_i M_i + b\sum_{i=1}^m a_i M_i^{-1} +
        \sum_{i=1}^m \sum_{j=1}^m a_ia_j M_i M_j^{-1}.
    \end{equation*}
    Having the above we wish to rid ourselves of monomials that 
    that feature both negative and positive powers; i.e. $M_iM_j^{-1}$.
    For this purpose let $N$ be any monomial over $x_1, \dots, x_n$
    such that, for all $1 \leq j \leq m$,
    $NM_j^{-1} = x_1^{k_1} \cdot {\dots} \cdot x_n^{k_n}$
    for some $k_1, \dots, k_n \in \mathbb{N} \setminus \{ 0 \}$.
    Clearly $S(x_1, \dots, x_n) = NR(x_1, \dots, x_n)$ is a polynomial that takes the form:
    \begin{equation*}
        b^2N + 
        b\sum_{i=1}^m a_i N M_i + b\sum_{i=1}^m a_i N M_i^{-1} +
        \sum_{i=1}^m \sum_{j=1}^m a_ia_j N M_i M_j^{-1}.
    \end{equation*}
    Then, $S(x_1, \dots, x_n)$ can be rewritten into the compact form
    $\sum_{i=1}^{m'} a'_i N_i$, where each $a'_i \in \mathbb{Z}$ and $N_i$
    is a monomial of the form $x_1^{c_{i, 1}}, \dots, x_n^{c_{i, n}}$, where
    $c_{i, 1}, \dots, c_{i, n} \in \mathbb{N}$.
    Now, recalling that $S(x_1, \dots, x_n) = N P(x_1, \dots, x_n) \cdot P(x_1^{-1}, \dots, x_n^{-1})$, we see that $S(z_1, \dots, z_n) = 0$.
    Additionally, $S(z_1^{-1}, \dots, z_n^{-1}) = 0$.
    Thus, the relations $T_{+}(x_1, \dots, x_n)$ and $T_{-}(x_1, \dots, x_n)$ defined as:
    \begin{equation*}
        \frac{e^{b'_L} S(x_1, \dots, x_n) \pm e^{-b'_L} S(x_1^{-1}, \dots, x_n^{-1})}{2}
    \end{equation*}
    also evaluate to zero at $z_1, \dots, z_n$; it being understood that
    $\pm \equiv +$ in $T_+$ with $\pm \equiv -$ in $T_-$, 
    and with $b_L'$ being the $L$-th layer bias vector used by virtual nodes.
    Peaking inside $T_\pm(z_1, \dots, z_n)$
    we see:
    \begin{equation*}
        \sum_{i=1}^{m'} a'_i \cdot \frac{e^{b'_L} N_i(z_1, \dots, z_n) \pm e^{-b'_L} N_i(z_1^{-1}, \dots, z_n^{-1})}{2}.
    \end{equation*}
    Since $N_i$ are monomials and $z_j = e^{s_j}$ we obtain:
    \begin{equation*}
        \sum_{i=1}^{m'} a'_i \cdot \frac{(e^{s_1})^{c_{i, 1}} \cdot {\dots} \cdot
        (e^{s_n})^{c_{i, n}} \cdot e^{b'_L} \pm (e^{s_1})^{-c_{i, 1}} \cdot {\dots} \cdot
        (e^{s_n})^{-c_{i, n}} \cdot e^{-b'_L}}{2}.
    \end{equation*}
    Recalling that $\sinh(x) = \frac{e^x - e^{-x}}{2}$
    and $\cosh(x) = \frac{e^x + e^{-x}}{2}$ we obtain that
    $T_-(z_1, \dots, z_n)$ and $T_+(z_1, \dots, z_n)$
    are, respectively:
    \begin{equation*}
        \sum_{i=1}^{m'} a'_i \sinh(s_1c_{i, 1} + \dots + s_nc_{i, n} + b'_L)
        \quad \text{ and } \quad
        \sum_{i=1}^{m'} a'_i \cosh(s_1c_{i, 1} + \dots + s_nc_{i, n} + b'_L).
    \end{equation*}
    Noting that $s_1, \dots, s_n \in \{ A_L'\lambda^{(L-1)}(v)  \mid v \in V \}$,
    each expression $\sinh(s_1c_{i, 1} + \dots + s_nc_{i, n} + b'_L)$
    is then computable by a virtual node using $\sinh$ as activation.
    Indeed, taking pointed graphs $(G_1, v_1), \dots, (G_n, v_n)$
    such that $A'_L\lambda^{(L-1)}(v_j) = s_j$ we have that 
    a virtual node $u_i$ connected to $c_{i, j}$ copies of $(G_j, v_j)$
    for all $1 \leq j \leq n$
    has $\lambda^{(L)}(u_i) = \sinh(s_1c_{i, 1} + \dots + s_nc_{i, n} + b'_L)$.
    Similar observations follow for $\cosh$.
    The outermost sum can then be treated as the global readout of $\gnn$ over virtual nodes; provided that terms with negative coefficients have been moved to the right hand side of
    $T{\pm}(z_1, \dots, z_n) = 0$. Having the above one can proceed with the same graph construction as in Theorem~\ref{prop:converse}.

    In this setting one again needs to be careful so that $T_\pm(x_1, \dots, x_n)$
    does not turn out to be the zero function.
    Notice, first, that $R(x_1, \dots, x_n)$ is non-zero.
    For this let us again look at
    $P(x_1, \dots, x_n) = \sum_{i=1}^{m} a_i x_1^{d_{i, 1}} \cdot {\dots} \cdot x_n^{d_{i, n}} + b$, where $a_i \in \mathbb{Z} \setminus \{0\}$, $b \in \mathbb{Z}$ and
    $d_{i, 1}, \dots, d_{i, n} \in \mathbb{N}$ not all zero.
    Let $\beta = \max\{ d_{i, j} \mid 1 \leq i \leq m, 1 \leq j \leq n \} + 1$
    and $B_{i} = \beta^1 d_{i, 1} + \dots + \beta^n d_{i, n}$.
    Then $P(x^{\beta^1}, \dots, x^{\beta^n}) = \sum_{i=1}^m a_i x^{B_i} + b$
    is a polynomial, where $B_i$ captures via a $\beta$-ary encoding
    the sequence $(d_{i, 1}, \dots, d_{i, n})$.
    Since all degree sequences were taken to be distinct, we have that all $B_i$
    are as well. Thus the polynomial is non-zero. Without loss of generality,
    let $B_1$ and $B_m$ be, respectively, the minimum and maximum values
    amongst $B_1, \dots, B_m$. Now, inspecting $R(x^{\beta^1}, \dots, x^{\beta^n})$
    we have:
    \begin{equation*}
        b^2 + 
        b\sum_{i=1}^m a_i x^{B_i} + b\sum_{i=1}^m a_i x^{-B_i} +
        \sum_{i=1}^m \sum_{j=1}^m a_ia_j x^{B_i - B_j}.
    \end{equation*}
    Let us first suppose that $b = 0$.
    Then, multiplying by $x^{B_m - B_1}$ and taking the limit we obtain:
    \begin{equation*}
        \lim_{x \to 0} \sum_{i=1}^m \sum_{j=1}^m a_ia_j x^{B_i - B_j + B_m - B_1} = a_1 a_m.
    \end{equation*}
    Otherwise, when $b \neq 0$, we have the following by multiplying by $x^{B_m}$
    and taking the limit:
    \begin{equation*}
        \lim_{x\to 0} \Big( b^2 x^{B_m} + 
        b\sum_{i=1}^m a_i x^{B_i+B_m} + b\sum_{i=1}^m a_i x^{-B_i+B_m} +
        \sum_{i=1}^m \sum_{j=1}^m a_ia_j x^{B_i - B_j + B_m} \Big) = ba_m. 
    \end{equation*}
    Having both cases we conclude that the expression $R(x_1, \dots, x_n)$ is non-zero.
    
    Almost immediately, because of the above, $S(x_1, \dots, x_n) = N R(x_1, \dots, x_n)$
    is a non-zero polynomial. Indeed, begin by writing
    $R(x_1, \dots, x_n)$ as
    $\sum_{i=1}^{m'} a_i' x_1^{d_{i, 1}'} \cdot {\dots} \cdot x_n^{d_{i, n}'} + b'$
    where $a_i', b' \in \mathbb{Z}$, $d_{i, 1}, \dots, d_{i, n} \in \mathbb{Z}$ not all
    zero and with sequences $(d_{i, 1}, \dots, d_{i, n}) \neq (d_{j, 1}, \dots, d_{j, n})$
    for $i \neq j$. But then, multiplying by a common monomial
    $N = x_1^{d_{0, 1}} \cdot {\dots} \cdot x_n^{d_{0, n}}$ cannot result in
    terms cancelling out. Indeed, 
    $(d_{i, 1} + d_{0, 1}, \dots, d_{i, n} + d_{0, n}) = (d_{j, 1} + d_{0, 1}, \dots, d_{j, n} + d_{0, n})$ implies $(d_{i, 1}, \dots, d_{i, n}) = (d_{j, 1}, \dots, d_{j, n})$, a contradiction.
    Thus the polynomial $S(x_1, \dots, x_n)$ has at least a single term with a non-zero coefficient making the polynomial itself non-zero.

    Lastly, we need to show that $e^{b_L'} S(x_1, \dots, x_n) \pm e^{-b_L'} S(x_1^{-1}, \dots, x_n^{-1})$ is non-zero. Clearly, if the above is the case, then the relevant expressions
    $T_\pm(x_1, \dots, x_n)$ are non-zero as well.
    Inspecting the monomials (with non-zero coefficients) of $S(x_1, \dots, x_n)$ and $S(x_1^{-1}, \dots, x_n^{-1})$ we see that they are all distinct (and that
    there are no constant terms in either of the expressions).
    Since $e^{b_L'} \neq 0$ we have that
    $e^{b_L'} S(x_1, \dots, x_n) \pm e^{-b_L'} S(x_1^{-1}, \dots, x_n^{-1})$
    does not cause cancellations leaving the expression non-zero. \qedhere

\end{proof}

\section{Experiments}
%
\begin{figure*}[b]
\centering
\begin{tikzpicture}
\begin{groupplot}[
  group style={
    group size=3 by 3,
    horizontal sep=1.25em,
    vertical sep=0.9em,
  },
  width=0.235\textwidth,
  height=0.141\textwidth,
  scale only axis,
  xmin=0.5, xmax=10.5,
  ymin=0.5, ymax=6.5,
  y dir=reverse,
  xtick={1,...,10},
  ytick={1,2,3,4,5,6},
  yticklabels={sigmoid,tanh,sinh,cosh,exp,ReLU},
  tick label style={font=\scriptsize},
  x tick label style={font=\tiny,rotate=90,anchor=east},
  label style={font=\small},
  title style={font=\small},
  point meta min=0,
  point meta max=1,
  colormap={inferno}{
    rgb255(0cm)=(252,255,164);
    rgb255(1cm)=(246,215,70);
    rgb255(2cm)=(252,165,10);
    rgb255(3cm)=(243,120,25);
    rgb255(4cm)=(221,81,58);
    rgb255(5cm)=(188,55,84);
    rgb255(6cm)=(147,38,103);
    rgb255(7cm)=(106,23,110);
    rgb255(8cm)=(66,10,104);
    rgb255(9cm)=(22,11,57);
    rgb255(10cm)=(0,0,4)
  },
]

\nextgroupplot[
  title={Self-weight $C=0.01$},
  ylabel={\texttt{float32}},
  xticklabels={},
]
\addplot[
  matrix plot*,
  mesh/cols=10,
  point meta=explicit,
  visualization depends on={\thisrow{z} \as \cellvalue},
  nodes near coords={%
    \pgfmathparse{equal(\cellvalue,1)}%
    \ifnum\pgfmathresult=1\relax
      $\checkmark$%
    \fi
  },
  every node near coord/.append style={
    text=white,
    font=\scriptsize,
    anchor=center,
  },
  draw=white,
  line width=0.15pt,
] table[meta=z] {
x y z
1 1 0.128
2 1 0.392
3 1 0.634
4 1 0.640
5 1 0.686
6 1 0.672
7 1 0.686
8 1 0.704
9 1 0.716
10 1 0.718
1 2 0.362
2 2 1.000
3 2 1.000
4 2 0.998
5 2 1.000
6 2 1.000
7 2 1.000
8 2 1.000
9 2 1.000
10 2 1.000
1 3 0.364
2 3 1.000
3 3 0.998
4 3 1.000
5 3 1.000
6 3 1.000
7 3 0.998
8 3 1.000
9 3 1.000
10 3 1.000
1 4 0.000
2 4 0.020
3 4 0.022
4 4 0.026
5 4 0.028
6 4 0.036
7 4 0.038
8 4 0.038
9 4 0.038
10 4 0.040
1 5 0.266
2 5 0.768
3 5 0.784
4 5 0.782
5 5 0.784
6 5 0.788
7 5 0.790
8 5 0.792
9 5 0.792
10 5 0.792
1 6 0.020
2 6 0.254
3 6 0.404
4 6 0.502
5 6 0.708
6 6 0.552
7 6 0.680
8 6 0.624
9 6 0.778
10 6 0.720
};

\nextgroupplot[
  title={Self-weight $C=0.1$},
  yticklabels={},
  xticklabels={},
]
\addplot[
  matrix plot*,
  mesh/cols=10,
  point meta=explicit,
  visualization depends on={\thisrow{z} \as \cellvalue},
  nodes near coords={%
    \pgfmathparse{equal(\cellvalue,1)}%
    \ifnum\pgfmathresult=1\relax
      $\checkmark$%
    \fi
  },
  every node near coord/.append style={
    text=white,
    font=\scriptsize,
    anchor=center,
  },
  draw=white,
  line width=0.15pt,
] table[meta=z] {
x y z
1 1 0.328
2 1 0.970
3 1 0.974
4 1 0.982
5 1 0.984
6 1 0.978
7 1 0.982
8 1 0.984
9 1 0.986
10 1 0.976
1 2 0.362
2 2 1.000
3 2 0.998
4 2 1.000
5 2 1.000
6 2 1.000
7 2 0.998
8 2 0.998
9 2 1.000
10 2 1.000
1 3 0.378
2 3 0.998
3 3 1.000
4 3 1.000
5 3 0.996
6 3 0.990
7 3 0.974
8 3 0.926
9 3 0.848
10 3 0.782
1 4 0.272
2 4 0.746
3 4 0.744
4 4 0.734
5 4 0.734
6 4 0.726
7 4 0.634
8 4 0.410
9 4 0.234
10 4 0.094
1 5 0.354
2 5 0.980
3 5 0.764
4 5 0.734
5 5 0.646
6 5 0.370
7 5 0.116
8 5 0.042
9 5 0.026
10 5 0.018
1 6 0.024
2 6 0.238
3 6 0.414
4 6 0.514
5 6 0.684
6 6 0.594
7 6 0.634
8 6 0.648
9 6 0.782
10 6 0.668
};

\nextgroupplot[
  title={Self-weight $C=1$},
  yticklabels={},
  xticklabels={},
]
\addplot[
  matrix plot*,
  mesh/cols=10,
  point meta=explicit,
  visualization depends on={\thisrow{z} \as \cellvalue},
  nodes near coords={%
    \pgfmathparse{equal(\cellvalue,1)}%
    \ifnum\pgfmathresult=1\relax
      $\checkmark$%
    \fi
  },
  every node near coord/.append style={
    text=white,
    font=\scriptsize,
    anchor=center,
  },
  draw=white,
  line width=0.15pt,
] table[meta=z] {
x y z
1 1 0.338
2 1 0.784
3 1 0.172
4 1 0.030
5 1 0.020
6 1 0.012
7 1 0.006
8 1 0.004
9 1 0.000
10 1 0.000
1 2 0.338
2 2 0.234
3 2 0.028
4 2 0.018
5 2 0.018
6 2 0.016
7 2 0.014
8 2 0.010
9 2 0.008
10 2 0.004
1 3 0.294
2 3 0.506
3 3 0.044
4 3 0.020
5 3 0.018
6 3 0.018
7 3 0.018
8 3 0.018
9 3 0.018
10 3 0.018
1 4 0.000
2 4 0.018
3 4 0.018
4 4 0.018
5 4 0.018
6 4 0.018
7 4 0.018
8 4 0.018
9 4 0.018
10 4 0.018
1 5 0.002
2 5 0.018
3 5 0.018
4 5 0.018
5 5 0.018
6 5 0.018
7 5 0.018
8 5 0.018
9 5 0.018
10 5 0.016
1 6 0.006
2 6 0.118
3 6 0.242
4 6 0.356
5 6 0.610
6 6 0.404
7 6 0.524
8 6 0.514
9 6 0.712
10 6 0.596
};

\nextgroupplot[
  ylabel={\texttt{float64}},
  xticklabels={},
]
\addplot[
  matrix plot*,
  mesh/cols=10,
  point meta=explicit,
  visualization depends on={\thisrow{z} \as \cellvalue},
  nodes near coords={%
    \pgfmathparse{equal(\cellvalue,1)}%
    \ifnum\pgfmathresult=1\relax
      $\checkmark$%
    \fi
  },
  every node near coord/.append style={
    text=white,
    font=\scriptsize,
    anchor=center,
  },
  draw=white,
  line width=0.15pt,
] table[meta=z] {
x y z
1 1 0.342
2 1 1.000
3 1 1.000
4 1 1.000
5 1 1.000
6 1 1.000
7 1 1.000
8 1 1.000
9 1 1.000
10 1 1.000
1 2 0.352
2 2 1.000
3 2 1.000
4 2 1.000
5 2 1.000
6 2 1.000
7 2 0.998
8 2 1.000
9 2 1.000
10 2 1.000
1 3 0.358
2 3 0.998
3 3 0.998
4 3 0.998
5 3 1.000
6 3 0.998
7 3 1.000
8 3 1.000
9 3 1.000
10 3 1.000
1 4 0.336
2 4 0.984
3 4 0.984
4 4 0.984
5 4 0.984
6 4 0.984
7 4 0.984
8 4 0.984
9 4 0.984
10 4 0.984
1 5 0.342
2 5 1.000
3 5 1.000
4 5 1.000
5 5 1.000
6 5 1.000
7 5 1.000
8 5 1.000
9 5 0.998
10 5 1.000
1 6 0.022
2 6 0.232
3 6 0.424
4 6 0.472
5 6 0.686
6 6 0.574
7 6 0.646
8 6 0.636
9 6 0.784
10 6 0.706
};

\nextgroupplot[
  yticklabels={},
  xticklabels={},
]
\addplot[
  matrix plot*,
  mesh/cols=10,
  point meta=explicit,
  visualization depends on={\thisrow{z} \as \cellvalue},
  nodes near coords={%
    \pgfmathparse{equal(\cellvalue,1)}%
    \ifnum\pgfmathresult=1\relax
      $\checkmark$%
    \fi
  },
  every node near coord/.append style={
    text=white,
    font=\scriptsize,
    anchor=center,
  },
  draw=white,
  line width=0.15pt,
] table[meta=z] {
x y z
1 1 0.346
2 1 1.000
3 1 1.000
4 1 0.998
5 1 0.998
6 1 0.998
7 1 1.000
8 1 1.000
9 1 1.000
10 1 1.000
1 2 0.358
2 2 1.000
3 2 0.998
4 2 1.000
5 2 1.000
6 2 1.000
7 2 1.000
8 2 1.000
9 2 1.000
10 2 1.000
1 3 0.358
2 3 1.000
3 3 1.000
4 3 1.000
5 3 1.000
6 3 0.992
7 3 0.988
8 3 0.976
9 3 0.942
10 3 0.886
1 4 0.344
2 4 1.000
3 4 0.990
4 4 0.936
5 4 0.790
6 4 0.738
7 4 0.732
8 4 0.724
9 4 0.686
10 4 0.594
1 5 0.348
2 5 0.986
3 5 0.838
4 5 0.738
5 5 0.732
6 5 0.722
7 5 0.650
8 5 0.510
9 5 0.302
10 5 0.146
1 6 0.014
2 6 0.234
3 6 0.386
4 6 0.506
5 6 0.702
6 6 0.586
7 6 0.690
8 6 0.628
9 6 0.776
10 6 0.660
};

\nextgroupplot[
  yticklabels={},
  xticklabels={},
  colorbar,
  colorbar style={
    y dir=normal,
    ytick={0,0.5,1},
    yticklabel style={font=\scriptsize},
    width=0.16cm,
    height=0.455\textwidth,
    at={(1.10,0.5)},
    anchor=west,
  },
]
\addplot[
  matrix plot*,
  mesh/cols=10,
  point meta=explicit,
  visualization depends on={\thisrow{z} \as \cellvalue},
  nodes near coords={%
    \pgfmathparse{equal(\cellvalue,1)}%
    \ifnum\pgfmathresult=1\relax
      $\checkmark$%
    \fi
  },
  every node near coord/.append style={
    text=white,
    font=\scriptsize,
    anchor=center,
  },
  draw=white,
  line width=0.15pt,
] table[meta=z] {
x y z
1 1 0.342
2 1 1.000
3 1 1.000
4 1 0.992
5 1 0.868
6 1 0.546
7 1 0.224
8 1 0.078
9 1 0.040
10 1 0.032
1 2 0.342
2 2 1.000
3 2 0.818
4 2 0.318
5 2 0.064
6 2 0.026
7 2 0.018
8 2 0.018
9 2 0.018
10 2 0.018
1 3 0.286
2 3 0.728
3 3 0.536
4 3 0.132
5 3 0.032
6 3 0.022
7 3 0.018
8 3 0.018
9 3 0.018
10 3 0.018
1 4 0.110
2 4 0.018
3 4 0.018
4 4 0.018
5 4 0.018
6 4 0.018
7 4 0.018
8 4 0.018
9 4 0.018
10 4 0.018
1 5 0.008
2 5 0.018
3 5 0.018
4 5 0.018
5 5 0.018
6 5 0.018
7 5 0.018
8 5 0.018
9 5 0.018
10 5 0.018
1 6 0.006
2 6 0.118
3 6 0.242
4 6 0.356
5 6 0.610
6 6 0.404
7 6 0.524
8 6 0.514
9 6 0.712
10 6 0.596
};

\nextgroupplot[
  ylabel={\texttt{float128}},
  xticklabels={0.01,0.02,0.03,0.04,0.05,0.06,0.07,0.08,0.09,0.10},
  xlabel={Aggregation-weight $A$},
]
\addplot[
  matrix plot*,
  mesh/cols=10,
  point meta=explicit,
  visualization depends on={\thisrow{z} \as \cellvalue},
  nodes near coords={%
    \pgfmathparse{equal(\cellvalue,1)}%
    \ifnum\pgfmathresult=1\relax
      $\checkmark$%
    \fi
  },
  every node near coord/.append style={
    text=white,
    font=\scriptsize,
    anchor=center,
  },
  draw=white,
  line width=0.15pt,
] table[meta=z] {
x y z
1 1 0.342
2 1 1.000
3 1 1.000
4 1 1.000
5 1 1.000
6 1 1.000
7 1 1.000
8 1 1.000
9 1 1.000
10 1 1.000
1 2 0.368
2 2 1.000
3 2 1.000
4 2 1.000
5 2 1.000
6 2 1.000
7 2 1.000
8 2 1.000
9 2 1.000
10 2 1.000
1 3 0.366
2 3 1.000
3 3 1.000
4 3 1.000
5 3 1.000
6 3 1.000
7 3 1.000
8 3 1.000
9 3 1.000
10 3 1.000
1 4 0.342
2 4 1.000
3 4 1.000
4 4 1.000
5 4 1.000
6 4 1.000
7 4 1.000
8 4 1.000
9 4 1.000
10 4 1.000
1 5 0.342
2 5 1.000
3 5 1.000
4 5 1.000
5 5 1.000
6 5 1.000
7 5 1.000
8 5 1.000
9 5 1.000
10 5 1.000
1 6 0.024
2 6 0.246
3 6 0.384
4 6 0.458
5 6 0.686
6 6 0.568
7 6 0.660
8 6 0.626
9 6 0.782
10 6 0.686
};

\nextgroupplot[
  yticklabels={},
  xticklabels={0.1,0.2,0.3,0.4,0.5,0.6,0.7,0.8,0.9,1.0},
  xlabel={Aggregation-weight $A$},
]
\addplot[
  matrix plot*,
  mesh/cols=10,
  point meta=explicit,
  visualization depends on={\thisrow{z} \as \cellvalue},
  nodes near coords={%
    \pgfmathparse{equal(\cellvalue,1)}%
    \ifnum\pgfmathresult=1\relax
      $\checkmark$%
    \fi
  },
  every node near coord/.append style={
    text=white,
    font=\scriptsize,
    anchor=center,
  },
  draw=white,
  line width=0.15pt,
] table[meta=z] {
x y z
1 1 0.344
2 1 1.000
3 1 1.000
4 1 1.000
5 1 1.000
6 1 1.000
7 1 1.000
8 1 1.000
9 1 1.000
10 1 1.000
1 2 0.358
2 2 1.000
3 2 1.000
4 2 1.000
5 2 1.000
6 2 1.000
7 2 1.000
8 2 1.000
9 2 1.000
10 2 1.000
1 3 0.362
2 3 1.000
3 3 1.000
4 3 1.000
5 3 1.000
6 3 0.992
7 3 0.988
8 3 0.986
9 3 0.978
10 3 0.948
1 4 0.342
2 4 1.000
3 4 0.994
4 4 0.974
5 4 0.844
6 4 0.752
7 4 0.734
8 4 0.732
9 4 0.728
10 4 0.726
1 5 0.346
2 5 0.992
3 5 0.934
4 5 0.756
5 5 0.734
6 5 0.730
7 5 0.728
8 5 0.724
9 5 0.708
10 5 0.634
1 6 0.016
2 6 0.246
3 6 0.396
4 6 0.512
5 6 0.676
6 6 0.568
7 6 0.682
8 6 0.632
9 6 0.778
10 6 0.650
};

\nextgroupplot[
  yticklabels={},
  xticklabels={1,2,3,4,5,6,7,8,9,10},
  xlabel={Aggregation-weight $A$},
]
\addplot[
  matrix plot*,
  mesh/cols=10,
  point meta=explicit,
  visualization depends on={\thisrow{z} \as \cellvalue},
  nodes near coords={%
    \pgfmathparse{equal(\cellvalue,1)}%
    \ifnum\pgfmathresult=1\relax
      $\checkmark$%
    \fi
  },
  every node near coord/.append style={
    text=white,
    font=\scriptsize,
    anchor=center,
  },
  draw=white,
  line width=0.15pt,
] table[meta=z] {
x y z
1 1 0.342
2 1 1.000
3 1 1.000
4 1 1.000
5 1 1.000
6 1 1.000
7 1 1.000
8 1 1.000
9 1 0.988
10 1 0.944
1 2 0.344
2 2 1.000
3 2 1.000
4 2 1.000
5 2 0.976
6 2 0.878
7 2 0.644
8 2 0.340
9 2 0.224
10 2 0.064
1 3 0.296
2 3 0.728
3 3 0.726
4 3 0.624
5 3 0.384
6 3 0.130
7 3 0.046
8 3 0.024
9 3 0.022
10 3 0.018
1 4 0.268
2 4 0.032
3 4 0.018
4 4 0.018
5 4 0.018
6 4 0.018
7 4 0.018
8 4 0.018
9 4 0.018
10 4 0.018
1 5 0.162
2 5 0.018
3 5 0.018
4 5 0.018
5 5 0.018
6 5 0.018
7 5 0.018
8 5 0.018
9 5 0.018
10 5 0.018
1 6 0.006
2 6 0.118
3 6 0.242
4 6 0.356
5 6 0.610
6 6 0.404
7 6 0.524
8 6 0.514
9 6 0.712
10 6 0.596
};
\end{groupplot}
\end{tikzpicture}
\caption{Success ratio for GNNs by floating-point precision (panel row),
fixed self-weight $C$ (panel column), activation (heatmap row), and
aggregation-weight $A$ (heatmap column). Checkmarks denote success ratio
$1$; all panels share the same color scale.}
\label{fig:all-precisions-fixed-c-heatmaps}
\end{figure*}

We would first like to add some clarity regarding our experimental setup.
When comparing embeddings produced by GNNs we compare without rounding.
That is to say, two embeddings are equal if they have the same binary value.
Graphs for which a given GNN produced values $\pm \infty$ or NaN
are automatically taken to be unsuccessful for that GNN.
In some systems 128-bit floating point numbers are unavailable.
We thus emulated float128 (IEEE 754 binary128) in software.

We present the extended sample of
Figure~\ref{fig:float64-fixed-c-heatmaps}
in Figure~\ref{fig:all-precisions-fixed-c-heatmaps}.
In particular we note that in the 128-bit setting
success ratio for sigmoid and tanh is exactly 1
when $A, C < 1$, and exactly 1 for sinh, cosh and exp when $A, C < 0.1$
(and as long as $\frac{C}{A} \not\in \mathbb{N}$).
This indicates that GNN errors in the float64 setting are due to imprecisions in float64 arithmetic.

It is interesting to analyse the effects that increased precision
has on
GNNs with different activation functions.
Firstly, we see no meaningful
gains for ReLU activated GNNs. This is to be expected
as the values computed by our ReLU GNNs are seemingly
well represented even in the float32 setting.
It is clear that increased precision correlates with better
performance for sigmoid and tanh activated GNNs.
Indeed, we see that in the float128 setting both sigmoid and tanh start to perform 
better for values above 1.
It is then interesting to contrast this with the seemingly weaker effect precision increase has on sinh, cosh and exp.
This phenomenon is consistent with 64- and 128-bit float
structure. Note that float64 reserves 11 bits for the exponent
and 53 bits of significand precision; float128 specifies 15 bits
for the exponent and 113 bits of significand precision.
Thus, float128 allows more accurate representation of
values close to 1, which is advantageous for
sigmoid and tanh. On the other hand,
sinh, cosh and exp are unbounded from above.
Thus, these activation functions coupled with larger weights exhaust 
the representable range of the float128 exponent.
See Table~\ref{tab:float32-overflow_rate} for how much 
of the failure rate is attributed to overflow.

Figure~\ref{fig:float128-medium-activation-heatmaps} presents
the full runs for sigmoid, tanh and sinh activated GNNs with $A, C \in \{ 0.1, 0.2, \dots, 1 \}$ in the 128-bit float setting. Notice
that in the case of sigmoid and tanh the success rate is 1
unless $\frac{C}{A} \in \mathbb{N}$.

%
\begin{figure*}[t]
\centering
\begin{tikzpicture}
\begin{groupplot}[
  group style={group size=3 by 1, horizontal sep=1.4em},
  width=0.235\textwidth,
  height=0.235\textwidth,
  scale only axis,
  xmin=0.5, xmax=10.5,
  ymin=0.5, ymax=10.5,
  y dir=normal,
  xtick={1,...,10},
  ytick={1,...,10},
  xticklabels={0.1,0.2,0.3,0.4,0.5,0.6,0.7,0.8,0.9,1.0},
  yticklabels={0.1,0.2,0.3,0.4,0.5,0.6,0.7,0.8,0.9,1.0},
  xlabel={Aggregation-weight $A$},
  tick label style={font=\scriptsize},
  x tick label style={font=\tiny,rotate=90,anchor=east},
  label style={font=\small},
  title style={font=\small},
  point meta min=0,
  point meta max=1,
  colormap={inferno}{
    rgb255(0cm)=(252,255,164);
    rgb255(1cm)=(246,215,70);
    rgb255(2cm)=(252,165,10);
    rgb255(3cm)=(243,120,25);
    rgb255(4cm)=(221,81,58);
    rgb255(5cm)=(188,55,84);
    rgb255(6cm)=(147,38,103);
    rgb255(7cm)=(106,23,110);
    rgb255(8cm)=(66,10,104);
    rgb255(9cm)=(22,11,57);
    rgb255(10cm)=(0,0,4)
  },
]

\nextgroupplot[
  title={sigmoid},
  ylabel={Self-weight $C$},
]
\addplot[
  matrix plot*,
  mesh/cols=10,
  point meta=explicit,
  visualization depends on={\thisrow{z} \as \cellvalue},
  nodes near coords={%
    \pgfmathparse{equal(\cellvalue,1)}%
    \ifnum\pgfmathresult=1\relax
      $\checkmark$%
    \fi
  },
  every node near coord/.append style={
    text=white,
    font=\scriptsize,
    anchor=center,
  },
  draw=white,
  line width=0.15pt,
] table[meta=z] {
x y z
1 1 0.344
2 1 1.000
3 1 1.000
4 1 1.000
5 1 1.000
6 1 1.000
7 1 1.000
8 1 1.000
9 1 1.000
10 1 1.000
1 2 0.992
2 2 0.342
3 2 1.000
4 2 1.000
5 2 1.000
6 2 1.000
7 2 1.000
8 2 1.000
9 2 1.000
10 2 1.000
1 3 1.000
2 3 1.000
3 3 0.342
4 3 1.000
5 3 1.000
6 3 1.000
7 3 1.000
8 3 1.000
9 3 1.000
10 3 1.000
1 4 1.000
2 4 0.992
3 4 1.000
4 4 0.342
5 4 1.000
6 4 1.000
7 4 1.000
8 4 1.000
9 4 1.000
10 4 1.000
1 5 1.000
2 5 1.000
3 5 1.000
4 5 1.000
5 5 0.344
6 5 1.000
7 5 1.000
8 5 1.000
9 5 1.000
10 5 1.000
1 6 1.000
2 6 1.000
3 6 0.992
4 6 1.000
5 6 1.000
6 6 0.342
7 6 1.000
8 6 1.000
9 6 1.000
10 6 1.000
1 7 1.000
2 7 1.000
3 7 1.000
4 7 1.000
5 7 1.000
6 7 1.000
7 7 0.346
8 7 1.000
9 7 1.000
10 7 1.000
1 8 1.000
2 8 1.000
3 8 1.000
4 8 0.992
5 8 1.000
6 8 1.000
7 8 1.000
8 8 0.346
9 8 1.000
10 8 1.000
1 9 1.000
2 9 1.000
3 9 1.000
4 9 1.000
5 9 1.000
6 9 1.000
7 9 1.000
8 9 1.000
9 9 0.346
10 9 1.000
1 10 1.000
2 10 1.000
3 10 1.000
4 10 1.000
5 10 0.992
6 10 1.000
7 10 1.000
8 10 1.000
9 10 1.000
10 10 0.342
};

\nextgroupplot[
  title={tanh},
  yticklabels={},
]
\addplot[
  matrix plot*,
  mesh/cols=10,
  point meta=explicit,
  visualization depends on={\thisrow{z} \as \cellvalue},
  nodes near coords={%
    \pgfmathparse{equal(\cellvalue,1)}%
    \ifnum\pgfmathresult=1\relax
      $\checkmark$%
    \fi
  },
  every node near coord/.append style={
    text=white,
    font=\scriptsize,
    anchor=center,
  },
  draw=white,
  line width=0.15pt,
] table[meta=z] {
x y z
1 1 0.358
2 1 1.000
3 1 1.000
4 1 1.000
5 1 1.000
6 1 1.000
7 1 1.000
8 1 1.000
9 1 1.000
10 1 1.000
1 2 0.992
2 2 0.356
3 2 1.000
4 2 1.000
5 2 1.000
6 2 1.000
7 2 1.000
8 2 1.000
9 2 1.000
10 2 1.000
1 3 1.000
2 3 1.000
3 3 0.376
4 3 1.000
5 3 1.000
6 3 1.000
7 3 1.000
8 3 1.000
9 3 1.000
10 3 1.000
1 4 1.000
2 4 0.992
3 4 1.000
4 4 0.362
5 4 1.000
6 4 1.000
7 4 1.000
8 4 1.000
9 4 1.000
10 4 1.000
1 5 1.000
2 5 1.000
3 5 1.000
4 5 1.000
5 5 0.364
6 5 1.000
7 5 1.000
8 5 1.000
9 5 1.000
10 5 1.000
1 6 1.000
2 6 1.000
3 6 0.992
4 6 1.000
5 6 1.000
6 6 0.364
7 6 1.000
8 6 1.000
9 6 1.000
10 6 1.000
1 7 1.000
2 7 1.000
3 7 1.000
4 7 1.000
5 7 1.000
6 7 1.000
7 7 0.356
8 7 1.000
9 7 1.000
10 7 1.000
1 8 1.000
2 8 1.000
3 8 1.000
4 8 0.992
5 8 1.000
6 8 1.000
7 8 1.000
8 8 0.348
9 8 1.000
10 8 1.000
1 9 1.000
2 9 1.000
3 9 1.000
4 9 1.000
5 9 1.000
6 9 1.000
7 9 1.000
8 9 1.000
9 9 0.348
10 9 1.000
1 10 1.000
2 10 1.000
3 10 1.000
4 10 1.000
5 10 0.992
6 10 1.000
7 10 1.000
8 10 1.000
9 10 1.000
10 10 0.344
};

\nextgroupplot[
  title={sinh},
  yticklabels={},
  colorbar,
  colorbar style={
    y dir=normal,
    ytick={0,0.5,1},
    yticklabel style={font=\scriptsize},
    width=0.16cm,
  },
]
\addplot[
  matrix plot*,
  mesh/cols=10,
  point meta=explicit,
  visualization depends on={\thisrow{z} \as \cellvalue},
  nodes near coords={%
    \pgfmathparse{equal(\cellvalue,1)}%
    \ifnum\pgfmathresult=1\relax
      $\checkmark$%
    \fi
  },
  every node near coord/.append style={
    text=white,
    font=\scriptsize,
    anchor=center,
  },
  draw=white,
  line width=0.15pt,
] table[meta=z] {
x y z
1 1 0.362
2 1 1.000
3 1 1.000
4 1 1.000
5 1 1.000
6 1 0.992
7 1 0.988
8 1 0.986
9 1 0.978
10 1 0.948
1 2 0.992
2 2 0.358
3 2 1.000
4 2 1.000
5 2 1.000
6 2 0.992
7 2 0.988
8 2 0.986
9 2 0.976
10 2 0.944
1 3 1.000
2 3 1.000
3 3 0.358
4 3 1.000
5 3 1.000
6 3 0.992
7 3 0.988
8 3 0.984
9 3 0.976
10 3 0.936
1 4 1.000
2 4 0.992
3 4 1.000
4 4 0.366
5 4 0.998
6 4 0.992
7 4 0.986
8 4 0.982
9 4 0.970
10 4 0.928
1 5 1.000
2 5 1.000
3 5 1.000
4 5 1.000
5 5 0.354
6 5 0.990
7 5 0.986
8 5 0.980
9 5 0.960
10 5 0.922
1 6 1.000
2 6 1.000
3 6 0.992
4 6 1.000
5 6 0.992
6 6 0.352
7 6 0.984
8 6 0.976
9 6 0.950
10 6 0.898
1 7 1.000
2 7 1.000
3 7 1.000
4 7 0.998
5 7 0.992
6 7 0.986
7 7 0.354
8 7 0.974
9 7 0.944
10 7 0.878
1 8 1.000
2 8 1.000
3 8 1.000
4 8 0.988
5 8 0.990
6 8 0.986
7 8 0.982
8 8 0.346
9 8 0.930
10 8 0.868
1 9 1.000
2 9 1.000
3 9 1.000
4 9 0.994
5 9 0.986
6 9 0.984
7 9 0.976
8 9 0.956
9 9 0.340
10 9 0.848
1 10 1.000
2 10 1.000
3 10 0.998
4 10 0.992
5 10 0.978
6 10 0.982
7 10 0.974
8 10 0.948
9 10 0.884
10 10 0.296
};
\end{groupplot}
\end{tikzpicture}
\caption{Success ratio in float128 arithmetic for GNNs with medium
self-weight $C$ (heatmap row), medium aggregation-weight $A$ (heatmap column),
and the sigmoid, tanh, and sinh activations (panels). Checkmarks denote
success ratio $1$; all panels share the same color scale.}
\label{fig:float128-medium-activation-heatmaps}
\end{figure*}

\begin{table}[h]
\centering
\small
\setlength{\tabcolsep}{4pt}
\begin{tabular}{lccccc}
\toprule
 & sigmoid & tanh & sinh & cosh & exp \\
\midrule
Success ratio           & 0.982 & 0.998 & 0.974 & 0.634 & 0.116 \\
Non-finite failure rate & 0.000 & 0.000 & 0.022 & 0.366 & 0.884 \\
\bottomrule
\end{tabular}
\caption{Float32 results at self-weight $C=0.1$ and aggregation-weight
$A=0.7$. A non-finite failure means that at least one final node embedding is
NaN, $+\infty$, or $-\infty$; these outcomes are combined into one category.}
\label{tab:float32-overflow_rate}
\end{table}

\end{document}